\newcommand{\expectation}{\mathbb{E}}
\newcommand{\norm}[1]{\left\lVert#1\right\rVert}
\newtheorem{theorem}{\protect\theoremname}
\newtheorem{proposition}{\protect\propositionname}
\newtheorem{lemma}{\protect\lemmaname}
\newtheorem{condition_statement}{\protect\conditionname}
\providecommand{\definitionname}{Definition}
\providecommand{\lemmaname}{Lemma}
\providecommand{\propositionname}{Proposition}
\providecommand{\theoremname}{Theorem}
\providecommand{\conditionname}{Condition}
\newcommand{\treg}{T-regime}
\newcommand{\mreg}{M-regime}
\global\long\def\argmin{\operatornamewithlimits{argmin}}
\global\long\def\argmax{\operatornamewithlimits{argmax}}
\DeclareMathOperator{\sign}{sign}
\DeclareMathOperator{\sech}{sech}
\icmltitlerunning{Learning of Ising Model Dynamics}
\begin{document}

% If your paper is accepted and the title of your paper is very long,
% the style will print as headings an error message. Use the following
% command to supply a shorter title of your paper so that it can be
% used as headings.
%
%\runningtitle{I use this title instead because the last one was very long}

% If your paper is accepted and the number of authors is large, the
% style will print as headings an error message. Use the following
% command to supply a shorter version of the authors names so that
% they can be used as headings (for example, use only the surnames)
%
%\runningauthor{Surname 1, Surname 2, Surname 3, ...., Surname n}

\twocolumn[

\icmltitle{Exponential Reduction in Sample Complexity with \\ Learning of Ising Model Dynamics}% Force line breaks with \\
%\thanks{A footnote to the article title}%
% Exponential gain in sample complexity through adaptive learning of Ising models from time-correlated samples
% 

\icmlsetsymbol{equal}{*}

\begin{icmlauthorlist}
\icmlauthor{Arkopal Dutt}{mit}
\icmlauthor{Andrey Y. Lokhov}{lanl}
\icmlauthor{Marc Vuffray}{lanl}
\icmlauthor{Sidhant Misra}{lanl}
\end{icmlauthorlist}

\icmlaffiliation{mit}{Research Laboratory of Electronics, Massachusetts Institute of Technology, Cambridge, MA, USA}
\icmlaffiliation{lanl}{Theoretical Division, Los Alamos National Laboratory, Los Alamos, NM, USA}
\icmlcorrespondingauthor{Andrey Y. Lokhov}{lokhov@lanl.gov}
\icmlkeywords{Machine Learning, ICML}

\vskip 0.3in

]

% this must go after the closing bracket ] following \twocolumn[ ...

% This command actually creates the footnote in the first column
% listing the affiliations and the copyright notice.
% The command takes one argument, which is text to display at the start of the footnote.
% The \icmlEqualContribution command is standard text for equal contribution.
% Remove it (just {}) if you do not need this facility.

\printAffiliationsAndNotice{}  % leave blank if no need to mention equal contribution
%\printAffiliationsAndNotice{\icmlEqualContribution} % otherwise use the standard text.

\begin{abstract}
    The usual setting for learning the structure and parameters of a graphical model assumes the availability of independent samples produced from the corresponding multivariate probability distribution. However, for many models the mixing time of the respective Markov chain can be very large and i.i.d. samples may not be obtained. We study the problem of reconstructing binary graphical models from correlated samples produced by a dynamical process, which is natural in many applications. We analyze the sample complexity of two estimators that are based on the interaction screening objective and the conditional likelihood loss. We observe that for samples coming from a dynamical process far from equilibrium, the sample complexity  reduces exponentially compared to a dynamical process that mixes quickly.
\end{abstract}

%\date{\today}% It is always \today, today,
             %  but any date may be explicitly specified

%\keywords{Suggested keywords}%Use showkeys class option if keyword
                              %display desired
\vspace{-0.2in}
%%%%%%%%%%%%%%%%%%%%%%%%%%%%
\section{Introduction}
A graphical model (GM) is a convenient description of a probabilistic distribution which highlights the structure of the conditional dependencies existing between a set of random variables. We focus our attention on GMs that can be expressed as elements of an exponential family and are naturally associated with a graph that captures the underlying structure of the conditional dependencies. These GMs, sometimes referred as positive Markov random fields or Boltzmann distributions, are ubiquitous tools used to describe behaviors of random systems across a broad range of sciences such as physics \cite{chaves2015information}, biology \cite{Jansen2003}, medicine \cite{constantinou2016complex}, data mining \cite{Buczak2016datamining} and computer vision \cite{WANG20131610}. The expression of GMs can sometimes be deduced from first principles, but often it has to be learned from observed data accessible through measurements and experiments. As these samples are time-consuming or costly to produce, it is not surprising that \emph{efficient} GM learning methods play an important role in various fields such as in the study of gene expression \cite{MarbachCostelloKuffnerEtAl2012}, protein interactions \cite{MorcosPagnaniLuntEtAl2011}, neuroscience \cite{SchneidmanBerrySegevEtAl2006}, image processing \cite{RothBlack2005}, sociology \cite{EaglePentlandLazer2009} and even grid science \cite{HeZhang2011}.

The practical problem of learning a GM from observed data has a long-standing and rich history that can be traced back to the seminal work of Chow-Liu \cite{chowliu68}. However, it wasn't until recently and after further developments that a body of work showed one can efficiently reconstruct GMs from independent and identically distributed (i.i.d.) samples \cite{Bresler2015, vuffray2016interaction, Ankur2017nips, Klivans2017, lokhov2018optimal, vuffray2019efficient}. In these papers, two methods stand out for being essentially optimal in the number of samples that they require \cite{lokhov2018optimal}. These methods are named Regularized Interaction Screening Estimator (RISE) and Regularized Pseudo-likelihood Estimator (RPLE) and both rely on the minimization of a convex loss function. The sample complexity of these estimators scales exponentially with a quantity named $\beta$ that represents the maximum magnitude of the parameters in the GM. This exponential dependence in $\beta$ is a fundamental limit of GM learning from i.i.d. samples \cite{santhanam2012information} with heavy practical consequences as it restricts the possibility of learning GMs when data is scarce. However, the assumption of having access to independent samples is a modeling hypothesis that is convenient in many ways, but for which we can challenge the limits of its validity as it is known that sampling from arbitrary GMs is an NP-hard task. In most of the experimental settings mentioned earlier, the samples are actually obtained from a dynamic process whose stationary distribution is captured by a GM. Even the state of the art sampling techniques for GMs are implemented through Markov Chain Monte-Carlo (MCMC) dynamics \cite{levin2017markov,Hassani2015sampling}. It is therefore natural to wonder if learning a graphical model from a dynamical process can be beneficial from a sample complexity standpoint.

Surprisingly, GM learning from dynamics has been rigorously studied very little with the notable exception of the paper of Bresler, Gamarnik and Shah \cite{bresler2017learning}. In an attempt to demonstrate that learning GMs from non-i.i.d. samples can be tractable, a question that was still widely debated at the time of the paper's initial release, Bresler, Gamarnik and Shah proved that one can efficiently learn GMs using samples coming from Glauber dynamics \cite{glauber1963time}, an iconic MCMC sampling dynamics. This result was regrettably overshadowed by the progress made in the following years in GM learning and their algorithm suffers from an impractical scaling much worse compared to what one could obtain with RISE or RPLE in the i.i.d. sample setting. The question of whether correlated samples from dynamics can improve the sample complexity of GM learning remains unanswered.

A hint on the fact that such a reduction in sample complexity is possible is provided by a number of empirical studies in the statistical physics literature that considered reconstruction using mean-field methods \cite{roudi2011mean, mezard2011exact, zeng2011network, zhang2012inference, bachschmid2015learning} and using pseudo-likelihood \cite{besag1975statistical} based estimators \cite{zeng2013maximum, decelle2015inference, decelle2016data} in various settings, although most of these studies focus on a simpler setting of asymmetric couplings known as kinetic Ising model that does not contain the GM as its equilibrium state. We do not consider mean-field based methods here because these methods are not exact and typically only work for high-temperature (weakly coupled) models, see \cite{lokhov2018optimal} for an extensive discussion on the value of exact algorithms. Existing studies of pseudo-likelihood based estimators have been mostly conducted in a setting of reconstruction of single instances, and with a focus on parameter estimation (instead of structure learning for which the sample complexity bounds are known); hence, it is hard to extract the sample complexity scalings with model parameters such as $\beta$ from these works. Still, single-instance reconstruction results indicate that in practice the number of samples required for an accurate model learning in the dynamic case seems to be significantly smaller compared to the i.i.d. learning setting. 

In this work, we quantify through a carefully designed set of experiments and a rigorous mathematical analysis the reduction in sample complexity that one can achieve using samples from Glauber dynamics. We focus our attention on Ising models, the celebrated class of pairwise and binary GMs for which information-theoretic lower-bounds on sample complexity exist both for i.i.d. samples \cite{santhanam2012information} and samples coming from Glauber dynamics \cite{bresler2017learning}. We propose an adaptation of the efficient learning algorithms RISE and RPLE for learning GMs with dynamical samples; Interaction Screening method has never been previously considered for learning in the dynamic setting. We extract the $\beta$ scaling of the sample complexity for different instances of Ising models in two different dynamical regimes. The first, denoted as \treg{}, consists in learning an Ising model from a single Glauber dynamic trajectory that mixes quickly toward its stationary distribution. The second, referred to as \mreg{}, consists in learning an Ising model from a series of one step evolutions of the Glauber dynamics from an initial distribution thus mimicking the trajectory of a system far from its mixed state. A similar setting of learning from a number of short trajectories starting with uniformly sampled configurations instead of one long trajectory has been considered in \cite{decelle2016data}. We find that the $\beta$ scaling in the \treg{} is similar to the one obtained from learning GMs with i.i.d. samples, an expected result since the Glauber dynamics produces i.i.d. samples once it has mixed. However, our main finding is that in the \mreg{} the $\beta$ scaling depends crucially on the initial distribution, and for dynamics far from equilibrium we achieve a $\beta$ exponent scaling up to ten times better than in the i.i.d. case. This exponential improvement in the sample complexity concretely translates into a reduction in sample requirements by a factor $10^4 - 10^5$ in typical regimes where variables of the GMs display non-trivial correlations. Our results also have a deep theoretical implication as we  show that samples acquired far from the equilibrium carry more information about the structure of the problem. Based on this intuition, we design an active learning algorithm that modifies the trajectory of the dynamics on the fly to optimize the sample complexity of the learning task.

The paper is organized as follows. In Sec.~\ref{sec:problem_statement}, we define the problem of learning an Ising model from Glauber dynamics and describe two different regimes under which learning can take place. In Sec.~\ref{sec:learning_algo}, we present our learning algorithms and a theoretical analysis of their scaling properties. Additionally, we assess their performance experimentally on a variety of Ising models of different topologies and interaction strengths. In Sec.~\ref{sec:applications}, we illustrate a real world application of our algorithms and present how active learning can be used to gain further advantage in learning from dynamics. The conclusion can be found in Sec.~\ref{sec:conclusions}.

\section{Problem statement} \label{sec:problem_statement}
% What is the graphical model we are interested in?
\subsection{Ising model}
Consider the Ising model on a graph $G=(V,E)$ with $n$ nodes where $V=[n]$ is the set of nodes and $E\subset V \times V$ is the set of undirected edges. Each node $i \in V$ is associated with a spin which we will denote by $\sigma_i$ and is a binary random variable taking values in $\{-1,+1\}$. The neighborhood of a node $i$ is denoted by $\partial i = \{j \in V \mid (i,j) \in E\}$.  The probability measure of a particular configuration of spins $\underline{\sigma} \in \{-1,+1\}^n$ is given by the Gibbs distribution
\vspace{-0.1in}
\begin{equation}
    p(\underline{\sigma}) = \frac{1}{Z} \exp \left( \sum_{(i,j) \in E} J^\star_{ij} \sigma_i \sigma_j + \sum_{i \in V} H^\star_i \sigma_i \right),
    \label{eq:gibbs_distribution}
\end{equation}
where $\underline{J}^\star = \{J^\star_{ij}\}_{(i,j) \in E}$ is the vector of non-zero interactions associated with each edge, and $\underline{H}^\star = \{H^\star_i\}_{i \in V}$ is the vector of magnetic fields associated with each node. The normalization factor $Z = \sum_{\underline{\sigma}} \exp \left( \sum_{(i,j) \in E} J^\star_{ij} \sigma_i \sigma_j + \sum_{i \in V} H^\star_i \sigma_i \right)$ is referred to as the partition function and is in general NP-hard to compute \cite{sly2012computational}.

% What is Glauber dynamics?
\subsection{Glauber dynamics and observations}
Glauber dynamics is a reversible Markov chain that was originally introduced in \cite{glauber1963time} for Ising models and can be generalized for any Markov random field. The Glauber dynamics is specified by the update rule that determines its transition probabilities. The spin configuration at any time $t$ is denoted by $\underline{\sigma}^{t}$ with the initial configuration being $\underline{\sigma}^{0}$. At each time step $t$, a node is chosen uniformly at random. The corresponding random variable is given by $I^{t+1}$. Conditioned on $I^{t+1} = i$, the spin $\sigma_i$ is updated according to the following conditional distribution:
\begin{equation}
    p(\sigma_i^{t+1}|\underline{\sigma}^t) = \frac{\exp \left[\sigma_i^{t+1} (\sum_{j \in \partial i} J^\star_{ij} \sigma_j^t + H^\star_i) \right]}{2 \cosh \left[ \sum_{j \in \partial i} J^\star_{ij} \sigma_j^t + H^\star_i \right]}.
    \label{eq:glauber_dynamics_conditional_prob_update}
\end{equation}
The initial configuration $\underline{\sigma}^{0}$ is assumed to be drawn from some distribution $p_0(\underline{\sigma}^{0})$. Executing $m$ steps of the Glauber dynamics yields the samples $\underline{\sigma}^{1},\underline{\sigma}^{2},...,\underline{\sigma}^{m}$ and the corresponding sequence of node identities is then $I^{1}, I^{2}, ..., I^{m}$. It can be used to draw i.i.d. samples from the Gibbs distribution in \eqref{eq:gibbs_distribution} when run long enough to allow for mixing. However, for a large class of models this mixing time is exponentially high \cite{martinelli1994approach}, limiting its computational tractability. At the same time, many out-of-equilibrium natural systems such as biological neural networks naturally generate temporally correlated spike train data \cite{berry1997structure, pillow2008spatio} that is well described and is modeled by the Glauber dynamics \cite{marre2009prediction,tyrcha2013effect}. This raises the problem of learning the graphical model associated with the sequence of time-correlated samples produced by Glauber dynamics, with the goals of inferring the connectivity of the system, predicting the final state of the dynamics, or for building a reliable model that can be used to simulate and predict the dynamics starting from other configurations. 

% What are the two different regimes that can be seen from this?
\subsection{Glauber dynamics with multi-start and the model selection problem}
Suppose that the Glauber dynamics is run in batches of size $m_r$ for $r = 1,\ldots, R$ with total number of samples $\sum_{r} m_r = m$. For each $r$, the initial configuration is picked according to the probability distribution $p_0$ and a sequence of $m_r$ steps of the Glauber dynamics are executed to obtain $m_r$ samples. In this paper, we consider two extreme cases.  
The \textbf{\treg{}} corresponds to $R=1$ and $m_1 = m$ where starting from an initial configuration, one batch of size $m$ is executed to obtain $m$ samples. The \textbf{\mreg{}} corresponds to $m_r = 1 \  \forall r$ and executing one step of the Glauber dynamics $m$ times, each time starting from a new initial configuration. The two regimes are designed to emulate the behaviour of the Markov chain close to the equilibrium distribution (\treg{}), and far from the equilibrium distribution (\mreg{}).

\paragraph{Notation:} We will denote a sample by a tuple of the input spin configuration to a step of Glauber dynamics, the resulting spin configuration of Glauber dynamics and the updated node identity. The sample produced in the $(t+1)$-th step of \treg{} is given by $(\underline{\sigma}^{t}, \underline{\sigma}^{t+1}, I^{t+1})$ and the $t$-th sample produced in \mreg{} will be given by $\{ {\underline{\sigma}^0}^{(t)},{\underline{\sigma}^1}^{(t)}, {I^1}^{(t)})$. Note the difference in the superscripts. For convenience, we will denote the set $\{1,2,...,k\}=[k]$ for $k \in \mathbb{Z}^+$. For stating sample complexity results, it is convenient to define a minimum non-zero coupling  $\alpha = \min_{(i,j) \in E} |J^\star_{ij}|$, the maximum coupling strength $\beta = \max_{(i,j) \in E} |J^\star_{ij}|$, and a maximum nodal degree $d$ for the Ising model in \eqref{eq:gibbs_distribution}. % Check if these notational changes have been pushed to the appendix e.g., in description of compressed sensing

\paragraph{The dynamic model selection problem:}
Given $m$ samples of  $\{(\underline{\sigma}^{t},\underline{\sigma}^{t+1},I^{t+1})\}_{t = 0, ..., m-1}$ or $\{ {\underline{\sigma}^0}^{(t)},{\underline{\sigma}^1}^{(t)}, {I^1}^{(t)}\}_{t \in [m]}$  observed from the Glauber dynamics from either the T-regime or M-regime, the model selection problem consists of two parts:
\vspace{-0.05in}
\begin{enumerate}
    \item \emph{Parameter estimation:} Compute estimates $\hat J_{ij}$ of $J^\star_{ij}$ such that for all $1\leq i,j \leq n$, we have $\|\hat J_{ij} - J^\star_{ij}\|\leq \tilde \alpha /2$, where $\|.\|$ is the norm of interest and $\tilde \alpha$ is the required precision. 
    \item \emph{Structure reconstruction:} Compute an estimate $\hat E$ of $E$ such that the probability of perfect reconstruction satisfies $p[\hat{E} = E] \geq 1-\delta$ where $\delta$ defines the confidence.
\end{enumerate}
\vspace{-0.1in}
Most existing methods in the literature \cite{vuffray2016interaction, vuffray2019efficient, Klivans2017} use parameter estimation to perform structure reconstruction. It is evident that whenever the parameters can be estimated with precision $\tilde \alpha  = \alpha$, the structure estimated by the thresholding procedure given by
\begin{align}
    \hat E\left(\alpha \right) = \{1 \leq (i,j) \leq n \mid  |\hat{J}_{ij}| \geq \alpha/2\},
\end{align}
results in a perfect reconstruction with high probability. An information theoretic lower bound for the dynamic model selection problem from a single trajectory was derived in \cite{bresler2017learning} and is given by 
\begin{equation}
    m \geq \frac{e^{2\beta d/3}}{32 d \alpha e^{d + 3\beta + 6}} n \log n
    \label{eq:info_theoretic_sample_complexity_glauber}
\end{equation}
In comparison, the information-theoretic sample complexity for learning from i.i.d. samples \cite{santhanam2012information} scales as $\exp(\beta d)$.  It still remains unclear if either of the information theoretic lower bounds are tight. Current evidence or constructive proofs show a scaling of $\exp(4 \beta d)$ for the i.i.d. case \cite{lokhov2018optimal} and  $\exp(20\beta d)$ for the general dynamic case \cite{bresler2017learning}. 

\section{Learning Ising models from dynamics} \label{sec:learning_algo}
\subsection{Learning algorithms}
We now describe how to adapt RISE and RPLE into computationally efficient estimators for learning Ising models from Glauber dynamics. Both algorithms minimize a convex loss function that relies on the properties of the conditional distributions rather than the full probability distribution. These methods reconstruct the neighborhoods of each node independently and are, therefore, fully parallelizable. Moreover, the minimization procedure can be implemented in $\tilde{O}(n^2)$ using entropic gradient descent for both RISE \cite{vuffray2019efficient} and RPLE (see \cite{Klivans2017} for a related stochastic first-order method with multiplicative updates).

Unlike the i.i.d. sample setting, the Glauber dynamics naturally takes the form of a local neighborhood update rule conditioned on the event that the spin $i$ is updated, see Eq.~\eqref{eq:glauber_dynamics_conditional_prob_update}. Following the construction of the RISE estimator in \cite{vuffray2016interaction}, we define the Dynamic Interaction Screening Objective for each node $i \in V$ as being the inverse of the exponent of the conditional distribution, 
\vspace{-0.05in}
\begin{align}
    &\textbf{D-ISO:} \, \mathcal{S}_m(\underline{J}_{i}, H_i) \nonumber
    \\
    & = \frac{1}{m_i} \sum_{t=1}^{m} \exp \left[-\sigma_i^{t+1} \left(\sum_{j \neq i} J_{ij} \sigma_j^t + H_i\right) \right] \delta_{i,{I^{t+1}}}, \label{eq:gd_iso}
\end{align}
\vspace{-0.05in}
where $\underline{J}_{i}:=\left\{ J_{ij} \mid j\neq i\right\} \in \mathbb{R}^{n-1}$ denotes the vector of pairwise interactions around a node $i$, and $m_{i} = \sum_{t=1}^m \delta_{i,{I^{t+1}}}$. The Kronecker delta $\delta_{i,{I^{t+1}}}$ in Eq.~\eqref{eq:gd_iso} is used to keep samples for which updates happened at $i$.
% The D-ISO objective has been stated considering the \treg{}. The similar \mreg{} version can be found in Appendix~~\ref{sec:analysis_gd_estimators}.

The estimators' objectives have been stated considering the samples come from the \treg{}. Similar expressions are stated for the \mreg{} in Appendix~~\ref{sec:analysis_gd_estimators}.

We call the corresponding estimator which uses D-ISO as the \textit{ Dynamic Regularized Interaction Screening Estimator (D-RISE)} and is defined in the spirit of \cite{vuffray2016interaction} as the following convex program,
\begin{align}
    \textbf{D-RISE:} \, (\hat{\underline{J}}_i, \hat{H}_i) = \argmin_{(\underline{J}_i, H_i)} \mathcal{S}_m (\underline{J}_i, H_i) + \lambda ||\underline{J}_i||_1,\label{eq:gd_rise}
\end{align}
where the $\ell_1$-regularization promotes sparsity and $\lambda$ is a tunable parameter controlling the amount of sparsity enforced.

The pseudo-likelihood based estimator can be understood as the (negative) conditional likelihood \cite{ravikumar2010high} of an update at node $i$ and takes the following form in the case of Glauber dynamics,
\begin{align}
    \notag
    & \textbf{D-PL:} \, \mathcal{L}_m(\underline{J}_{i}, H_i)
    \\
    & = - \frac{1}{m_i} \sum_{t=1}^{m} \ln\left[1 + \sigma_i^{t+1} \tanh\left(\sum_{j \neq i} J_{ij} \sigma_j^t + H_i\right) \right] \delta_{i,{I^{t+1}}}. \label{eq:gd_pl}
\end{align}
Analogous to D-RISE, the \textit{Dynamic Regularized Pseudo-Likelihood Estimator (D-RPLE)} takes the form of an $\ell_1$-regularized convex program,
\begin{align}
    \textbf{D-RPLE:} \, (\hat{\underline{J}}_i, \hat{H}_i) = \argmin_{(\underline{J}_i, H_i)} \mathcal{L}_m (\underline{J}_i, H_i) + \lambda ||\underline{J}_i||_1\label{eq:gd_rple}
\end{align}

% The version of D-RPLE for the \mreg{}
% can be found in Appendix~~\ref{sec:analysis_gd_estimators}.
The performance of the learning algorithms depends on the regularization parameter $\lambda$. Setting it too high encourages the interaction parameters to drop out and setting it too low can make the estimation sensitive to noise. Following theoretical considerations explained further, a good choice for successfully reconstructing the local neighborhood of $i$ with probability $1-\delta'$ (where $\delta'=\delta/n$ and $1-\delta$ is the success of the whole graph reconstruction) is to set $\lambda = c_\lambda \sqrt{\log (n^2/\delta') / m_i}$ where the intensity of the penalty increases in a logarithmic fashion with the size of the system $n$ and decreases with the number of spin updates observed $m_{i}$. The parameter $c_\lambda>0$ is a numerical constant independent of the problem parameters such as $m_{i}$ and $n$.

One of our main contributions is the following theorem which quantifies the sample complexity required for structure learning from Glauber dynamics in the \mreg{}.
\begin{theorem}[\mreg: Structure Learning of Ising Model Dynamics] \label{thm:informal_structure_learning_gdrise_gdrple} Let $\{ {\underline{\sigma}^0}^{(t)},{\underline{\sigma}^1}^{(t)}, {I^1}^{(t)}\}_{t \in [m]}$ be $m$ samples of spin configurations and corresponding node identities drawn through Glauber dynamics (Eq.~\ref{eq:glauber_dynamics_conditional_prob_update}), and define $m_{i} = \sum_{t=1}^m \delta_{i,{I^1}^{(t)}}$ as the number of updates per spin $i$. Consider ~\mreg~on an Ising model with maximum degree $d$, maximum coupling intensity $\beta$, minimum coupling intensity $\alpha$, and for simplicity assume $H^\star_i = 0$ $\forall i$. Then for any $\delta > 0$, the following estimators with penalty parameter of form $\lambda \propto \sqrt{\log(3n^3/\delta)/m_i}$ reconstruct the edge-set perfectly with probability $p(\hat{E}(\lambda,\alpha)=E) \geq 1-\delta$ if the number of samples satisfies
\vspace{-0.05in}
\begin{enumerate}[label=\roman*)]
    \item D-RPLE: $m_i \geq C_d \max(1,\alpha^{-2}) \exp(4 \beta d) \ln (3n^3/\delta)$,
    \item D-RISE: $m_i \geq C_d^\prime \max(1,\alpha^{-2}) \exp(2 \beta d) \ln (3n^3/\delta)$,
\end{enumerate}
where $C_d$ and $C_d^\prime$ depend only polynomially on $d$.
\end{theorem}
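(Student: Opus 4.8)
The plan is to treat both D-RISE and D-RPLE as $\ell_1$-regularized $M$-estimators and to reduce exact structure recovery to $\ell_\infty$ parameter estimation: if I can guarantee $\|\hat{\underline J}_i - \underline J^\star_i\|_\infty < \alpha/2$ simultaneously for every node $i$ with probability at least $1-\delta$, the thresholding rule $\hat E(\lambda,\alpha)$ recovers $E$ exactly. The decisive structural feature of the \mreg{} is that the samples $\{{\underline\sigma^0}^{(t)},{\underline\sigma^1}^{(t)},{I^1}^{(t)}\}_{t\in[m]}$ are i.i.d.: each batch starts afresh from $p_0$, so the per-node losses in \eqref{eq:gd_iso} and \eqref{eq:gd_pl} are empirical averages of independent bounded summands. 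This lets me invoke the standard regularized $M$-estimation machinery, which needs three ingredients: (a) first-order optimality of the population loss at the truth, (b) a high-probability bound $\|\nabla_{\text{emp}}\|_\infty \le \lambda/2$ on the empirical gradient at the truth, and (c) restricted strong convexity (RSC) of the loss on the Lasso cone, after which the error bound takes the form $\|\hat{\underline J}_i - \underline J^\star_i\|_2 \lesssim \sqrt{d}\,\lambda/\kappa$ with $\kappa$ the RSC constant.

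First I would verify first-order optimality. Writing the effective field $h_i = \sum_{j\neq i} J^\star_{ij}\sigma^0_j$ (recall $H^\star_i=0$), the Glauber rule \eqref{eq:glauber_dynamics_conditional_prob_update} gives $\mathbb{E}[\sigma^1_i\mid\underline\sigma^0]=\tanh h_i$ and $\mathbb{E}[e^{-\sigma^1_i h_i}\mid\underline\sigma^0]=\sech h_i$. A direct computation then shows that the population gradients of both D-ISO and D-PL vanish at $(\underline J^\star_i,0)$, since the screening gradient is proportional to $\mathbb{E}[\sigma^1_i e^{-\sigma^1_i h_i}\mid\underline\sigma^0]=0$ and the pseudo-likelihood gradient to $\mathbb{E}[-\sigma^1_i+\tanh h_i\mid\underline\sigma^0]=0$. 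For the gradient concentration the two losses behave differently but both remain compatible with $\lambda\propto\sqrt{\log(3n^3/\delta)/m_i}$ with a constant free of $\beta$ and $d$. The D-PL summands $\sigma^0_j(\tanh h_i-\sigma^1_i)$ are bounded by $2$, so Hoeffding with a union bound over coordinates and nodes suffices. The D-ISO summands $\sigma^0_j\sigma^1_i e^{-\sigma^1_i h_i}$ are only bounded by $e^{\beta d}$, but their second moment is \emph{exactly} $1$ because $\mathbb{E}[e^{-2\sigma^1_i h_i}\mid\underline\sigma^0]=1$; Bernstein's inequality then yields the same $\sqrt{\log/m_i}$ rate provided $m_i\gtrsim e^{2\beta d}\log$, which is precisely the regime the theorem asserts.

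The heart of the proof, and the step I expect to be hardest, is restricted strong convexity. The population Hessians at the truth are $\mathbb{E}_{\underline\sigma^0\sim p_0}[\underline\sigma^0_{\setminus i}(\underline\sigma^0_{\setminus i})^{\!\top}\sech h_i]$ for D-ISO and $\mathbb{E}_{\underline\sigma^0\sim p_0}[\underline\sigma^0_{\setminus i}(\underline\sigma^0_{\setminus i})^{\!\top}\sech^2 h_i]$ for D-PL, where $\underline\sigma^0_{\setminus i}$ augments the spins with a constant entry for the field. Since $|h_i|\le\beta d$ one has $\sech h_i\ge\sech(\beta d)\ge e^{-\beta d}$ and $\sech^2 h_i\ge e^{-2\beta d}$, so after lower-bounding the restricted eigenvalue of $\mathbb{E}_{p_0}[\underline\sigma^0_{\setminus i}(\underline\sigma^0_{\setminus i})^{\!\top}]$ on the Lasso cone by a factor $1/\mathrm{poly}(d)$ — the step where non-degeneracy of $p_0$ enters — I obtain $\kappa\gtrsim e^{-\beta d}/\mathrm{poly}(d)$ for D-ISO and $\kappa\gtrsim e^{-2\beta d}/\mathrm{poly}(d)$ for D-PL. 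This single-versus-double power of $\sech$ is exactly what separates the $e^{2\beta d}$ and $e^{4\beta d}$ exponents. Two difficulties must be handled: transferring RSC from the population to the empirical Hessian uniformly over the cone without reintroducing an $e^{\beta d}$ factor (again the bounded-variance/Bernstein trade-off is the key), and controlling the variation of the Hessian over the $\ell_1$-ball in which $\hat{\underline J}_i$ lives, which for the exponential loss requires restricting to a neighborhood of radius $O(\alpha)$ where $e^{\pm\|\underline J_i-\underline J^\star_i\|_1 d}$ stays bounded.

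Finally I would combine the pieces. With (a)--(c) in hand, the standard $\ell_1$-regularized $M$-estimation bound gives $\|\hat{\underline J}_i-\underline J^\star_i\|_\infty\le\|\hat{\underline J}_i-\underline J^\star_i\|_2\le c\sqrt{d}\,\lambda/\kappa$; imposing that this be below $\alpha/2$ and substituting $\lambda\asymp\sqrt{\log(3n^3/\delta)/m_i}$ together with the two values of $\kappa$ yields the sample sizes $m_i\gtrsim C_d\max(1,\alpha^{-2})e^{4\beta d}\log(3n^3/\delta)$ for D-RPLE and $m_i\gtrsim C'_d\max(1,\alpha^{-2})e^{2\beta d}\log(3n^3/\delta)$ for D-RISE, with $\max(1,\alpha^{-2})$ reflecting the precision needed for thresholding and the $\mathrm{poly}(d)$ factors from sparsity and the restricted eigenvalue absorbed into $C_d,C'_d$. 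A union bound over the $n$ neighborhoods (each reconstructed with confidence $1-\delta/n$) and over the $O(n^2)$ gradient coordinates produces the $\log(3n^3/\delta)$ term and the overall success probability $1-\delta$, completing the argument.
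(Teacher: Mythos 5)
Your proposal matches the paper's proof essentially step for step: the same $\ell_1$-regularized M-estimation framework (population unbiasedness at $\underline J^\star_u$, gradient concentration at level $\lambda/2$, restricted strong convexity on the Lasso cone, then error $\lesssim \sqrt{d}\,\lambda/\kappa$, thresholding at $\alpha/2$, and union bounds giving $\ln(3n^3/\delta)$), the same concentration tools (Hoeffding for the bounded D-PL summands versus Bernstein exploiting the exactly-unit second moment of the D-ISO summands, which is where the $m_i \gtrsim e^{2\beta d}$ requirement enters), and the same identification of the single versus double power of $\sech h_i$ — i.e.\ $\kappa \gtrsim e^{-\beta d}$ versus $\kappa \gtrsim e^{-2\beta d}$ — as the origin of the $e^{2\beta d}$/$e^{4\beta d}$ separation. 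The only cosmetic difference is that the paper obtains RSC through deterministic functional inequalities lower-bounding the Taylor residual (with the uniform initial distribution making the population correlation matrix exactly the identity, so no restricted-eigenvalue argument beyond entrywise Hoeffding is needed), whereas you phrase the same maneuver as a pointwise $\sech$ lower bound on the population Hessian.
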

A more precise statement and proof of Theorem~\ref{thm:informal_structure_learning_gdrise_gdrple} is given in Appendix~\ref{sec:analysis_gd_estimators}. Notice that given the choice of the initial distribution $p(\underline{\sigma}_0)$ to be the uniform distribution, the total number of samples $m$ required to get the number of samples $m_i$ that satisfies Theorem~\ref{thm:informal_structure_learning_gdrise_gdrple} is $m = O(n m_i)$. Notice that unlike the i.i.d. case where the entire sample may be distinct, in the dynamic case only a single spin is updated while the values of other variables are kept fixed; hence, perhaps a more natural quantity for comparison with the i.i.d. case is the number of updates per spin $m_i$ instead of $m$.

We expect the worst-case scalings of learning from dynamics in the \treg~to be similar to the i.i.d. setting as it includes the fully mixed setting as a particular case.
The main contribution of Theorem~\ref{thm:informal_structure_learning_gdrise_gdrple} lies in that it establishes with certainty that the scaling of D-RISE in the \mreg{} is strictly better than in the i.i.d. setting where it was found experimentally to be at least $\exp(4 \beta d)$ in the worst case \cite{lokhov2018optimal}. It is also interesting to compare the above results to the theoretical analysis of the i.i.d. setting for which RPLE and RISE have scalings upper-bounded by $\exp(8 \beta d)$ and $\exp(6 \beta d)$ respectively \cite{lokhov2018optimal}.
However, these theoretical upper-bounds tend to be loose and this motivates us to quantify the scalings achieved in practice in the dynamic case experimentally. 

\subsection{Empirical $\beta$ scaling of the sample complexity} \label{subsec:empirical_beta_studies}
Our main goal is to assess the empirical sample complexity of our learning algorithms in both the \treg{} and \mreg{}. In particular, we want to extract the exponential scaling of the sample complexity for successful structure reconstruction with respect to $\beta$, the magnitude of the largest coupling. The sample complexity of D-RPLE and D-RISE are tested over Ising models of different topologies and interaction strengths. We do not include a comparison to the algorithm of \cite{bresler2017learning} due to its high computational complexity, and to heuristic mean-field methods \cite{roudi2011mean, mezard2011exact, zeng2011network, zhang2012inference, bachschmid2015learning} since most of them are derived for asymteric kinetic Ising model, and are not guaranteed to reconstruct arbitrary strongly interacting models; instead, we focus on studying the performance of two exact methods that can be efficiently implemented through convex optimization.   

We denote the minimal number of samples required for perfect structure reconstruction with probability $1-\delta \geq 0.95$ (for $\delta=0.05$) as $m^\star$. Our experimental protocol to find $m^\star$ (sample complexity of the structure learning problem) is similar to the one from Supplementary material of \cite{lokhov2018optimal}. For each graphical model topology and coupling values, we determined $m^\star$ by finding the minimal value of $m$ samples required to successively reconstruct the structure $45$ times in a row from $45$ independent sets of $m$ samples, which guarantees a $90\%$ confidence for $\delta=0.05$.

We reconstruct the topology by first solving the optimization problems in Eq.~\eqref{eq:gd_rise} for D-RISE and in Eq.~\eqref{eq:gd_rple} for D-RPLE with appropriate $\ell_1$ regularization to obtain estimates of $(\hat{\underline{J}}_i,\hat{H}_i)$ at each node $i \in V$. We create a consensus of the estimated couplings by averaging the reconstruction from two nodes i.e. $\hat{J}_{ij}^{\rm avg} = (\hat{J}_{ij} + \hat{J}_{ji})/2$. The set of pairwise interactions $\underline{\hat{J}}^{\rm avg}$ which are higher than $\alpha/2$ are defined as the estimate of the edge set $\hat{E}$. The structure is declared to be successfully reconstructed when the edge set is perfectly recovered $\hat{E}=E$.

We perform an extensive set of numerical experiments to empirically obtain $m^\star$ for a variety of graphs in both the \treg{} and the \mreg{}. We consider two different topologies: the periodic two-dimensional lattice and the random 3-regular graph. Each of these two topologies are subdivided into three categories according to the signs of the interaction couplings. This includes the ferromagnetic case with positive couplings, the spin glass case with couplings taking random signs and the ferromagnetic case with a weak anti-ferromagnetic impurity (i.e., weak negative coupling). For each of these six cases, all the couplings' magnitudes are set to $|J^\star_{ij}| = \beta$ with exception of one or two couplings which are set to $|J^\star_{ij}| = \alpha$. In our experiments, we fixed the value of $\alpha=0.4$ and varied $\beta$ from $\alpha$ to a value ranging from $1$ to $4$ depending on the model. All models have their magnetic fields at each node $H_i^\star$ set to zero. These cases are identical to the test cases used in \cite{lokhov2018optimal} in the i.i.d. learning setting which enables us to draw a comparison between the dynamic and i.i.d. settings.

In deciding the $\ell_1$-regularization to be used, optimal values of $c_\lambda$ which are unknown apriori were determined through detailed numerical simulations on different Ising model topologies as described in Appendix~\ref{sec:emp_selection_lambda}. The determined optimal values of $c_{\lambda}$ are summarized in Table~\ref{tab:optimal_lambda} on lattices and random regular (RR) graphs for the two different regimes.
\vspace{-0.1in}
\begin{table}[ht!]
\small
\begin{tabular}{ll|l|l|l}
    & \multicolumn{2}{c|}{\textbf{T-regime}} & \multicolumn{2}{c}{\textbf{M-regime}} \\ \cline{2-5} 
    & \multicolumn{1}{c|}{\textbf{D-RISE}} & \multicolumn{1}{c|}{\textbf{D-RPLE}} & \multicolumn{1}{c|}{\textbf{D-RISE}} & \multicolumn{1}{c}{\textbf{D-RPLE}} \\ \hline
    \multicolumn{1}{l|}{Lattices} & 0.1 & 0.05 & 0.1 & \multicolumn{1}{l}{0.05} \\ \hline
    \multicolumn{1}{l|}{RR Graphs} & 0.45 & 0.1  & 0.7 & \multicolumn{1}{l}{0.3} \\ \hline
\end{tabular}
\caption{Optimal values of $c_\lambda$ for D-RISE and D-RPLE.}
\label{tab:optimal_lambda}
\end{table}
\vspace{-0.1in}

\begin{figure*}[ht!]
\centering
\subfloat[Scaling of $m^\star$ with $\beta$ in \treg.]{\includegraphics[scale=0.28]{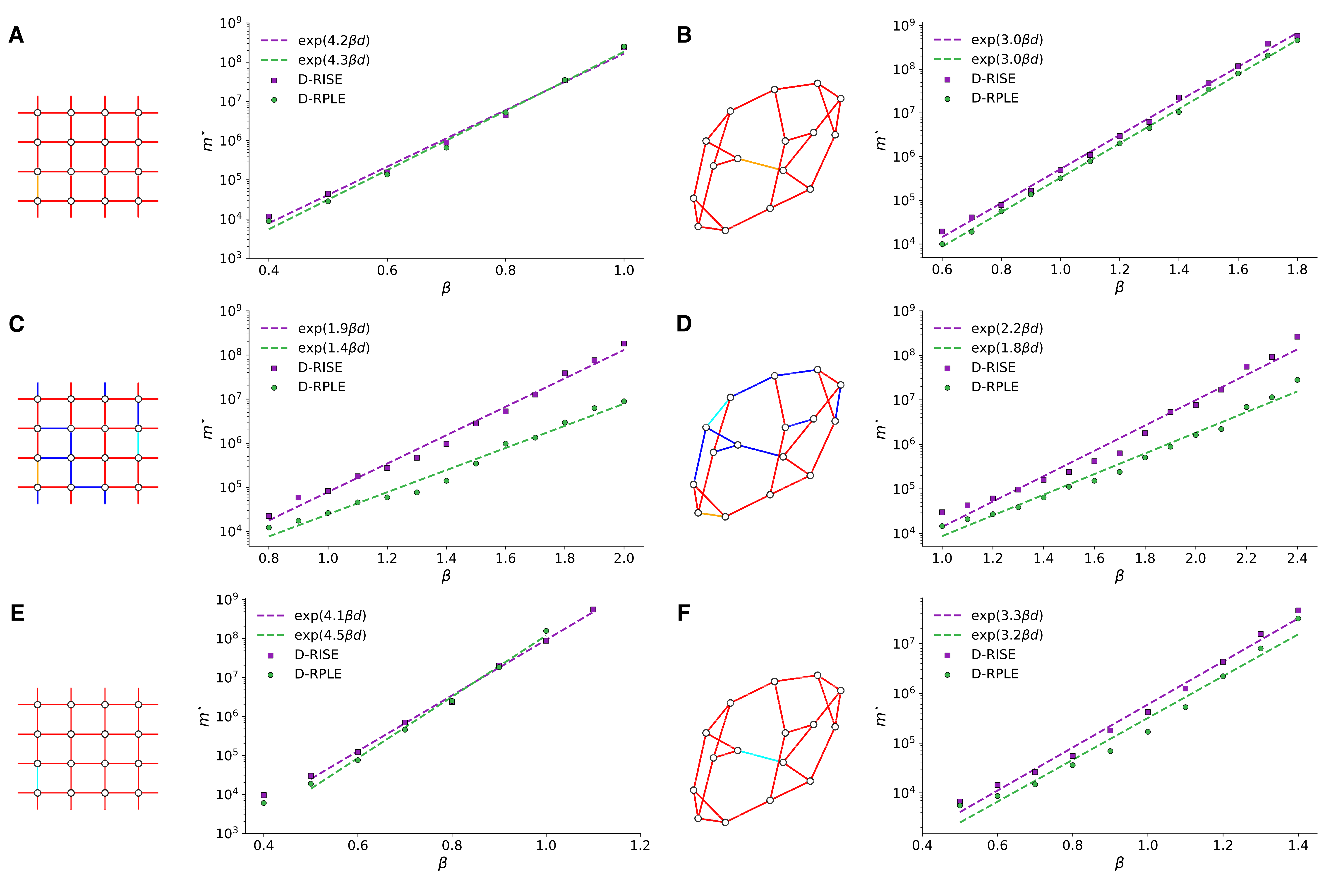}\label{fig:sample_complexity_beta_scaling_T_regime}\vspace{-0.03in}}
\\
\subfloat[Scaling of $m^\star$ with $\beta$ in \mreg.]{\includegraphics[scale=0.28]{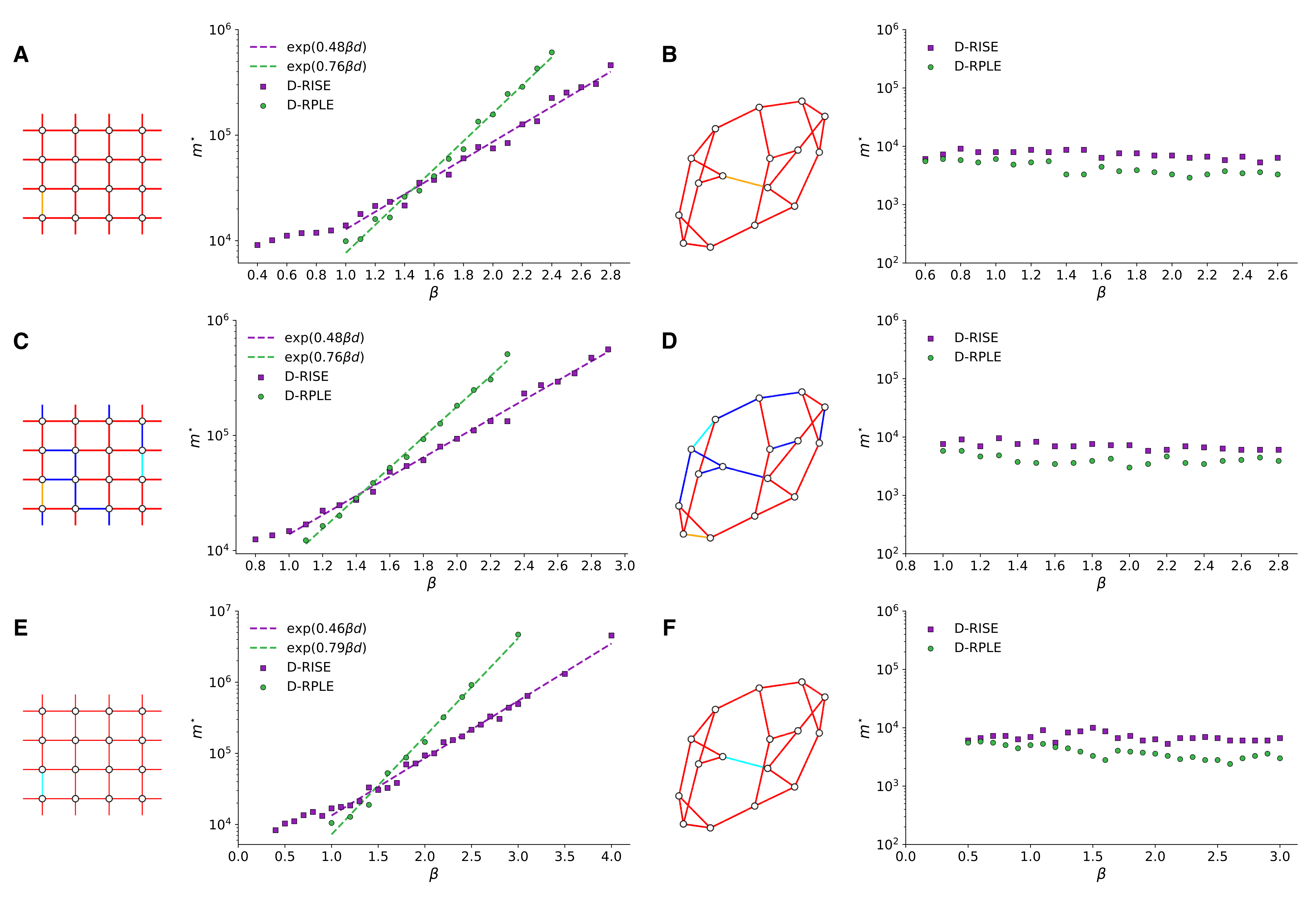}\label{fig:sample_complexity_beta_scaling_M_regime}\vspace{-0.03in}}
\caption{We assess the performance of D-RISE and D-RPLE in reconstructing Ising models of size $n=16$ from samples generated from Glauber dynamics. The different Ising model topologies with their corresponding pictorial representations on the left-hand side of each plot are: (A) ferromagnetic model on a periodic lattice, (B) spin glass model on a periodic lattice, and (C) ferromagnetic model on a periodic lattice with weak antiferromagnetic impurity. Edges in the pictorial representations of the models are colored red ($\beta$), orange ($\alpha$), cyan ($-\alpha$) and blue ($-\beta$). Value of $\alpha=0.4$ for all the graphs.}
\end{figure*}

Our sample complexity results for the \treg{} and \mreg{} are shown in Figure~\ref{fig:sample_complexity_beta_scaling_T_regime} and Figure~\ref{fig:sample_complexity_beta_scaling_M_regime} respectively. In the \treg{}, the worst scalings of sample complexity are observed for both D-RISE and D-RPLE in lattices with purely ferromagnetic interactions (Fig.~\ref{fig:sample_complexity_beta_scaling_T_regime}A) and those with a weak anti-ferromagnetic interaction (Fig.~\ref{fig:sample_complexity_beta_scaling_T_regime}E). The worst cases are $\exp(4.2 \beta d)$ and $\exp(4.5 \beta d)$ for D-RISE and D-RPLE respectively. The scaling of sample complexity results are similar to the i.i.d. setting for the ferromagnetic models on lattices and random regular graphs. However, compared to the i.i.d. case, we obtain a $35\%$ reduction in the $\beta$ scaling for D-RISE for the spin glass model on a lattice and around $27\%$ reduction for the spin glass model on a random regular graph. For the systems studied here, the Glauber dynamics mixes rapidly in the case of ferromagnetic models as the number of variables is small compared to the number of samples required to learn the structure. Therefore the dynamics in the \treg{} quickly produces samples that behave like i.i.d. samples in these cases and we see similar scalings. This mixing may not be as rapid in the case of spin glass models, yielding us savings in number of samples when learning from Glauber dynamics.

The picture drastically changes as we move to the \mreg{} for which the scaling results are shown in Fig.~\ref{fig:sample_complexity_beta_scaling_M_regime}. Among the numerical examples that we consider, the worst-case scaling for D-RISE and D-RPLE observed are $\exp(0.48 \beta d)$ and $\exp(0.79 \beta d)$ respectively, with the scalings being very similar for all the lattices. Compared to the \treg{} there is a reduction in the $\beta$ scaling between $80\%$ and $90\%$, translating to a reduction in the sample requirement of orders of magnitude. Interestingly, we observe that a constant number of samples independent of $\beta$ is required for learning the random regular graphs. Thus, it is possible to beat exponential scalings for special topologies in the case of \mreg{} which would not be possible in the i.i.d. setting. The details of this behavior is described in Appendix~\ref{sec:mreg_learning_rr_graphs}.

The fundamental difference between the two regimes is that the Glauber dynamics comes effectively from two different initial distributions. In the \mreg{}, the samples are produced from a one step Glauber dynamics initialized with a uniform distribution. In the \treg{}, however, the samples are \emph{effectively} produced from a one step Glauber dynamics that is initialized from a distribution which is close to the equilibrium one, as the actual dynamics mixes more rapidly. This shows that the dynamical samples acquired far from the equilibrium carry more information about the structure of graphical models. A natural question to ask is then how to empirically find such an initial distribution for the Glauber dynamics that improves the sample complexity. We propose a possible solution to this issue by introducing an active learner in Section~\ref{sec:active_learning}.

\section{Applications}\label{sec:applications}
In this section, we illustrate how D-RISE/D-RPLE can be applied to a real world system and extended to improve their performance. In Section~\ref{sec:application_learning_from_neural_data}, we consider a multi-neuron spike trains' data set to learn the structure of a network of neurons. This can be used to understand the dynamics of the network and how it implements computations. In Section~\ref{sec:active_learning}, we highlight an approach to modify the initial distribution to the \mreg~to improve sample complexity.

\subsection{Learning Ising models from neural data} \label{sec:application_learning_from_neural_data}
Due to the high dimensionality of the space of spike patterns and lack of enough data to build an exact statistical description, it has become popular to use parametric models such as Ising models \cite{schneidman2006weak}. In the corresponding Ising model \cite{rieke1999spikes}, the spin $\sigma_i$ of neuron/node $i$ can be interpreted as spiking/firing ($\sigma_i=1$) or not ($\sigma_i = -1$). Studies on using Ising models for spike trains include application to larger populations of neurons \cite{cocco2009neuronal,nirenberg2007analyzing}, conditions under which Ising models are good approximations \cite{roudi2009pairwise,tkacik2009spin}, development of faster learning methods \cite{broderick2007faster} and comparisons of different learning methods \cite{roudi2009ising}. Most of previous studies consider the i.i.d. setting. However, \cite{hertz2011ising} showed that respecting correlations in time and the dynamics can lead to better Ising model fits to the data. This motivates us to investigate the underlying Ising model for spike trains considering Glauber dynamics.

% What is the dataset we consider? 
We consider the dataset from \cite{prentice2016error} containing spike trains from $152$ salamander retinal ganglion cells in response to a non-repeated natural movie stimulus, of which we select spike trains over $n=42$ neurons over $24$s for our application. To use D-RISE/D-RPLE, the dataset is first converted into a sequence of $1.2 \times 10^5$ spin configurations, a segment of which is shown as a spike raster in Figure~\ref{fig:spike_train_neural_data}. Time series of spin configurations along with updated node identities are then extracted from this sequence and $3.2 \times 10^4$ samples corresponding to the \mreg{} with an unknown initial distribution are obtained. Details of this procedure is given in Appendix~\ref{sec:neural_dataset_prep}. 
\begin{figure}[h!]
    \centering
    \includegraphics[scale=0.43]{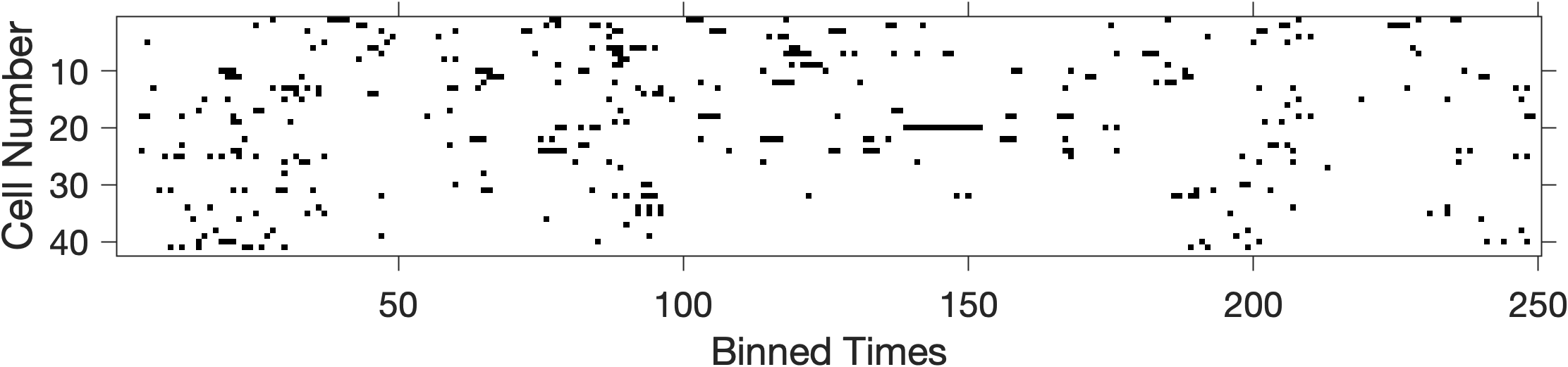}
	\caption{Spike raster from the first $5$ sec of the data over $42$ neurons. Each column indicates the spiking pattern of the neurons over a 20 ms time bin.}
	\label{fig:spike_train_neural_data}
	\vspace{-0.1in}
\end{figure}

% What were our results? -- Working in M-regime, effective glauber dynamics, etc.
We discuss the statistics of Ising model parameters learned using D-RISE on this set of samples in Appendix~\ref{sec:neural_dataset_prep}, where we also compare the recovered parameters with those obtained by RISE assuming the samples are i.i.d. Correlations computed from data assuming the samples are i.i.d. (Figure~\ref{fig:correlations_iid}) and that respecting time (Figure~\ref{fig:correlations_dynamics}) are very different. This difference strengthens the importance of respecting dynamics when learning an effective Ising model if one would like to capture time correlations present in the data. The correlation matrix predicted using the model learned with D-RISE is presented in Figure~\ref{fig:predicted_correlations_dynamics}, and is within $\sim 10\%$ of that computed from data under the Frobenius norm (see Figure~\ref{fig:diff_pred_data_correlations_dynamics}), indicating a good model fit that respects the time correlations present in the data. Details of correlation computation can be found in the Appendix~\ref{sec:neural_dataset_prep}.

As we have control in such biological systems through external stimuli, learning an effective Ising model could be accelerated using an active learner which we discuss next.
%
% \begin{figure}[h!]
%     \centering
%     \includegraphics[scale=0.45]{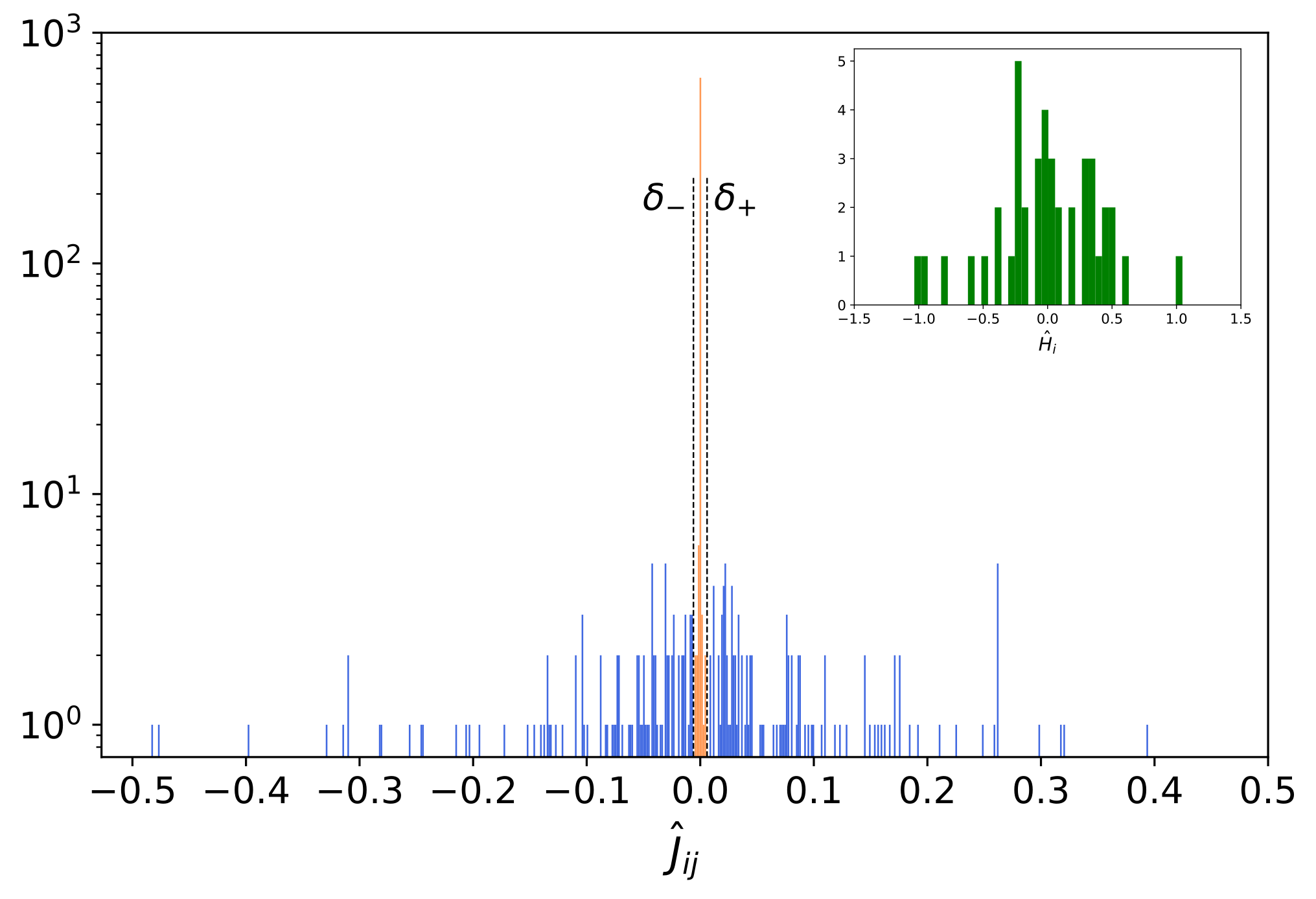}
% 	\caption{Ising model parameters learned from spike trains over $42$ neurons using $3.2 \times 10^4$ samples. Significant couplings are in blue and thresholded couplings are in red. Reconstructed fields are in green in a separated histogram.}
% 	\label{fig:learned_model_neural_data}
% 	\vspace{-0.1in}
% \end{figure}
% %
% \begin{figure}[h!]
%     \centering
%     \includegraphics[scale=0.35]{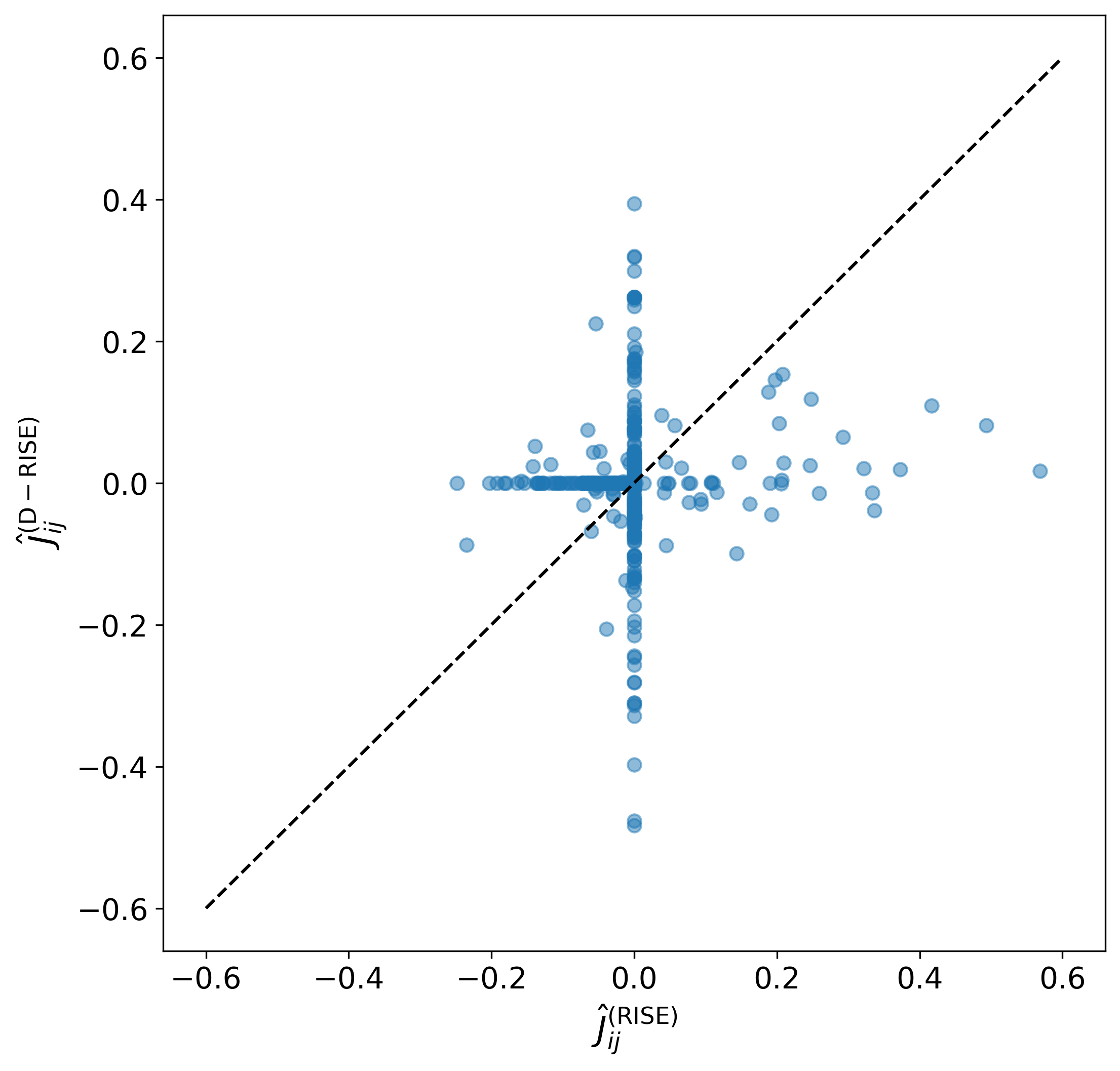}
% 	\caption{Comparison of Ising model parameter estimates $\hat{J}_{ij}$ obtained through RISE and D-RISE}
% 	\label{fig:neural_comparison_drise_rise_estimate}
% 	\vspace{-0.2in}
% \end{figure}
%
\begin{figure}[h!]
\centering
\subfloat[$\mathrm{Corr}(\sigma_i, \sigma_j)$]{\includegraphics[scale=0.35]{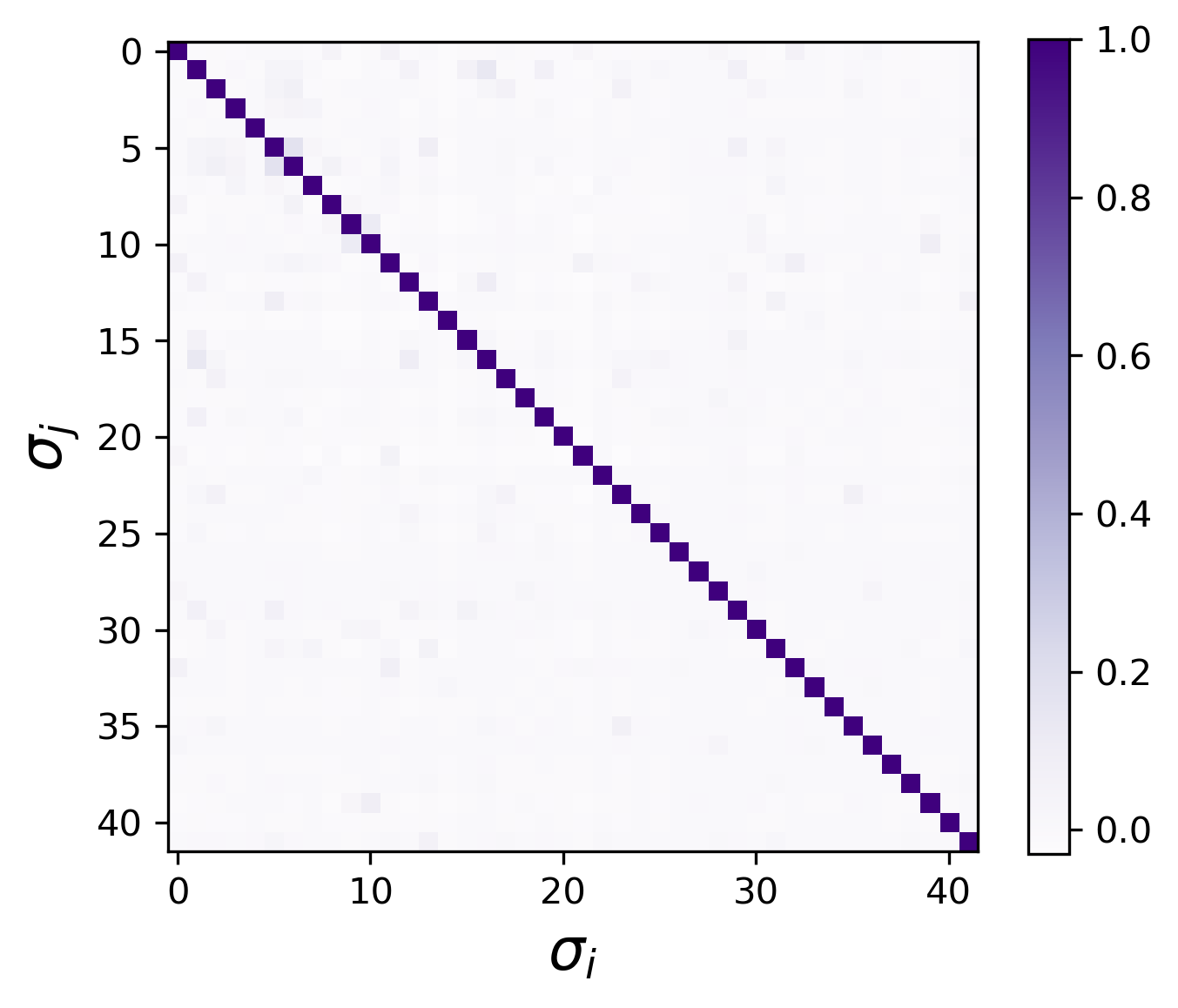}\label{fig:correlations_iid}\vspace{-0.03in}}
\subfloat[$\mathrm{Corr}(\sigma_i^0, \sigma_j^1)$]{\includegraphics[scale=0.35]{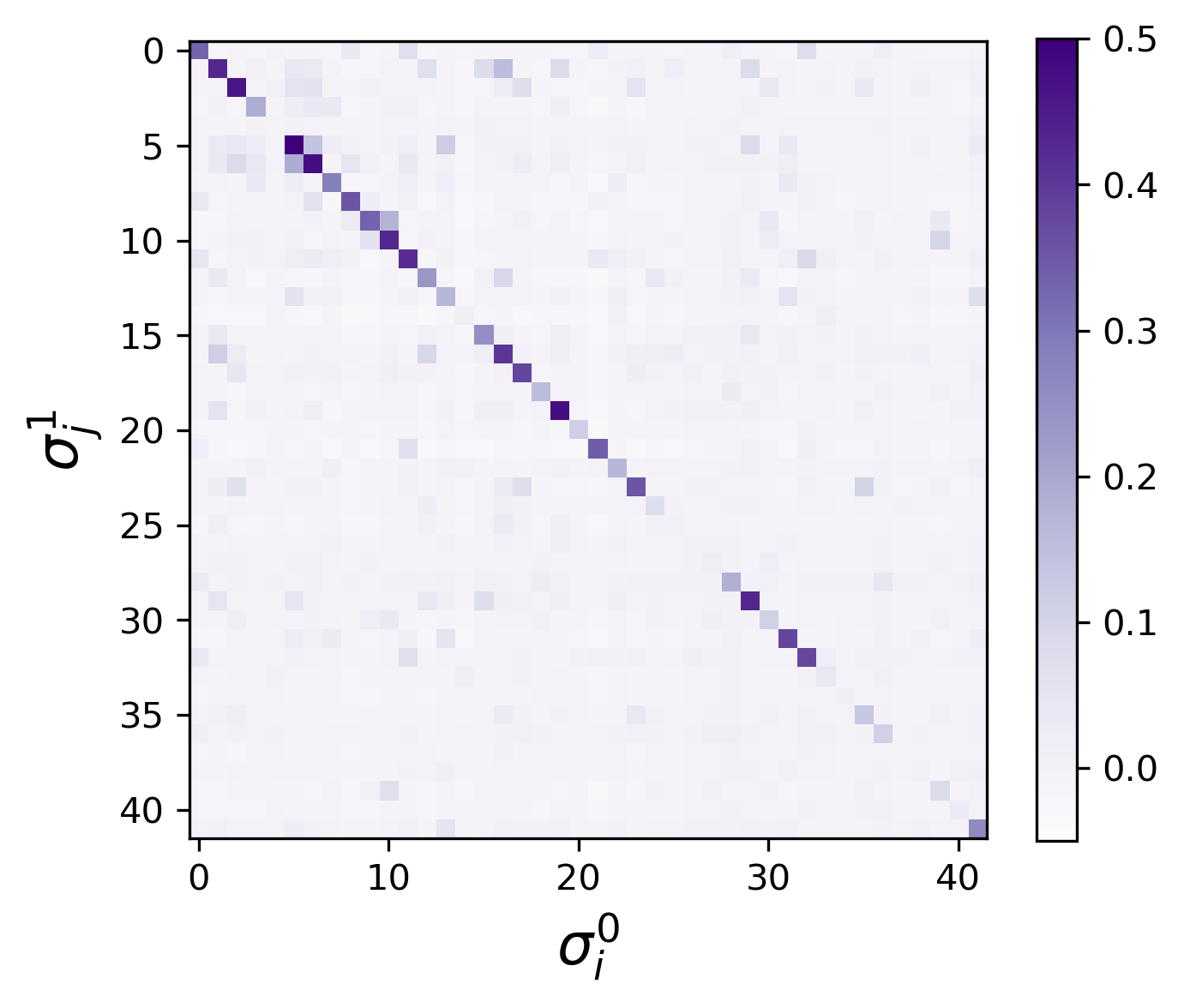}\label{fig:correlations_dynamics}\vspace{-0.03in}}
\\
\subfloat[Predicted $\mathrm{Corr}(\sigma_i^0, \sigma_j^1)$]{\includegraphics[scale=0.35]{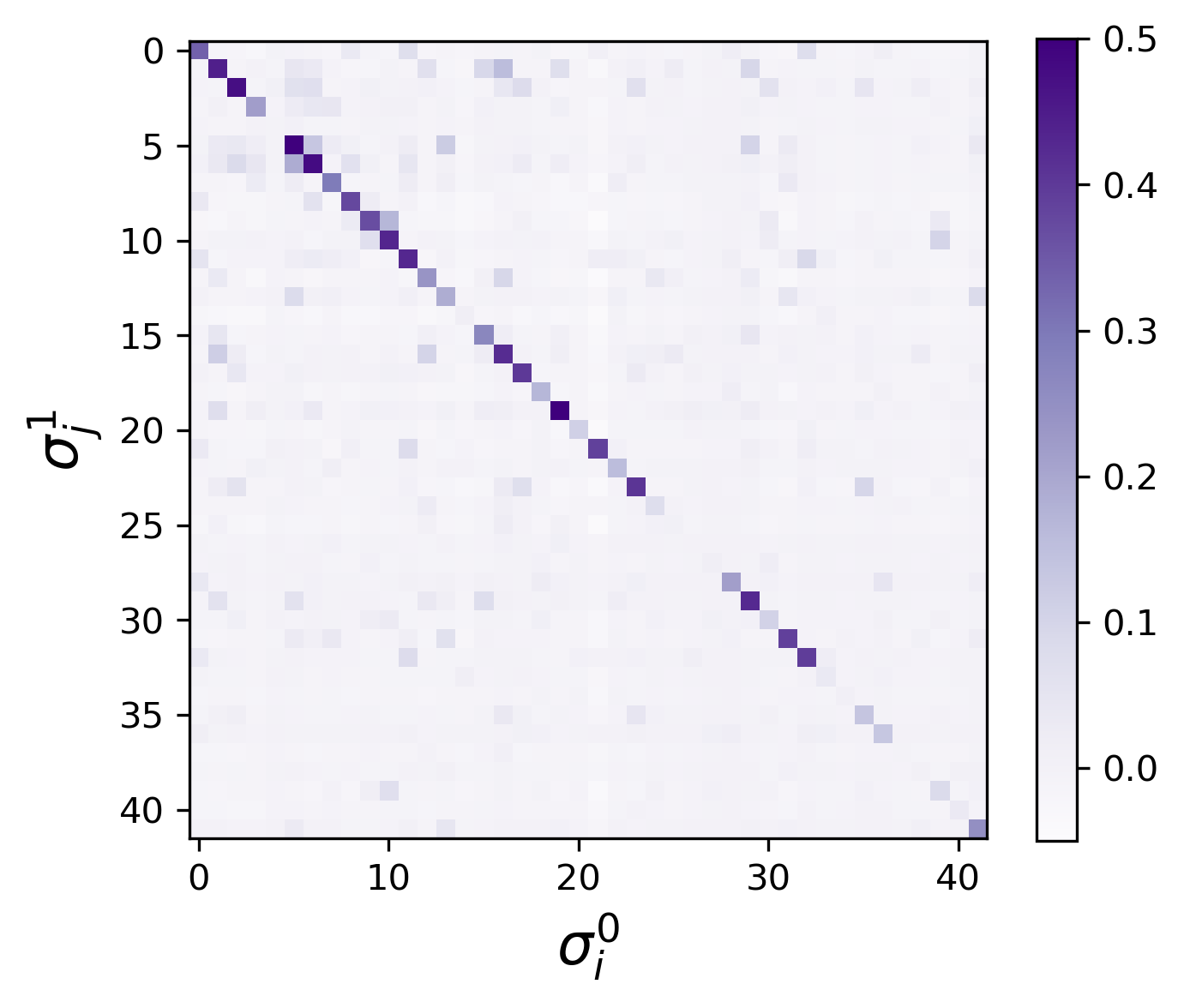}\label{fig:predicted_correlations_dynamics}\vspace{-0.03in}}
\subfloat[Difference in predicted and data correlations]{\includegraphics[scale=0.35]{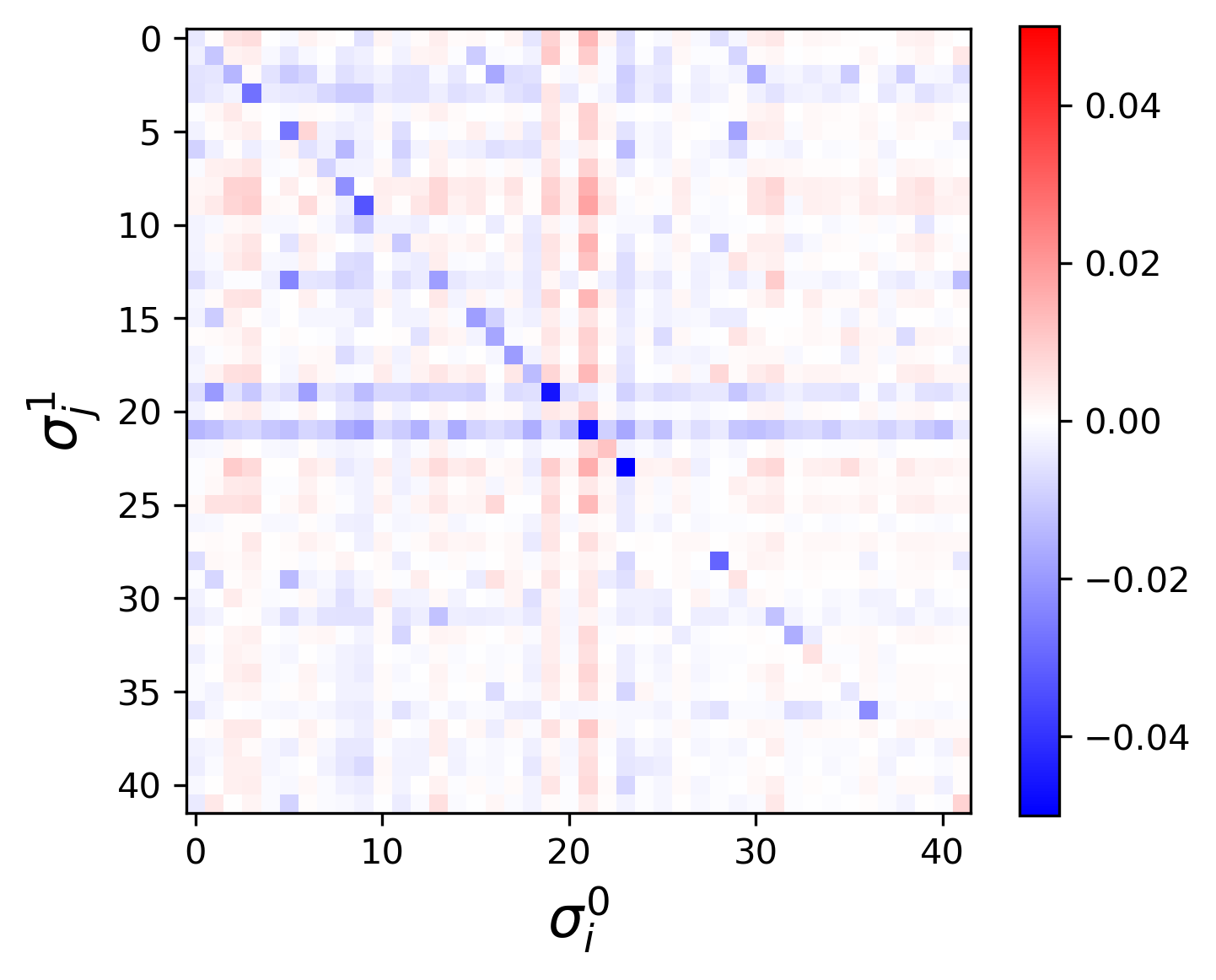}\label{fig:diff_pred_data_correlations_dynamics}\vspace{-0.03in}}
\caption{Correlation matrices are computed from data in (a) assuming the spin configurations are i.i.d., and (b) respecting time ordering (see definition in Appendix~\ref{sec:neural_dataset_prep}). We show the predicted correlation matrix using D-RISE estimates in (c) and its difference from that computed from data in (d).}\label{fig:neural_spikes_correlations}
\end{figure}

\subsection{Active learning in M-regime} \label{sec:active_learning}
% Remark on  Initial Distribution in M-Regime
Motivated by the previous section, we develop an active learning algorithm that optimizes on the fly over the initial distribution in the \mreg{}. In the \mreg{}, the initial spin configuration $\underline{\sigma}^0$ can be viewed as a query to the Glauber dynamics which returns the output of $(\underline{\sigma}^1,I^1)$. Clearly, the observations generated have a dependence on the query distribution $p_{\underline{\sigma}^0}$ from which the queries $\underline{\sigma}^0$ are simulated. If no prior information about the parameter set $(\underline{J},\underline{H})$ is known, then a suitable choice of $p_{\underline{\sigma}^0}$ is the uniform distribution over $\{-1,+1\}^n$ as we had chosen for our numerical experiments in the previous section. 

% What is an active learner and what are the different active learning strategies in literature?
In \textit{active learning}, the learner has the ability to select queries during model learning that would be most informative. Here, we consider a \textit{mini-batch adaptive active learning} \cite{wei2015submodularity} scheme where in each round a mini-batch of initial spin configurations $\underline{\sigma}^0$ are selected to be queried and then the samples obtained are combined with those from before to produce estimates of $(\hat{\underline{J}},\hat{\underline{H}})$. These estimates are then used to determine the next mini-batch of queries. To select these queries we use the informative measure of entropy as in uncertainty sampling. An alternate criteria that can be used is Fisher information but the computational effort of the resulting query optimization typically grows exponentially with $n$ \cite{sourati2017probabilistic}. In uncertainty sampling \cite{settles2009active}, one query $\underline{\sigma}^0$ is chosen at a time 
\begin{equation}
    \hat{\underline{\sigma}}^0 = \argmax_{\underline{\sigma}^0 \in \{-1,+1\}^n} S(\underline{\sigma}^1|\underline{\sigma}^0;\hat{\underline{J}},\hat{\underline{H}})
\end{equation}
where $S$ is the entropy measure of the probability $p(\underline{\sigma}^1|\underline{\sigma}^0;\hat{\underline{J}},\hat{\underline{H}})$ based on current parameter estimates. The entropy in the case of Glauber dynamics is 
\begin{equation}
    S(\underline{\sigma}^1|\underline{\sigma}^0) = \sum \limits_{k \in [n]} \log(2 \cosh(A_k)) - A_k \tanh(A_k)
\end{equation}
where $A_k = \sum \limits_{l \in \partial k} J_{kl} \sigma_l^0 + H_k$. Here, we issue mini-batches of queries sampled from distribution $q$ that is proportional to the the entropy $S$. Our algorithm is given in Algorithm~\ref{algo:AL_glauber_dynamics_M_regime}.
%
% Q: What is the algorithm we have applied? Discuss pool-based active learning based on entropy
%

\begin{algorithm}[ht!]
	\caption{\small Active Learning of Glauber Dynamics}\label{algo:AL_glauber_dynamics_M_regime}
	\footnotesize
	\textbf{Input}:  Initial set of samples $X^{(0)}$, number of mini-batches $i_{max}$, size of mini-batch $m_b$
 	\begin{algorithmic}
	    \STATE $(\hat{\underline{J}},\hat{\underline{H}}) \leftarrow \text{D-RISE}(X^{(0)})$
		\FOR{$i=1:i_{max}$}
            \STATE Compute entropy $S(\underline{\sigma}^1|\underline{\sigma}^0;\hat{\underline{J}},\hat{\underline{H}}) \, \forall \underline{\sigma}^0$
            \STATE Set $q(\underline{\sigma}^0)~\propto~S(\underline{\sigma}^1|\underline{\sigma}^0;\hat{\underline{J}},\hat{\underline{H}})$
            \STATE Modify distribution: $q = \mu q + (1-\mu)p_U$
            \STATE Sample $m_{b}$ queries from $\{-1,+1\}^n$ w.p. $q$
            \STATE Obtain corresponding samples $X_b$ by running Glauber dynamics in \mreg{}
            \STATE Set $X^{(i)}=X^{(i-1)} \bigcup X_b$
            \STATE $(\hat{\underline{J}},\hat{\underline{H}}) \leftarrow \text{D-RISE}(X^{(i)})$
        \ENDFOR
 	\end{algorithmic}
 	\textbf{Output}: $(\hat{\underline{J}},\hat{\underline{H}})$
\end{algorithm}

\begin{figure}[h]
    \centering
    \includegraphics[scale=0.35]{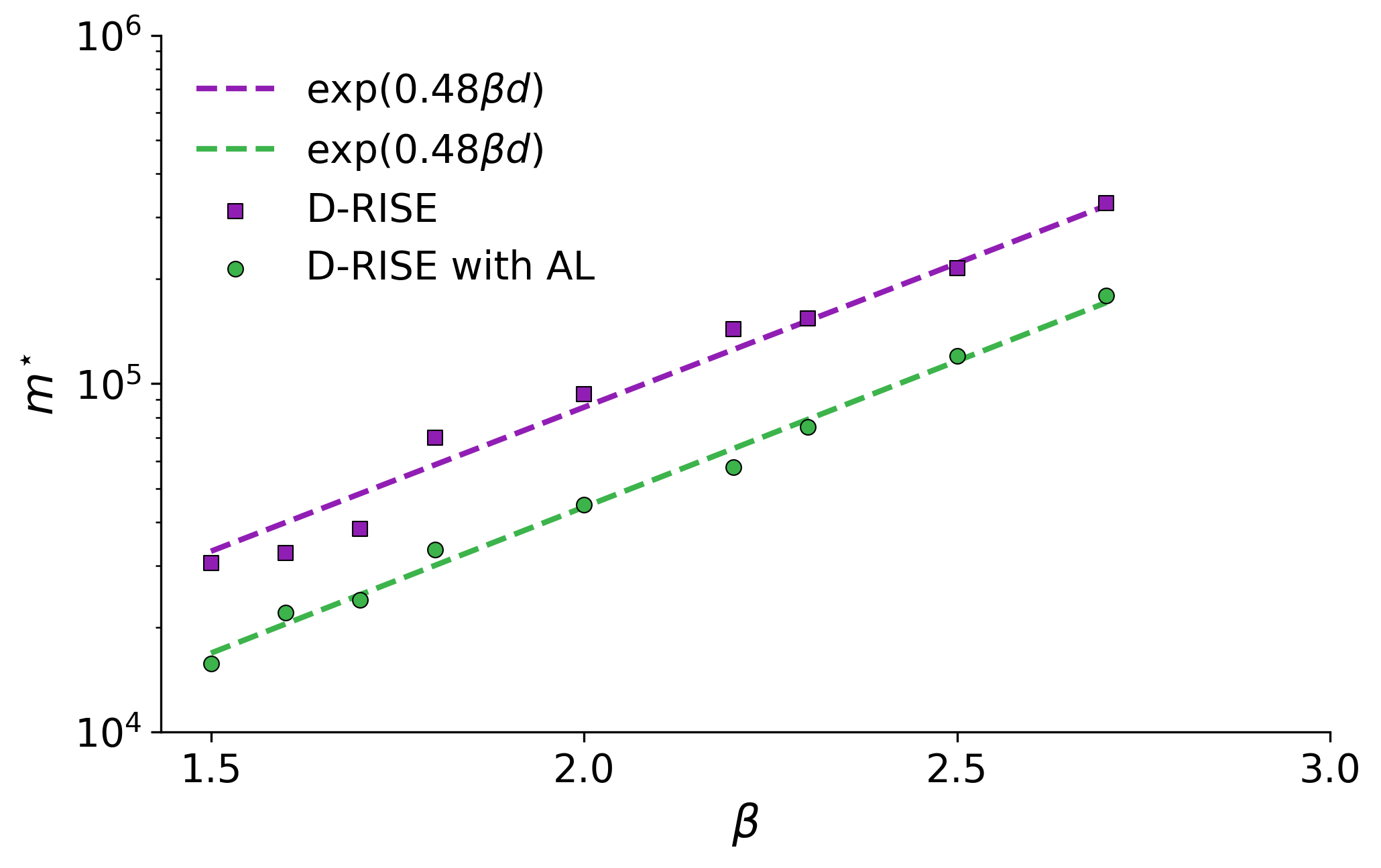}
	\caption{Scaling of $m^\star$ with $\beta$ in \mreg{}. Performance comparison of D-RISE with active learning against a vanilla D-RISE in reconstructing a ferromagnetic model with weak anti-ferromagnetic impurity of size $n=16$ (see Fig.~\ref{fig:sample_complexity_beta_scaling_M_regime}E).}
	\label{fig:sample_complexity_beta_scaling_GDRISE_AL}
	\vspace{-0.1in}
\end{figure}

Note that we slightly modify the query distribution $q$ for each mini-batch by mixing it with the uniform distribution $p_U$. The mixing coefficient $0 \leq \mu \leq 1$ typically depends on the number of samples so far. We set it to $\mu = 1 - 1/|X^{(i)}|^{1/6}$ which is often used for such active learning algorithms \cite{sourati2017asymptotic,chaudhuri2015convergence}.

% Q: How do we test m^\star here?
To determine the sample complexity, the minimal number of samples required for successful structure reconstruction $m^\star$ is determined as described in Sec.~\ref{subsec:empirical_beta_studies}. However, here each trial corresponds to an independent active learning run. In each trial for a given value of $m$, we consider the size of initial set of samples to be $\lfloor m/3 \rfloor$ and size of batches $m_b$ such that there are a total of $15$ mini-batches.

% Q: What are the results for Fig.E?
In Figure~\ref{fig:sample_complexity_beta_scaling_GDRISE_AL}, we compare the sample complexity of D-RISE with and without active learning (AL) on the challenging case of a ferromagnetic periodic lattice with a weak anti-ferromagnetic impurity (Fig.~\ref{fig:sample_complexity_beta_scaling_M_regime}E). We consider values of $\beta$ between $1.5$ and $2.7$. While the scaling of sample complexity with $\beta$ remains unchanged (within experimental error), there is about $47\%$ reduction in the number of samples required for successful graph reconstruction when using AL. 

\section{Conclusions and future work} \label{sec:conclusions}
% Summary
In this paper, we theoretically and empirically establish a fundamental difference between learning graphical models in the traditional framework of i.i.d. samples and samples obtained from out of equilibrium dynamics. We show that in the latter understudied setting, there is considerable improvement in sample complexity which has both theoretical and practical consequences. In future work, it would be interesting to further investigate general Markov Random Fields and other Markov chain dynamics.

\section*{Code and data availability}
The code for the learning algorithms, active learner and data in this work is available on GitHub\footnote{\url{https://github.com/lanl-ansi/learning-ising-dynamics}}.

\section*{Acknowledgements}
A.D. was partially supported by the Applied Machine Learning Fellowship at LANL where most of the work was carried out. A.Y.L., M.V., and S.M. acknowledge support from the Laboratory Directed Research and Development program of Los Alamos National Laboratory under project numbers 20190059DR, 20200121ER, and 20210078DR.

\bibliographystyle{icml2021}
\bibliography{references}% Produces the bibliography via BibTeX.

\clearpage

\appendix

\onecolumn

\renewcommand{\theequation}{S\arabic{equation}}

\setcounter{equation}{0}

\renewcommand{\thefigure}{S\arabic{figure}}

\setcounter{figure}{0}

\renewcommand{\thesection}{S\arabic{section}}

\setcounter{section}{0}

\renewcommand\appendixname{}

\vspace{1cm}

\section{Analysis of the estimators D-RISE and D-RPLE} \label{sec:analysis_gd_estimators}
In this section, we provide a rigorous analysis of the sample complexity of learning from Glauber dynamics in the ~\mreg~ on Ising models using the estimators of D-RISE and D-RPLE. Theorems~\ref{thm:estimate_drise_drple} and \ref{thm:structure_learning_drise_drple} are established.

To simplify the analysis, we consider the case of Ising models with zero local magnetic fields (i.e., $H^\star_i=0$). The probability measure of a particular configuration of spins $\underline{\sigma} \in \{-1,+1\}^n$ on the resulting Ising model with $n$ nodes is given by
\begin{equation}
    p(\underline{\sigma}) = \frac{1}{Z} \exp \left( \sum_{(i,j) \in E} J^\star_{ij} \sigma_i \sigma_j \right).
    \label{eq:gibbs_distribution_sim}
\end{equation}

As noted earlier in Section~\ref{sec:active_learning}, it is useful to view the initial spin configuration $\underline{\sigma}^0$ as a query to the Glauber dynamics which returns the output of $(\underline{\sigma}^1,I^1)$ in the \mreg{}, where $I^1$ is the identity of the spin being updated at time $t=1$. The conditional probability of updating node $i$ through Glauber dynamics in the \mreg~is then given by
\begin{equation}
    p(\sigma_i^1|\underline{\sigma}^0, \delta_{i,I^1}=1) = \frac{\exp \left[\sigma_i^1 (\sum_{j \in \partial i} J^\star_{ij} \sigma_j^0 ) \right]}{2 \cosh \left[ \sum_{j \in \partial i} J^\star_{ij} \sigma_j^0 \right]}.
    \label{eq:glauber_dynamics_conditional_prob_update_sim}
\end{equation}
where the initial probability distribution over $\underline{\sigma}^0$ is the uniform distribution over all possible spin configurations:
\begin{equation}
    p(\underline{\sigma}^0 = \underline{\sigma}) = \frac{1}{2^n}, \, \forall \underline{\sigma} \in \{-1,+1\}^n.
    \label{eq:initial_pdf_GD_Ising}
\end{equation}

Using $m$ samples $\{ ({\underline{\sigma}^0}^{(t)}, {\underline{\sigma}^1}^{(t)}, {I^1}^{(t)}) \}_{t \in [m]}$ generated through Glauber dynamics (Eq.~\ref{eq:glauber_dynamics_conditional_prob_update_sim}) in the \mreg{}, the Ising model is learned using the D-RISE and D-RPLE estimators which are based on the following (simplified) objectives:
\begin{equation}
    \text{D-ISO:} \quad \mathcal{S}_m(\underline{J}_{u}) 
     = \frac{1}{m_u} \sum_{t=1}^{m} \exp \left[-{\sigma_u^1}^{(t)} \left(\sum_{i \neq u} J_{ui} {\sigma_i^0}^{(t)}\right) \right] \delta_{u,{{I^1}^{(t)}}}. \label{eq:gd_iso_sim}
\end{equation}
\begin{equation}
    \text{D-PL:} \quad \mathcal{L}_m(\underline{J}_{u}) = - \frac{1}{m_u} \sum_{t=1}^{m} \ln\left[1 + {\sigma_u^1}^{(t)} \tanh\left(\sum_{i \neq u} J_{ui} {\sigma_u^0}^{(t)} \right) \right] \delta_{u,{{I^1}^{(t)}}}.  \label{eq:gd_pl_sim}
\end{equation}
The above objective functions are given for recovering the neighborhood around node $u$. The analysis of the D-RISE and D-RPLE estimators closely follows the work of \cite{vuffray2016interaction} which analyzed the case of learning from i.i.d. samples. We are now in a position to state formal theorems regarding estimation error and sample complexity of structure learning of the estimators D-RISE and D-RPLE.

\begin{theorem}[\mreg: Error Bound on Estimates] \label{thm:estimate_drise_drple} Let $\{ {\underline{\sigma}^0}^{(t)},{\underline{\sigma}^1}^{(t)}, {I^1}^{(t)}\}_{t \in [m]}$ be $m$ samples of spin configurations and corresponding node identities drawn through Glauber dynamics (Eq.~\ref{eq:glauber_dynamics_conditional_prob_update}), and define $m_{i} = \sum_{t=1}^m \delta_{i,{I^1}^{(t)}}$ as the number of updates per spin $i$. Considering ~\mreg~on an Ising model with maximum degree $d$, maximum coupling intensity $\beta$, and assume $H^\star_i = 0$ $\forall i$. Then for any $\delta > 0$, the square error of the following estimators with the following choices of penalty parameter $\lambda \propto \sqrt{\frac{ \ln (3n^3/\delta)}{m_u}}$ is bounded with probability at least $1-\delta$ for all nodes $u \in V$ if the number of samples satisfies
\begin{enumerate}[label=\roman*)]
    \item D-RPLE: If $m_u~\geq~2^{17} d^2 \exp(4\beta d)\ln\frac{3n^3}{\delta}$ for $\lambda = 4\sqrt{2} \sqrt{\frac{ \ln(3n^3/\delta)}{m_u}}$ then $\norm{\hat{\underline{J}}_u - \underline{J}_u^\star}_2 \leq 240\sqrt{2d} \exp(2\beta d) \sqrt{\frac{\ln \frac{3n^3}{\delta}}{m_u}}$,
    \item D-RISE: If $m_u~\geq~2^{14} d^2 \exp(2\beta d)\ln\frac{3n^3}{\delta}$ for $\lambda = 4 \sqrt{\frac{ \ln(3n^3/\delta)}{m_u}}$ then $\norm{\hat{\underline{J}}_u - \underline{J}_u^\star}_2 \leq 240 \sqrt{d} \exp(\beta d) \sqrt{\frac{\ln \frac{3n^3}{\delta}}{m_u}}$. 
\end{enumerate}
\end{theorem}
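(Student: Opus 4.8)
The plan is to adapt the convex-analysis framework that Vuffray et al.\ developed for RISE in the i.i.d.\ setting, exploiting two features special to the \mreg{}: conditioned on the event $\{{I^1}=u\}$ the $m_u$ samples entering the node-$u$ subproblem are i.i.d., and the uniform initial law makes the $\sigma_i^0$ independent and mean-zero, so that $\mathbb{E}[\sigma_i^0\sigma_j^0]=\delta_{ij}$. I would treat D-RISE and D-RPLE in parallel, tracking how the curvature of each loss sets its exponent. The first step is to check that the population objectives are minimized at $\underline{J}^\star_u$, i.e.\ that their gradients vanish there: writing $x=\sum_k J^\star_{uk}\sigma_k^0$ and conditioning on $\underline{\sigma}^0$, the update rule \eqref{eq:glauber_dynamics_conditional_prob_update_sim} gives $\mathbb{E}[\sigma_u^1 e^{-\sigma_u^1 x}\mid\underline{\sigma}^0]=0$ for D-ISO, while for D-PL the gradient summand reduces to $(\sigma_u^1-\tanh x)\sigma_i^0$, whose conditional mean is $0$ because $\mathbb{E}[\sigma_u^1\mid\underline{\sigma}^0]=\tanh x$.

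\textbf{Gradient concentration.} Next I would fix $\lambda$ by controlling $\|\nabla\mathcal{S}_m(\underline{J}^\star_u)\|_\infty$. Each coordinate is an average of $m_u$ i.i.d.\ mean-zero terms: for D-PL the summand is bounded by $2$ so Hoeffding suffices, whereas for D-ISO the summand $\sigma_u^1 e^{-\sigma_u^1 x}\sigma_i^0$ has range up to $e^{\beta d}$ but variance exactly $1$, so I would invoke Bernstein's inequality, which remains in the variance-dominated regime once $m_u$ exceeds the stated threshold. A union bound over the $\le n^2$ coordinates and the $n$ nodes produces the $\ln(3n^3/\delta)$ factor and gives $\|\nabla\mathcal{S}_m(\underline{J}^\star_u)\|_\infty\le\lambda/2$ with probability $\ge1-\delta$ for $\lambda\propto\sqrt{\ln(3n^3/\delta)/m_u}$.

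\textbf{Population curvature.} Computing the population Hessian, the conditional expectation over $\sigma_u^1$ collapses the weight to $1/\cosh x$ for D-ISO and to $\sech^2 x$ for D-PL, so for any direction $v$, $v^\top H v=\mathbb{E}[\,w(\underline{\sigma}^0)(\sum_i v_i\sigma_i^0)^2\,]\ge(\min w)\|v\|_2^2$, the last identity using independence of the uniform spins. Since $|x|\le\beta d$, one has $\min w\ge e^{-\beta d}$ for D-ISO and $\min w\ge\sech^2(\beta d)\gtrsim e^{-2\beta d}$ for D-PL; this curvature gap is exactly the origin of the factor-of-two difference between the two exponents.

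\textbf{Empirical restricted strong convexity, and conclusion.} The substantive work is to transfer this curvature to the empirical loss. Optimality of $\hat{\underline{J}}_u$ together with $\lambda\ge2\|\nabla\mathcal{S}_m(\underline{J}^\star_u)\|_\infty$ forces the error $\hat\Delta=\hat{\underline{J}}_u-\underline{J}^\star_u$ into a cone where its $\ell_1$ mass concentrates on the true support; on this cone I would prove a restricted strong convexity bound $\mathcal{D}_m(\hat\Delta):=\mathcal{S}_m(\underline{J}^\star_u+\hat\Delta)-\mathcal{S}_m(\underline{J}^\star_u)\ge\kappa\|\hat\Delta\|_2^2$ with $\kappa$ of the same order as the population curvature, valid once $m_u$ exceeds the threshold. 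Combining the optimality inequality $\mathcal{D}_m(\hat\Delta)+\lambda(\|\underline{J}^\star_u+\hat\Delta\|_1-\|\underline{J}^\star_u\|_1)\le0$ with the cone restriction and RSC yields $\|\hat\Delta\|_2\lesssim\sqrt{d}\,\lambda/\kappa$, and substituting $\lambda\sim\sqrt{\ln(3n^3/\delta)/m_u}$ and $\kappa\sim e^{-\beta d}$ (resp.\ $e^{-2\beta d}$) gives the claimed bounds. I expect this last step to be the main obstacle: the empirical Hessian carries weights spanning $[e^{-\beta d},e^{\beta d}]$, and RSC requires their fluctuations to be dominated, uniformly over the cone, by the much smaller population curvature --- which is precisely the requirement that generates the $e^{2\beta d}$ and $e^{4\beta d}$ sample-complexity thresholds.
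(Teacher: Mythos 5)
Your skeleton coincides with the paper's: both use the Negahban-style $\ell_1$-regularized M-estimator framework (Conditions~\ref{cond:gradient_concentration}--\ref{cond:strong_convexity} and Proposition~\ref{prop:m_estimator_error_bound}), and your gradient-concentration step is identical in every detail --- mean-zero summands by the update rule, Bernstein's inequality for D-ISO using range $e^{\beta d}$ but variance exactly $1$, Hoeffding for D-PL using the bound $2$, and union bounds producing $\ln(3n^3/\delta)$. Your observation that the uniform initial law gives $\mathbb{E}[\sigma_i^0\sigma_j^0]=\delta_{ij}$ is exactly the paper's Lemma~\ref{lem:hessian_GD-ISO}, and your curvature exponents $e^{-\beta d}$ versus $e^{-2\beta d}$ are the right ones.

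The gap is the step you yourself flag as ``the main obstacle'': restricted strong convexity is the heart of the proof, and you leave it unexecuted, with a plan (dominating the fluctuations of the \emph{weighted} empirical Hessian, uniformly over the cone, by the population curvature) that both misplaces the difficulty and misattributes the origin of the thresholds. The paper's missing ingredient is a \emph{deterministic, per-sample} functional inequality, $e^{-z}-1+z\geq z^2/(2+|z|)$ (Lemma~\ref{lem:functional_ineq}) and a $\log$-$\tanh$ analogue for D-PL (Lemma~\ref{lem:functional_ineq_log_tanh}). Applied with $z_t=\sum_k\Delta_{uk}\sigma_u^{1\,(t)}\sigma_k^{0\,(t)}$, it pulls the worst-case weight $e^{-\beta d}$ (resp.\ $e^{-2\beta d}$) out deterministically and lower-bounds the whole Taylor residual by $e^{-\beta d}\,\Delta_u^{T}\hat H\Delta_u/(2+\norm{\Delta_u}_1)$, where $\hat H$ is the \emph{unweighted} empirical spin-correlation matrix with entries in $[-1,1]$. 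The stochastic part then reduces to entrywise Hoeffding concentration of $\hat H$ around the identity with accuracy $\epsilon_2=1/(32d)$, and uniformity over the cone is automatic from $|\Delta_u^{T}(\hat H-H)\Delta_u|\leq\epsilon_2\norm{\Delta_u}_1^2\leq 16d\,\epsilon_2\norm{\Delta_u}_2^2$ --- so the RSC step costs only $m_u\gtrsim d^2\ln(n^2/\delta_2)$ samples, with \emph{no} $\beta$ dependence whatsoever. The factors $e^{2\beta d}$ and $e^{4\beta d}$ in the theorem's sample thresholds arise elsewhere: from the consistency requirement $R>3\sqrt{d}\,\lambda/\kappa$ of Proposition~\ref{prop:m_estimator_error_bound}, where $\kappa\propto e^{-\beta d}/(1+2\sqrt{d}R)$ forces one to solve $R=12\sqrt{d}\,\lambda(1+2\sqrt{d}R)e^{\beta d}$, which admits the solution $R=2/\sqrt{d}$ only once $m_u\gtrsim d^2e^{2\beta d}\ln(3n^3/\delta)$ (and analogously with $e^{2\beta d}\mapsto e^{4\beta d}$ for D-PL). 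Your weighted-Hessian route is not hopeless --- the weighted D-ISO Hessian entries at $\underline{J}_u^\star$ also have variance $1$, so Bernstein would yield comparable thresholds --- but it requires the additional unaddressed step of relating the full Taylor remainder to the Hessian at $\underline{J}_u^\star$ uniformly on the cone-ball intersection, which is precisely the work the paper's pointwise inequality makes unnecessary.
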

The following theorem quantifies the sample complexity required for structure learning for the different estimators.
\begin{theorem}[\mreg: Structure Learning of Ising Model Dynamics] \label{thm:structure_learning_drise_drple} Let $\{ {\underline{\sigma}^0}^{(t)},{\underline{\sigma}^1}^{(t)}, {I^1}^{(t)}\}_{t \in [m]}$ be $m$ samples of spin configurations and corresponding node identities drawn through Glauber dynamics (Eq.~\ref{eq:glauber_dynamics_conditional_prob_update}), and define $m_{i} = \sum_{t=1}^m \delta_{i,{I^1}^{(t)}}$ as the number of updates per spin $i$. Consider ~\mreg~on an Ising model with maximum degree $d$, maximum coupling intensity $\beta$, minimum coupling intensity $\alpha$, and assume $H^\star_i = 0$ $\forall i$. Then for any $\delta > 0$, the following estimators with specified penalty parameter of form $\lambda \propto \sqrt{\frac{ \ln(3n^3/\delta)}{m_u}}$ reconstructs the edge-set perfectly with probability $p(\hat{E}(\lambda,\alpha)=E) \geq 1-\delta$ if the number of samples satisfies
\begin{enumerate}[label=\roman*)]
    \item D-RPLE: $m_u \geq   \max(d, \alpha^{-2}) 2^{19} d \exp(4 \beta d) \ln \frac{3n^3}{\delta}$ for $\lambda = 4\sqrt{2} \sqrt{\frac{ \ln(3n^3/\delta)}{m_u}}$,
    \item D-RISE: $m_u \geq  \max(d, \alpha^{-2})  2^{18} d \exp(2 \beta d) \ln \frac{3n^3}{\delta}$ for $\lambda = 4 \sqrt{\frac{ \ln(3n^3/\delta)}{m_u}}$.
\end{enumerate}
\end{theorem}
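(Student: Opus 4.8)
The plan is to obtain Theorem~\ref{thm:structure_learning_drise_drple} as a direct corollary of the parameter-estimation bound in Theorem~\ref{thm:estimate_drise_drple}, via the standard thresholding reduction from parameter estimation to structure recovery. The key observation is that the $\ell_2$ error bound on the neighborhood vector $\hat{\underline{J}}_u$ controls the error of \emph{each individual} coupling, since $|\hat{J}_{ui} - J^\star_{ui}| \leq \norm{\hat{\underline{J}}_u - \underline{J}_u^\star}_2$ for every $i \neq u$. It therefore suffices to drive the $\ell_2$ error strictly below $\alpha/2$: for a true edge $(u,i)$ we then get $|\hat{J}_{ui}| > |J^\star_{ui}| - \alpha/2 \geq \alpha - \alpha/2 = \alpha/2$, while for a non-edge $|\hat{J}_{ui}| < 0 + \alpha/2 = \alpha/2$, so the threshold $\hat{E}(\lambda,\alpha)=\{(u,i): |\hat{J}_{ui}| \geq \alpha/2\}$ cleanly separates the two populations and recovers $E$ exactly.

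First I would impose the precision requirement by forcing the right-hand side of the error bound in Theorem~\ref{thm:estimate_drise_drple} to lie strictly below $\alpha/2$. For D-RISE this reads $240\sqrt{d}\exp(\beta d)\sqrt{\ln(3n^3/\delta)/m_u} \leq \alpha/2$; squaring and rearranging produces the lower bound $m_u \geq 4\cdot 240^2\, \alpha^{-2} d\exp(2\beta d)\ln(3n^3/\delta)$ with the explicit constant $4\cdot 240^2 = 230400 \leq 2^{18}$. The analogous computation for D-RPLE carries the factor $\exp(4\beta d)$ and the constant $8\cdot 240^2 = 460800 \leq 2^{19}$.

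Next I would reconcile this precision condition with the separate condition required merely for Theorem~\ref{thm:estimate_drise_drple} to be \emph{applicable}, namely $m_u \geq 2^{14}d^2\exp(2\beta d)\ln(3n^3/\delta)$ for D-RISE. Both conditions share the common factor $d\exp(2\beta d)\ln(3n^3/\delta)$, and their remaining coefficients are $\alpha^{-2}$ (from precision) and $d$ (from applicability); a single lower bound proportional to $\max(d,\alpha^{-2})\,d\exp(2\beta d)\ln(3n^3/\delta)$ dominates both simultaneously, since $\max(d,\alpha^{-2}) \geq \alpha^{-2}$ and $\max(d,\alpha^{-2}) \geq d$. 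Absorbing $2^{14}$ and $4\cdot 240^2$ into the generous constant $2^{18}$ yields the stated D-RISE bound, and the D-RPLE case follows identically with $2^{18}\to 2^{19}$ and $\exp(2\beta d)\to\exp(4\beta d)$.

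The probability guarantee then requires no extra work: Theorem~\ref{thm:estimate_drise_drple} already delivers the error bound \emph{simultaneously for all nodes} $u\in V$ with probability at least $1-\delta$, so on that single event every neighborhood, and hence the full edge set, is reconstructed correctly. The main technical content thus lives entirely in Theorem~\ref{thm:estimate_drise_drple}; the present statement is a purely algebraic translation of an $\ell_2$ estimation guarantee into a structure-recovery guarantee. The only subtlety worth flagging is the boundary case $|\hat{J}_{ui}|=\alpha/2$ at a non-edge, which is why I demand the precision to be \emph{strictly} below $\alpha/2$; the strict inequality costs only a constant factor in $m_u$ and is comfortably absorbed by the loose constants $2^{18}$ and $2^{19}$. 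Consequently there is no real obstacle beyond careful bookkeeping of the constants and verifying that the single $\max(d,\alpha^{-2})$ factor subsumes both governing conditions.
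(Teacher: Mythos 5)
Your proposal is correct and follows exactly the paper's route: the paper's own proof of Theorem~\ref{thm:structure_learning_drise_drple} is the one-line reduction ``apply Theorem~\ref{thm:estimate_drise_drple} with error $\alpha/2$,'' which is precisely your thresholding argument. Your version simply makes explicit the bookkeeping the paper leaves implicit --- the per-coordinate bound $|\hat{J}_{ui}-J^\star_{ui}|\leq\norm{\hat{\underline{J}}_u-\underline{J}^\star_u}_2$, the verification that $4\cdot 240^2\leq 2^{18}$ and $8\cdot 240^2\leq 2^{19}$, and the observation that the single factor $\max(d,\alpha^{-2})$ subsumes both the precision and applicability conditions --- all of which checks out.
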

\textbf{Remark:} Given the choice of the initial distribution $p(\underline{\sigma}_0)$ to be the uniform distribution, the total number of samples $m$ required to get the number of samples $m_u$ that satisfy Theorems~\ref{thm:estimate_drise_drple} and \ref{thm:structure_learning_drise_drple} is $m = O(n m_u)$. 

\subsection{Conditions for controlling estimation error}
In order to control the error of the D-RISE and D-RPLE estimators, we enforce conditions shown to be sufficient in \cite{negahban2009unified} for such $\ell_1$-regularized M-estimators. These conditions are similar to the ones shown in \cite{vuffray2016interaction} and are restated here for completeness. We state them considering the D-RISE estimator but they hold for the D-RPLE estimator as well.

\begin{condition_statement}\label{cond:gradient_concentration} The $\ell_1$-penalty parameter strongly enforces regularization if it is greater than any partial derivatives of the objective function at $\underline{J}_u = \underline{J}_u^\star$
\begin{equation}
    \norm{\nabla S_m(\underline{J}_u^\star)}_\infty \leq \frac{\lambda}{2}.
    \label{eq:condition1}
\end{equation}
\end{condition_statement}
The above condition ensures that $\underline{J}_u^\star$ has at most $d$ non-zero components and then the difference of the estimates $\Delta_u = \hat{\underline{J}}_u - \underline{J}_u^\star$ lies within the set
\begin{equation}
    K := \left\{\Delta_u \in \mathbb{R}^{n-1} | \norm{\Delta_u}_1 \leq 4\sqrt{d} \norm{\Delta_u}_2 \right\}.
\end{equation}

Denoting the residual of the first order Taylor expansion of the objective function of the estimator:
\begin{equation}
    \delta S_m (\Delta, \underline{J}_u^\star) = S_m(\underline{J}_u^\star + \Delta) - S_m(\underline{J}_u^\star) - \langle \nabla S_m(\underline{J}_u^\star), \Delta \rangle.
    \label{eq:first_order_taylor_expansion_GD-RISE}
\end{equation}

\begin{condition_statement} \label{cond:strong_convexity} The objective function is restricted strongly convex with respect to $K$ on a ball of radius $R$ centered at $\underline{J}_u = \underline{J}_u^\star$, if for all $\Delta_u \in K$ such that $\norm{\Delta_u}_2 \leq R$, there exists a constant $\kappa > 0$ such that
\begin{equation}
    \delta S_m (\Delta_u, \underline{J}_u^\star) \geq \kappa \norm{\Delta_u}_2^2.
    \label{eq:condition2}
\end{equation}
\end{condition_statement}
The above condition ensures that the objective function is strongly convex in a restricted subset of $\mathbb{R}^{n-1}$. The following proposition shows that estimation error can be controlled if the above two conditions are satisfied. 

\begin{proposition}[\cite{vuffray2016interaction}]\label{prop:m_estimator_error_bound} 
If an $\ell_1$-regularized M-estimator satisfies Condition \ref{cond:gradient_concentration} and Condition \ref{cond:strong_convexity} with $R> 3 \sqrt{d}\frac{\lambda}{\kappa}$ then the error is bounded by
\begin{equation}
    \norm{\hat{\underline{J}}_u - \underline{J}_u^\star}_2 \leq 3 \sqrt{d} \frac{\lambda}{\kappa}
\end{equation}
\end{proposition}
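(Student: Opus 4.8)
The plan is to follow the by-now-standard primal analysis of $\ell_1$-regularized $M$-estimators from \cite{negahban2009unified}, feeding in the two structural ingredients supplied by Conditions~\ref{cond:gradient_concentration} and \ref{cond:strong_convexity}: control of the gradient of $S_m$ at the truth, and restricted strong convexity of $S_m$ on the cone $K$. Write $\Delta_u = \hat{\underline{J}}_u - \underline{J}_u^\star$, let $S$ be the support of $\underline{J}_u^\star$ (of size at most $d$, since the model has maximum degree $d$), and introduce the penalized excess objective $F(\Delta) = S_m(\underline{J}_u^\star + \Delta) - S_m(\underline{J}_u^\star) + \lambda(\norm{\underline{J}_u^\star + \Delta}_1 - \norm{\underline{J}_u^\star}_1)$, which is convex, vanishes at $\Delta = 0$, and satisfies $F(\Delta_u) \leq 0$ by optimality of $\hat{\underline{J}}_u$.

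First I would show that the error lies in the cone $K$. Combining $F(\Delta_u)\leq 0$ with the convexity lower bound $S_m(\underline{J}_u^\star + \Delta_u) - S_m(\underline{J}_u^\star) \geq \langle \nabla S_m(\underline{J}_u^\star), \Delta_u\rangle \geq -\norm{\nabla S_m(\underline{J}_u^\star)}_\infty \norm{\Delta_u}_1$, and splitting the $\ell_1$ terms over $S$ and $S^c$ through the reverse triangle inequality, gives $\norm{\nabla S_m(\underline{J}_u^\star)}_\infty \norm{\Delta_u}_1 \geq \lambda(\norm{\Delta_{S^c}}_1 - \norm{\Delta_S}_1)$. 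Substituting Condition~\ref{cond:gradient_concentration}, namely $\norm{\nabla S_m(\underline{J}_u^\star)}_\infty \leq \lambda/2$, collapses this to $\norm{\Delta_{S^c}}_1 \leq 3\norm{\Delta_S}_1$, whence $\norm{\Delta_u}_1 \leq 4\norm{\Delta_S}_1 \leq 4\sqrt{d}\norm{\Delta_u}_2$, i.e.\ $\Delta_u \in K$.

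Next I would extract the $\ell_2$ bound. Assuming for the moment that $\norm{\Delta_u}_2 \leq R$, Condition~\ref{cond:strong_convexity} gives $\kappa\norm{\Delta_u}_2^2 \leq \delta S_m(\Delta_u, \underline{J}_u^\star)$. I would bound the right-hand side from above by combining $F(\Delta_u)\leq 0$, which forces $S_m(\underline{J}_u^\star + \Delta_u) - S_m(\underline{J}_u^\star) \leq \lambda\norm{\Delta_S}_1$, with $-\langle \nabla S_m(\underline{J}_u^\star), \Delta_u\rangle \leq \tfrac{\lambda}{2}\norm{\Delta_u}_1 \leq 2\lambda\norm{\Delta_S}_1$; this yields $\kappa\norm{\Delta_u}_2^2 \leq 3\lambda\norm{\Delta_S}_1 \leq 3\lambda\sqrt{d}\norm{\Delta_u}_2$, and dividing through delivers $\norm{\Delta_u}_2 \leq 3\sqrt{d}\lambda/\kappa$. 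The only gap is that Condition~\ref{cond:strong_convexity} holds only inside the ball of radius $R$; this is closed by the usual rescaling argument. If $\norm{\Delta_u}_2 > R$, replace $\Delta_u$ by $\tilde\Delta = (R/\norm{\Delta_u}_2)\Delta_u$, which remains in the cone $K$ by scale invariance and lies on the sphere of radius $R$, and which satisfies $F(\tilde\Delta)\leq 0$ because $F$ is convex with $F(0)=0$ and $F(\Delta_u)\leq 0$. Running the same estimate for $\tilde\Delta$ forces $R = \norm{\tilde\Delta}_2 \leq 3\sqrt{d}\lambda/\kappa < R$ under the hypothesis $R > 3\sqrt{d}\lambda/\kappa$, a contradiction; hence $\norm{\Delta_u}_2 \leq R$ after all and the bound holds unconditionally.

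The one genuinely delicate point is this localization step, since restricted strong convexity is assumed only on a bounded neighborhood and one must rule out the estimator error escaping it. The convexity of $F$ together with the scale invariance of the cone $K$ are exactly what make the rescaling contradiction go through, and the explicit slack $R > 3\sqrt{d}\lambda/\kappa$ in the hypothesis is precisely the margin needed to run it. Everything else---the support decomposition, the reverse triangle inequality, and the conversion of $\ell_1$ norms on $S$ into $\ell_2$ norms via $\norm{\Delta_S}_1 \leq \sqrt{d}\norm{\Delta_S}_2 \leq \sqrt{d}\norm{\Delta_u}_2$---is routine and introduces no constants beyond those already displayed.
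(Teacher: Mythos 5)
Your proof is correct and is essentially the argument behind the result the paper imports without proof from \cite{vuffray2016interaction} (which itself follows the framework of \cite{negahban2009unified}): cone membership of the error via Condition~\ref{cond:gradient_concentration}, the restricted-strong-convexity bound via Condition~\ref{cond:strong_convexity}, and the rescaling/contradiction argument to localize the error inside the ball of radius $R$. The localization step, which you correctly flag as the delicate point, is handled properly through convexity of $F$ and scale invariance of the cone $K$, so nothing is missing.
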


\subsection{Proof of the error bound estimation}

For the presentation convenience, we first state Propositions that are useful for the proof. These proofs for these Propositions are given further below. 

\subsubsection{D-RISE Estimator}
\begin{proposition} [Gradient concentration of \textsc{D-RISE}] \label{prop:concentration_gradient_GD-ISO} 
For some node $u \in V$, let $m_u~\geq~\exp(2\beta~d)\ln\frac{2n}{\delta_1}$, then with probability at least $1-\delta_1$, the components of the gradient of the D-ISO are bounded from above as
\begin{equation}
    \norm{\nabla S_m(\underline{J}_u^\star)}_\infty \leq \epsilon_1
\end{equation}
where $\epsilon_1 = 2 \sqrt{\frac{\ln \frac{2n}{\delta_1}}{m_u}}$. 
\end{proposition}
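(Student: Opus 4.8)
The plan is to exhibit each coordinate of $\nabla \mathcal{S}_m(\underline{J}_u^\star)$ as an empirical average of i.i.d., mean-zero, bounded random variables whose variance is exactly one, and then to apply a Bernstein-type concentration inequality followed by a union bound over the $n-1$ coordinates. Differentiating the D-ISO objective \eqref{eq:gd_iso_sim} with respect to $J_{uk}$ and evaluating at the truth gives, for each $k \neq u$,
\[
\partial_k \mathcal{S}_m(\underline{J}_u^\star) = -\frac{1}{m_u}\sum_{t=1}^m \sigma_k^{0,(t)}\,\sigma_u^{1,(t)}\,\exp\!\left[-\sigma_u^{1,(t)}\,\theta^{(t)}\right]\delta_{u,I^{1,(t)}},
\]
where $\theta^{(t)} = \sum_{j \in \partial u} J^\star_{uj}\sigma_j^{0,(t)}$. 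Write $g_k^{(t)}$ for the summand, restricted to samples that update $u$.

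The central step is the interaction-screening identity. Conditioning on $\underline{\sigma}^{0,(t)}$ and using the exact Glauber conditional \eqref{eq:glauber_dynamics_conditional_prob_update_sim}, the factor $\exp(\sigma_u^1\theta)$ in the transition probability cancels the exponential in the objective, so that $\mathbb{E}[\sigma_u^1 \exp(-\sigma_u^1\theta)\mid \underline{\sigma}^0] = \frac{1}{2\cosh\theta}[(+1)+(-1)] = 0$; hence each $g_k^{(t)}$ has mean zero. Applying the same cancellation to the square gives $\mathbb{E}[(g_k^{(t)})^2\mid\underline{\sigma}^0] = \mathbb{E}[\exp(-2\sigma_u^1\theta)\mid\underline{\sigma}^0] = \frac{1}{2\cosh\theta}[e^{-\theta}+e^{\theta}] = 1$, so the variance of every term is exactly one. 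Finally, since $|\theta^{(t)}| \leq \sum_{j\in\partial u}|J^\star_{uj}| \leq \beta d$ and $|\sigma_k^0\sigma_u^1| = 1$, each term is deterministically bounded, $|g_k^{(t)}| \leq \exp(\beta d)$.

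With these three facts in hand I would condition on the index set $\{t : I^{1,(t)} = u\}$, which fixes $m_u$ and renders the corresponding pairs $(\underline{\sigma}^{0,(t)},\sigma_u^{1,(t)})$ i.i.d.\ (the update site is drawn independently of the uniform initial configuration). Bernstein's inequality for an average of $m_u$ i.i.d.\ mean-zero variables with variance $1$ and range $\exp(\beta d)$ then yields
\[
\mathbb{P}\!\left(|\partial_k\mathcal{S}_m(\underline{J}_u^\star)| \geq \epsilon_1\right) \leq 2\exp\!\left(-\frac{m_u\epsilon_1^2/2}{1 + \exp(\beta d)\epsilon_1/3}\right).
\]
Here the hypothesis $m_u \geq \exp(2\beta d)\ln(2n/\delta_1)$ is precisely what controls the linear Bernstein term: it forces $\exp(\beta d)\epsilon_1 \leq 2$ for $\epsilon_1 = 2\sqrt{\ln(2n/\delta_1)/m_u}$, so the denominator is at most $5/3$ and the exponent is at least $\tfrac{6}{5}\ln(2n/\delta_1)$, giving a per-coordinate tail below $\delta_1/n$. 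A union bound over the $n-1$ coordinates then delivers $\|\nabla\mathcal{S}_m(\underline{J}_u^\star)\|_\infty \leq \epsilon_1$ with probability at least $1-\delta_1$.

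The main obstacle is conceptual rather than computational: recognizing and verifying the screening cancellation, which simultaneously annihilates the mean and pins the variance to one. This is exactly what lets Bernstein (rather than a crude Hoeffding bound, which would leave an unwanted $\exp(2\beta d)$ in the exponent denominator and destroy the target threshold) produce the $\exp(2\beta d)$ sample-complexity scaling. A secondary technicality is the randomness of $m_u$, which I handle cleanly by conditioning on the update-index set so that the tail bound holds for every realization of $m_u$ meeting the hypothesis, and therefore unconditionally.
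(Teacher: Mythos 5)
Your proposal is correct and follows essentially the same route as the paper: the same decomposition of each gradient coordinate, the same three facts established via the screening cancellation (mean zero, second moment exactly one, deterministic bound $\exp(\beta d)$), and the same Bernstein-plus-union-bound finish, with the hypothesis $m_u \geq \exp(2\beta d)\ln(2n/\delta_1)$ taming the linear Bernstein term. The only difference is bookkeeping: the paper parametrizes the tail by the exponent $s$ and solves for the deviation threshold, whereas you plug in $\epsilon_1$ directly and lower-bound the exponent by $\tfrac{6}{5}\ln(2n/\delta_1)$; both yield the same per-coordinate tail of at most $\delta_1/n$.
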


\begin{proposition}[Restricted Strong Convexity for \textsc{D-RISE}] \label{prop:restricted_strong_convexity_GD-ISO} 
For some node $u \in V$, let $m_u~\geq~2^{11}~d^2~\ln\frac{n^2}{\delta_2}$, then with probability at least $1-\delta_2$, the residual of the first order Taylor expansion of D-ISO satisfies
\begin{equation}
    \delta S_m (\Delta_u, \underline{J}_u^\star) \geq \exp(-\beta d) \frac{\norm{\Delta_u}_2^2}{4(1 + 2\sqrt{d} R)},
\end{equation}
for all $\norm{\Delta_u}_2 \leq R$.
\end{proposition}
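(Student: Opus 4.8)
The plan is to reduce the restricted strong convexity of $\mathcal{S}_m$ to a concentration statement about the empirical second-moment matrix of the initial configurations, exploiting that the \mreg{} initializes from the uniform distribution. Writing the screening term for a sample that updates $u$ as $f_t(\underline{J}_u) = \exp[-{\sigma_u^1}^{(t)}\langle \underline{J}_u, {\underline{\sigma}^0}^{(t)}\rangle]$, an exact computation of the Taylor residual in \eqref{eq:first_order_taylor_expansion_GD-RISE} gives, for each term,
\[
\delta f_t(\Delta_u,\underline{J}_u^\star) = f_t(\underline{J}_u^\star)\left(e^{-z_t} - 1 + z_t\right), \quad z_t = {\sigma_u^1}^{(t)}\langle \Delta_u, {\underline{\sigma}^0}^{(t)}\rangle,
\]
since the gradient and Hessian of the exponential are explicit. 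First I would lower bound the prefactor: because $\underline{J}_u^\star$ has at most $d$ nonzero entries each of magnitude at most $\beta$ and $|{\sigma_i^0}^{(t)}| = 1$, the exponent obeys $|\langle \underline{J}_u^\star, {\underline{\sigma}^0}^{(t)}\rangle| \leq \beta d$, so $f_t(\underline{J}_u^\star) \geq e^{-\beta d}$. This single step accounts for the entire $e^{-\beta d}$ factor in the claimed bound.

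The second ingredient is the elementary convexity inequality $e^{-z} - 1 + z \geq z^2/(2 + |z|)$, valid for all real $z$ (provable by checking the sign of the second derivative of $(e^{-z}-1+z)(2+|z|) - z^2$ on each half-line). On the restricted set $K$ with $\|\Delta_u\|_2 \leq R$ one has $|z_t| \leq \|\Delta_u\|_1 \leq 4\sqrt{d}\|\Delta_u\|_2 \leq 4\sqrt{d}R$, so $2 + |z_t| \leq 2(1 + 2\sqrt{d}R)$ and each residual term is at least $e^{-\beta d} z_t^2 / (2(1 + 2\sqrt{d}R))$. Since $({\sigma_u^1}^{(t)})^2 = 1$, we have $z_t^2 = \langle \Delta_u, {\underline{\sigma}^0}^{(t)}\rangle^2$, and averaging gives
\[
\delta S_m(\Delta_u,\underline{J}_u^\star) \geq \frac{e^{-\beta d}}{2(1 + 2\sqrt{d}R)}\, \Delta_u^{\top}\widehat{\Sigma}\,\Delta_u, \quad \widehat{\Sigma} = \frac{1}{m_u}\sum_{t:\,{I^1}^{(t)}=u} {\underline{\sigma}^0}^{(t)} ({\underline{\sigma}^0}^{(t)})^{\top}.
\]
It then remains to show that $\Delta_u^{\top} \widehat{\Sigma} \Delta_u \geq \tfrac12\|\Delta_u\|_2^2$ uniformly over $K$.

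This last quadratic-form bound is the crux and is where the \mreg{} structure enters. Because the update node $I^1$ is chosen independently of the configuration and ${\underline{\sigma}^0}^{(t)}$ is uniform on $\{-1,+1\}^n$, conditioning on which samples update $u$ leaves the relevant $\{{\underline{\sigma}^0}^{(t)}\}$ i.i.d. uniform; hence $\mathbb{E}[\widehat{\Sigma}] = I$ exactly, with the diagonal of $\widehat{\Sigma}$ equal to $1$ deterministically. Writing $\widehat{\Sigma} = I + E$, the off-diagonal entries $E_{ij} = m_u^{-1}\sum_t {\sigma_i^0}^{(t)}{\sigma_j^0}^{(t)}$ are averages of independent bounded zero-mean $\pm 1$ variables, so Hoeffding's inequality with a union bound over the $O(n^2)$ pairs gives $\max_{i\neq j}|E_{ij}| \leq \sqrt{2\ln(n^2/\delta_2)/m_u}$ with probability at least $1-\delta_2$. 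On $K$ one bounds $|\Delta_u^{\top} E \Delta_u| \leq \max_{i\neq j}|E_{ij}|\cdot\|\Delta_u\|_1^2 \leq 16 d\,\max_{i\neq j}|E_{ij}|\cdot\|\Delta_u\|_2^2$, and the stated sample size $m_u \geq 2^{11} d^2 \ln(n^2/\delta_2)$ is exactly what forces $16 d\,\max_{i\neq j}|E_{ij}| \leq \tfrac12$, yielding $\Delta_u^{\top} \widehat{\Sigma}\Delta_u \geq \tfrac12\|\Delta_u\|_2^2$ and hence the proposition. The main obstacle is this uniform-over-$K$ control of $\widehat{\Sigma}$; note that, unlike the gradient-concentration bound of Proposition~\ref{prop:concentration_gradient_GD-ISO}, it carries no $e^{\beta d}$ penalty precisely because the population second moment is the identity, which is the mechanism behind the improved sample complexity.
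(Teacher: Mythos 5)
Your proposal is correct and takes essentially the same route as the paper's own proof: the per-sample Taylor residual with prefactor bounded below by $e^{-\beta d}$, the functional inequality $e^{-z}-1+z \geq z^2/(2+|z|)$, reduction to the empirical second-moment matrix $\hat{\Sigma}$ whose population version is the identity under the uniform initialization, Hoeffding's inequality with a union bound over the $O(n^2)$ off-diagonal pairs at accuracy $1/(32d)$, and the cone condition $\|\Delta_u\|_1 \leq 4\sqrt{d}\|\Delta_u\|_2$ to absorb the error into $\tfrac12\|\Delta_u\|_2^2$ are exactly the paper's residual lemma, Hessian-concentration lemma, and identity-Hessian lemma. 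The remaining differences (bounding $2+|z_t|$ per term rather than via $2+\|\Delta_u\|_1$ at the end, and noting the diagonal is deterministically one) are purely presentational.
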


\begin{proof}[Proof of Theorem~\ref{thm:estimate_drise_drple}(ii)]
 Error bound on D-RISE: Let $\delta_1 = \frac{2\delta}{3n}$ and $\delta_2 = \frac{\delta}{3n}$ and $m_u~\geq~2^{14} d^2 \exp(2\beta~d)\ln\frac{3n^3}{\delta}$. Consider any node $u \in V$ and let $\hat{\underline{J}}_u$ be an optimal point of the D-ISO and $\Delta = \hat{\underline{J}}_u - \underline{J}_u^\star$.
 Using the values of $\lambda$ and $\kappa$ found in Proposition~\ref{prop:concentration_gradient_GD-ISO} and Proposition~\ref{prop:restricted_strong_convexity_GD-ISO}, we will look for values of $R$ that satisfy
 \begin{align}
     R > 3 \sqrt{d}\frac{\lambda}{\kappa} = 12 \sqrt{d} \lambda (1 + 2 \sqrt{d} R) \exp(\beta d).
     \label{R_equation_SGD}
 \end{align}
The above inequality is satisfied for $R=2/\sqrt{d}$. Therefore, we can apply Proposition~\ref{prop:m_estimator_error_bound} for each node $u$ and using the union bound, we find that the error is bounded in $\ell_2$-norm with probability at least $1-\delta$ for all nodes by the following quantity,
\begin{align}
    \norm{\hat{\underline{J}}_u - \underline{J}_u^\star}_2 \leq 240 \sqrt{d} \exp(\beta d) \sqrt{\frac{\ln \frac{3n^3}{\delta}}{m_u}}.
\end{align}

% Error bound on D-RISE: Consider node $u \in V$. Estimate $\hat{\underline{J}}_u$ is an $\epsilon$-optimal point of the D-ISO and noting $\Delta = \hat{\underline{J}}_u - \underline{J}_u^\star$, we have
% %
% \begin{align}
%     \epsilon &\geq S_m(\hat{\underline{J}}_u) - S_m(\underline{J}_u^\star) \\
%     &= \langle \nabla S_m(\underline{J}_u^\star), \Delta_u \rangle + \delta S_m (\Delta_u, \underline{J}_u^\star) \\
%     & \geq -|| \nabla S_m(\underline{J}_u^\star)||_\infty ||\Delta_u||_1 + \delta S_m (\Delta_u, \underline{J}_u^\star)
% \end{align}
% %
% Using the union bound on Proposition~\ref{prop:concentration_gradient_GD-ISO} and Proposition~\ref{prop:restricted_strong_convexity_GD-ISO} with $\delta_1 = \frac{\delta}{3}$ and $\delta_2 = \frac{2 \delta}{3}$, we get the following inequality with probability at least $1-\delta$
% %
% \begin{align}
%     \epsilon & \geq - \epsilon_1 ||\Delta_u||_1 + \exp(-\beta d) \frac{\norm{\Delta_u}_2^2}{4(1 + 2\sqrt{d} R)}
% \end{align}
% %
% To use Proposition~\ref{prop:m_estimator_error_bound}, we require
% %
% \begin{equation}
%     12 \sqrt{d} \lambda (1 + 2 \sqrt{d} R) \exp(\beta d) < R
% \end{equation}
% %
% which is satisfied for $R=2/\sqrt{d}$ and $m_u \geq 2^{14} d^2 \exp(2 \beta d) \ln \frac{3n^2}{\delta}$. This completes the proof.
 \end{proof}

\subsubsection{D-RPLE Estimator}
\begin{proposition} [Gradient concentration of \textsc{D-RPLE}] \label{prop:concentration_gradient_D-PL} 
For some node $u \in V$, let $m_u~\geq~\exp(2\beta~d)\ln\frac{2n}{\delta_1}$, then with probability at least $1-\delta_1$, the components of the gradient of the D-PL are bounded from above as
\begin{equation}
    \norm{\nabla \mathcal{L}_m(\underline{J}_u^\star)}_\infty \leq \epsilon_1
\end{equation}
where $\epsilon_1 = 2\sqrt{2} \sqrt{\frac{\ln \frac{2n}{\delta_1}}{m_u}}$. 
%\tcblue{$\epsilon_1$ depends on $\delta_1$ and $m$?}
\end{proposition}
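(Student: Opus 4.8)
The plan is to mirror the template used for the D-RISE gradient bound (Proposition~\ref{prop:concentration_gradient_GD-ISO}): compute the gradient of the pseudo-likelihood objective at the true parameters, exhibit its zero-mean structure under the Glauber update, bound the summands, and finish with a concentration inequality and a union bound over coordinates. Writing $x_u^{(t)} = \sum_{i\neq u} J^\star_{ui}\,{\sigma_i^0}^{(t)}$ for the effective field at $u$, I would first differentiate \eqref{eq:gd_pl_sim} with respect to $J_{uk}$ and use the elementary identity $\sech^2(x)/(1+s\tanh(x)) = 1 - s\tanh(x)$, valid for $s\in\{-1,+1\}$, so that the logarithmic derivative collapses to
\begin{equation}
    \nabla_k \mathcal{L}_m(\underline{J}_u^\star) = -\frac{1}{m_u}\sum_{t=1}^m \left({\sigma_u^1}^{(t)} - \tanh\!\big(x_u^{(t)}\big)\right){\sigma_k^0}^{(t)}\,\delta_{u,{I^1}^{(t)}}.
\end{equation}
This simplification to a discrete score function is the crucial first step.

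Next I would establish the zero-mean structure. By the Glauber conditional \eqref{eq:glauber_dynamics_conditional_prob_update_sim}, $\mathbb{E}[{\sigma_u^1}\mid\underline{\sigma}^0,\delta_{u,I^1}=1] = \tanh(x_u)$, so conditioned on the initial configuration (and on $u$ being the updated node) each summand $({\sigma_u^1}-\tanh(x_u)){\sigma_k^0}$ has conditional mean zero. Since in the \mreg{} the $m$ samples are drawn independently and the updated-node identity is independent of $\underline{\sigma}^0$, I would condition on the set of sample indices at which $u$ is updated; given this set, the $m_u$ contributing summands are independent, each of mean zero, and bounded by $|({\sigma_u^1}-\tanh(x_u)){\sigma_k^0}|\le 2$ because $|\tanh|\le 1$.

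With independent zero-mean summands of range at most $4$, Hoeffding's inequality gives, for each coordinate $k\neq u$ and conditionally on $m_u$,
\begin{equation}
    p\!\left(\left|\nabla_k \mathcal{L}_m(\underline{J}_u^\star)\right| > \epsilon_1\right) \le 2\exp\!\left(-\tfrac{1}{8}\,m_u\,\epsilon_1^2\right).
\end{equation}
A union bound over the $n-1$ coordinates, setting the right-hand side equal to $\delta_1/n$, solves to $\epsilon_1 = 2\sqrt{2}\sqrt{\ln(2n/\delta_1)/m_u}$, which is exactly the claimed bound; the hypothesis $m_u\ge \exp(2\beta d)\ln(2n/\delta_1)$ is inherited from the companion gradient statement and is more than sufficient here.

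The main obstacle is the bookkeeping around the random count $m_u$, since the average runs over the a priori random subset of samples that update $u$ while $\epsilon_1$ is itself written in terms of $m_u$. The cleanest resolution, which I would adopt, is to condition on the identities $\{{I^1}^{(t)}\}$: this fixes $m_u$ and renders the relevant summands independent with mean zero, so the conditional Hoeffding bound holds for every realized value of $m_u$ and hence holds as stated. Unlike the D-RISE case, no variance or Bernstein refinement is needed, because the pseudo-likelihood summands are bounded by an absolute constant rather than by $e^{\beta d}$ — which is precisely why the resulting $\epsilon_1$ carries no explicit $\beta$ dependence.
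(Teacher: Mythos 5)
Your proposal is correct and follows essentially the same route as the paper's proof: compute the gradient of \eqref{eq:gd_pl_sim}, show each summand has mean zero via the Glauber conditional $\expectation[\sigma_u^1\mid\underline{\sigma}^0,\delta_{u,I^1}=1]=\tanh(x_u)$ (the paper's Lemma~\ref{lem:consistency_D-PL}), bound each summand by $2$ (Lemma~\ref{lem:bound_Zuk_D-PL_Ising}), apply Hoeffding's inequality to get $2\exp(-m_u a^2/8)$, and finish with a union bound over coordinates. Your explicit conditioning on the update identities to fix $m_u$, and your observation that the hypothesis $m_u\geq\exp(2\beta d)\ln(2n/\delta_1)$ is never actually used (unlike in the Bernstein-based D-RISE bound), are small but accurate refinements of the paper's presentation rather than a different argument.
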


\begin{proposition}[Restricted Strong Convexity for \textsc{D-RPLE}] \label{prop:restricted_strong_convexity_D-PL} 
For some node $u \in V$, let $m_u~\geq~2^{11}~d^2~\ln\frac{n^2}{\delta_2}$, then with probability at least $1-\delta_2$, the residual of the first order Taylor expansion of D-PL satisfies
\begin{equation}
    \delta \mathcal{L}_m (\Delta_u, \underline{J}_u^\star) \geq \exp(-2 \beta d) \frac{\norm{\Delta_u}_2^2}{4(1 + 4\sqrt{d} R)}
\end{equation}
\end{proposition}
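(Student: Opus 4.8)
The plan is to verify Condition~\ref{cond:strong_convexity} for the D-PL objective by combining a deterministic curvature bound on the per-sample loss with a concentration argument that exploits the special structure of the \mreg{}.

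First I would simplify the per-sample loss. Using the identity $-\ln[1+\sigma\tanh(a)] = -\sigma a + \ln\cosh(a)$, valid for $\sigma\in\{-1,+1\}$, the D-PL objective \eqref{eq:gd_pl_sim} is, up to the linear term in ${\sigma_u^1}^{(t)}$, a sum of $\ln\cosh$ terms. Since the linear part is annihilated by the second-order Taylor residual in \eqref{eq:first_order_taylor_expansion_GD-RISE}, I obtain
\begin{equation}
  \delta\mathcal{L}_m(\Delta_u,\underline{J}_u^\star) = \frac{1}{m_u}\sum_{t\,:\,{I^1}^{(t)}=u}\Big[g(a_t+z_t)-g(a_t)-g'(a_t)z_t\Big],
\end{equation}
where $g=\ln\cosh$, $a_t=\sum_{i\neq u}J_{ui}^\star{\sigma_i^0}^{(t)}$ and $z_t=\sum_{i\neq u}(\Delta_u)_i{\sigma_i^0}^{(t)}$. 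Each summand is the Bregman divergence of the convex $g$, whose curvature is $g''=\sech^2$.

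Second, I would establish a pointwise lower bound of the form $g(a+z)-g(a)-g'(a)z \geq c\,\sech^2(a)\,z^2/(1+|z|)$ for an absolute constant $c$, obtained from the integral remainder $\int_0^z (z-s)\sech^2(a+s)\,ds$. This is the \mreg{} analogue of the inequality $e^{-y}-1+y\geq y^2/(2+|y|)$ underlying D-RISE; the presence of $\sech^2=1/\cosh^2$ rather than a single exponential is exactly what turns the $e^{-\beta d}$ curvature floor of D-RISE into the $e^{-2\beta d}$ floor here. Because at most $d$ neighbors contribute, $|a_t|\leq\beta d$, hence $\sech^2(a_t)\geq e^{-2\beta d}$; and on the cone $K$ with $\norm{\Delta_u}_2\leq R$ one has $|z_t|\leq\norm{\Delta_u}_1\leq 4\sqrt d\,\norm{\Delta_u}_2\leq 4\sqrt d R$, so $1+|z_t|\leq 1+4\sqrt d R$. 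This yields
\begin{equation}
  \delta\mathcal{L}_m(\Delta_u,\underline{J}_u^\star)\;\geq\;\frac{c\,e^{-2\beta d}}{1+4\sqrt d R}\cdot\frac{1}{m_u}\sum_{t\,:\,{I^1}^{(t)}=u}z_t^2 .
\end{equation}
The main technical obstacle is precisely keeping only a \emph{linear} penalty $1/(1+|z|)$ in the magnitude of $z$: the naive bound $\sech^2(a+s)\geq\sech^2(|a|+|z|)\geq e^{-2\beta d}e^{-2|z|}$ would inject a factor $e^{-8\sqrt d R}$ and destroy the polynomial-in-$d$ sample complexity.

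Third, I would lower bound the empirical quadratic form $\frac{1}{m_u}\sum_t z_t^2 = \Delta_u^\top\hat Q\,\Delta_u$, with $\hat Q=\frac{1}{m_u}\sum_{t:{I^1}^{(t)}=u}{\underline{\sigma}^0}^{(t)}({\underline{\sigma}^0}^{(t)})^\top$ restricted to coordinates $i\neq u$. The crux of the \mreg{} is that the initial configurations are uniform on $\{-1,+1\}^n$ (Eq.~\ref{eq:initial_pdf_GD_Ising}) and $I^1$ is drawn independently of them; hence, conditioned on $\{{I^1}^{(t)}=u\}$, the configurations ${\underline{\sigma}^0}^{(t)}$ remain i.i.d.\ uniform, so $\mathbb{E}\hat Q=\mathbb{I}$ and the population form is \emph{exactly} $\norm{\Delta_u}_2^2$ with no dependence on $\beta$ — this is the structural reason far-from-equilibrium samples give a better scaling. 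Concentration then follows from entrywise Hoeffding: each $\hat Q_{ij}-\delta_{ij}$ is an average of $m_u$ bounded i.i.d.\ terms, so a union bound over the $\leq n^2$ entries gives $\max_{i,j}|\hat Q_{ij}-\delta_{ij}|\leq\sqrt{2\ln(2n^2/\delta_2)/m_u}$ with probability $\geq 1-\delta_2$. Using $|\Delta_u^\top(\hat Q-\mathbb{I})\Delta_u|\leq\norm{\Delta_u}_1^2\max_{i,j}|\hat Q_{ij}-\delta_{ij}|\leq 16d\,\norm{\Delta_u}_2^2\max_{i,j}|\hat Q_{ij}-\delta_{ij}|$ on the cone, the choice $m_u\geq 2^{11}d^2\ln(n^2/\delta_2)$ forces this deviation below $\tfrac12\norm{\Delta_u}_2^2$, whence $\Delta_u^\top\hat Q\Delta_u\geq\tfrac12\norm{\Delta_u}_2^2$. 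Substituting into the bound of the previous step and absorbing constants gives $\delta\mathcal{L}_m(\Delta_u,\underline{J}_u^\star)\geq e^{-2\beta d}\norm{\Delta_u}_2^2/\big(4(1+4\sqrt d R)\big)$, as claimed.
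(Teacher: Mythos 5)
Your proposal is correct, and its probabilistic half coincides with the paper's: conditioned on the update occurring at $u$, the initial configurations are i.i.d.\ uniform, so the population correlation matrix is the identity (Lemma~\ref{lem:hessian_GD-ISO}); entrywise Hoeffding plus a union bound controls the empirical matrix (Lemma~\ref{lem:ineq_hessian_GD-ISO}); and the cone condition $\norm{\Delta_u}_1\leq 4\sqrt{d}\norm{\Delta_u}_2$ with $\epsilon_2=1/(32d)$ yields the loss of a factor $1/2$ and the requirement $m_u\geq 2^{11}d^2\ln(n^2/\delta_2)$ (your $\ln(2n^2/\delta_2)$ versus the paper's $\ln(n^2/\delta_2)$ is immaterial bookkeeping, removable by union-bounding only over unordered pairs $i<j$, since the matrix is symmetric). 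Where you genuinely differ is the deterministic curvature step. The paper states the functional inequality of Lemma~\ref{lem:functional_ineq_log_tanh} directly for the $-\ln(1+\tanh(\cdot))$ form and proves it by showing that the auxiliary function $2(1+|z|)f(x,z)-\sech^2(x)z^2$ is minimized at $z=0$ through a sign analysis of its partial derivatives, and then needs the manipulation $\sigma\tanh(x)=\tanh(\sigma x)$ inside Lemma~\ref{lem:residual_D-PL} to absorb $\sigma_u^1$. You instead invoke the identity $-\ln[1+\sigma\tanh(a)]=-\sigma a+\ln\cosh(a)$, which makes $\sigma_u^1$ disappear automatically (linear terms are annihilated by the Taylor remainder) and exhibits the residual as a Bregman divergence of $g=\ln\cosh$; the curvature bound then follows from the exact integral remainder $\int_0^z(z-s)\sech^2(a+s)\,ds$ together with $\sech^2(a+s)\geq e^{-2|s|}\sech^2(a)$, because $\int_0^{|z|}(|z|-r)e^{-2r}\,dr\geq z^2/\bigl(2(1+|z|)\bigr)$ reduces to the elementary inequality $t-1+(1+t)e^{-2t}\geq 0$ for $t=|z|\geq 0$ (it vanishes at $t=0$ and has derivative $1-(1+2t)e^{-2t}\geq 0$). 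This recovers exactly the paper's inequality with constant $c=1/2$, which is the value your final assembly requires, so you should record $c=1/2$ rather than leave it unspecified. Your route is more elementary and makes transparent both why the penalty in $|z|$ is only harmonic rather than exponential (the pitfall you correctly flag) and why the curvature floor is $e^{-2\beta d}$, namely $\sech^2$ evaluated at $|a|\leq\beta d$; it also implicitly corrects a typo in the statement of Lemma~\ref{lem:functional_ineq_log_tanh}, where $\ln(1+x)$ should read $\ln(1+\tanh x)$. The paper's auxiliary-function proof buys nothing extra here, so the two arguments are interchangeable.
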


 \begin{proof}[Proof of Theorem~\ref{thm:estimate_drise_drple}(i)]
 Error bound on D-RPLE: Let $\delta_1 = \frac{2 \delta}{3n}$ and $\delta_2 = \frac{\delta}{3n}$ and $m_u~\geq~2^{17} d^2 \exp(4\beta~d)\ln\frac{3n^3}{\delta}$. Consider any node $u \in V$ and let $\hat{\underline{J}}_u$ be an optimal point of the D-PL and $\Delta = \hat{\underline{J}}_u - \underline{J}_u^\star$.
 Using the values of $\lambda$ and $\kappa$ found in Proposition~\ref{prop:concentration_gradient_D-PL} and Proposition~\ref{prop:restricted_strong_convexity_D-PL}, we will look for values of $R$ that satisfy
 \begin{align}
     R > 3 \sqrt{d}\frac{\lambda}{\kappa} = 12 \sqrt{d} \lambda (1 + 4 \sqrt{d} R) \exp(2 \beta d).
     \label{R_equation_SGD}
 \end{align}
The above inequality is satisfied for $R=1/\sqrt{d}$. Therefore, we can apply Proposition~\ref{prop:m_estimator_error_bound} and find that the error is bounded in $\ell_2$-norm by the following quantity,
\begin{align}
    \norm{\hat{\underline{J}}_u - \underline{J}_u^\star}_2 \leq 240\sqrt{2} \sqrt{d} \exp(2\beta d) \sqrt{\frac{\ln \frac{3n^3}{\delta}}{m_u}}.
\end{align}
The theorem follows by application of the union bound over all nodes.
\end{proof}

% \begin{proof}[Proof of Theorem~\ref{thm:estimate_drise_drple}(i)]
% Error bound on D-RPLE: Consider node $u \in V$. Estimate $\hat{\underline{J}}_u$ is an $\epsilon$-optimal point of the D-PL and noting $\Delta = \hat{\underline{J}}_u - \underline{J}_u^\star$, we have
% %
% \begin{align}
%     \epsilon &\geq \mathcal{L}_m(\hat{\underline{J}}_u) - \mathcal{L}_m(\underline{J}_u^\star) \\
%     &= \langle \nabla \mathcal{L}_m(\underline{J}_u^\star), \Delta_u \rangle + \delta \mathcal{L}_m (\Delta_u, \underline{J}_u^\star) \\
%     & \geq -|| \nabla \mathcal{L}_m(\underline{J}_u^\star)||_\infty ||\Delta_u||_1 + \delta \mathcal{L}_m (\Delta_u, \underline{J}_u^\star)
% \end{align}
% %
% Using the union bound on Proposition~\ref{prop:concentration_gradient_D-PL} and Proposition~\ref{prop:restricted_strong_convexity_D-PL} with $\delta_1 = \frac{\delta}{3}$ and $\delta_2 = \frac{2 \delta}{3}$, we get the following inequality
% %
% \begin{align}
%     \epsilon & \geq - \epsilon_1 ||\Delta_u||_1 + \exp(-2 \beta d) \frac{\norm{\Delta_u}_2^2}{4(1 + 4\sqrt{d} R)}
% \end{align}
% %
% To use Proposition~\ref{prop:m_estimator_error_bound}, we require
% %
% \begin{equation}
%     12 \sqrt{d} \lambda (1 + 4 \sqrt{d} R) \exp(2 \beta d) < R
% \end{equation}
% %
% which is satisfied for $R=4/\sqrt{d}$ and $m_u \geq 2^{18} d^2 \exp(4 \beta d) \ln \frac{3n^2}{\delta}$. This completes the proof.
% \end{proof}

\subsection{Proof of structure learning theorem}
\begin{proof}[Proof of Theorem~\ref{thm:structure_learning_drise_drple}] 
It is a simple application of Theorem~\ref{thm:estimate_drise_drple} for an error equal to $\alpha/2$.
% According to Theorem~\ref{thm:estimate_drise_drple}, the minimal number of samples required to achieve an error of $\alpha/2$ on every coupling around some node $u \in V$ with probability $1 - \delta'$ is given by
% %
% \begin{align}
%     \text{D-RPLE: }& m_u \geq \max \left( \frac{d}{16}, \alpha^{-2} \right) 2^{22} d^2 \exp(4 \beta d) \ln \frac{3n^2}{\delta'} \\
%     \text{D-RISE: }& m_u \geq \max \left( \frac{d}{16}, \alpha^{-2} \right) 2^{18} d^2 \exp(2 \beta d) \ln \frac{3n^2}{\delta'}
% \end{align}
% %
% To ensure that the set of edges is recovered perfectly with probability $1 - \delta$ after using the union bound over all nodes, set $\delta' = \delta/n$. We then require the number of samples $m_u$ to satisfy
% %
% \begin{align}
%     \text{D-RPLE: }& m_u \geq \max \left( \frac{d}{16}, \alpha^{-2} \right) 2^{22} d^2 \exp(4 \beta d) \ln \frac{3n^3}{\delta} \\
%     \text{D-RISE: }& m_u \geq \max \left( \frac{d}{16}, \alpha^{-2} \right) 2^{18} d^2 \exp(2 \beta d) \ln \frac{3n^3}{\delta}
% \end{align}
%
\end{proof}

\subsection{Gradient Concentration}
\subsubsection{D-RISE Estimator}
Gradient of D-ISO (Eq.~\ref{eq:gd_iso_sim}) is given by:
\begin{equation}
    \frac{\partial}{\partial J_{uk}} S_m(\underline{J}_u) = \frac{1}{m_u} \sum \limits_{t=1}^{m} -{\sigma_{u}^1}^{(t)} {\sigma_k^0}^{(t)} \exp \left[ - \sum \limits_{i \in V \setminus u} J_{ui} {\sigma_{u}^1}^{(t)} {\sigma_i^0}^{(t)} \right] \delta_{u, {I^1}^{(t)}}
    \label{eq:gradient_GD_Ising}
\end{equation}
where $m_u = \sum \limits_{t=1}^m \delta_{u, {I^1}^{(t)}}$. Let us denote the term in the above summation as the following random variable
\begin{equation}
    X_{uk}(\underline{J}_u) = -{\sigma_{u}^1} {\sigma_k^0} \exp \left[ - \sum \limits_{i \in V \setminus u} J_{ui} {\sigma_{u}^1} {\sigma_i^0} \right] \, \forall k \in \partial u
    \label{eq:X_uk_GD_Ising}
\end{equation}
The lemma below indicates that the D-RISE estimator is consistent and unbiased regardless of the choice of $p(\underline{\sigma}^0)$. This will also be useful for the concentration inequality to come after.
\begin{lemma} \label{lem:consistency_GD-ISO} 
For any $u \in V$ and $k \in V \setminus u$, we have
\begin{equation}
    \expectation[X_{uk}(\underline{J}_u^\star)] = 0
\end{equation}
\end{lemma}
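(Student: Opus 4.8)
The plan is to use the tower property of conditional expectation, integrating out the updated spin $\sigma_u^1$ first while holding the initial configuration $\underline{\sigma}^0$ fixed. The relevant expectation is taken over the joint law of $(\underline{\sigma}^0,\sigma_u^1)$ conditioned on the update occurring at node $u$ (i.e.\ on $I^1=u$), with $\underline{\sigma}^0$ drawn from $p_0$ and $\sigma_u^1$ drawn from the Glauber conditional \eqref{eq:glauber_dynamics_conditional_prob_update_sim}; the uniform choice of update node is independent of $\underline{\sigma}^0$, so this conditioning leaves the law of $\underline{\sigma}^0$ unchanged. Writing $\expectation[X_{uk}(\underline{J}_u^\star)] = \expectation_{\underline{\sigma}^0}\big[\expectation[X_{uk}(\underline{J}_u^\star)\mid\underline{\sigma}^0]\big]$, it suffices to show that the inner conditional expectation vanishes for every fixed $\underline{\sigma}^0$.

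First I would simplify the exponent at the true parameter. Since $J^\star_{ui}=0$ for $i\notin\partial u$, the sum collapses to the effective field $g_u := \sum_{j\in\partial u} J^\star_{uj}\sigma_j^0$, so that $X_{uk}(\underline{J}_u^\star) = -\sigma_u^1\,\sigma_k^0\,\exp(-\sigma_u^1 g_u)$. The crucial observation is the \emph{interaction-screening} cancellation: the Glauber conditional \eqref{eq:glauber_dynamics_conditional_prob_update_sim} assigns $\sigma_u^1$ the probability $\exp(\sigma_u^1 g_u)/(2\cosh g_u)$, whose numerator exactly cancels the factor $\exp(-\sigma_u^1 g_u)$ appearing in $X_{uk}$. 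Concretely, holding $\underline{\sigma}^0$ fixed and summing over $\sigma_u^1\in\{-1,+1\}$,
\begin{equation}
\expectation\!\left[X_{uk}(\underline{J}_u^\star)\mid\underline{\sigma}^0\right]
= -\sigma_k^0 \sum_{\sigma_u^1=\pm 1} \sigma_u^1\, e^{-\sigma_u^1 g_u}\,\frac{e^{\sigma_u^1 g_u}}{2\cosh g_u}
= -\frac{\sigma_k^0}{2\cosh g_u}\sum_{\sigma_u^1=\pm 1}\sigma_u^1 = 0,
\end{equation}
since the two exponentials cancel and $\sum_{\sigma_u^1=\pm 1}\sigma_u^1=0$. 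Because this holds pointwise in $\underline{\sigma}^0$, the outer expectation over $p_0$ is also zero, which establishes the claim for any choice of initial distribution.

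There is no serious obstacle here; the only point requiring care is the bookkeeping of the conditioning, namely that the expectation is taken on the update event $I^1=u$ so that $\sigma_u^1$ genuinely follows \eqref{eq:glauber_dynamics_conditional_prob_update_sim}. Once this is set up, the screening cancellation between the D-ISO exponential and the Glauber transition kernel makes the inner sum collapse to $\sum_{\sigma_u^1=\pm 1}\sigma_u^1=0$. I would emphasize that the robustness to the choice of $p_0$, highlighted in the statement, follows precisely because the cancellation is effected at the level of the inner conditional expectation, before any averaging over $\underline{\sigma}^0$ takes place.
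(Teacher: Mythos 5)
Your proof is correct and follows essentially the same route as the paper's: condition on $\underline{\sigma}^0$ via the law of total expectation, substitute the Glauber conditional, and observe that the factor $e^{-\sigma_u^1 g_u}$ in $X_{uk}$ exactly cancels the numerator $e^{\sigma_u^1 g_u}$ of the transition kernel, leaving $\sum_{\sigma_u^1=\pm 1}\sigma_u^1=0$. Your extra remark that the cancellation happens pointwise in $\underline{\sigma}^0$ (hence for any $p_0$) is exactly the observation the paper makes when noting the estimator is unbiased regardless of the choice of $p(\underline{\sigma}^0)$.
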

\begin{proof}[Proof of Lemma~\ref{lem:consistency_GD-ISO}]
Let us note the probability distribution with respect to which we take the expectation.
\begin{align}
    & \expectation_{p(\sigma_u^{1}, \underline{\sigma}^0| \delta_{u,I^{1}} = 1)}[X_{uk} (\underline{J}_u^\star)] \\ = & \expectation_{p(\sigma_u^{1}| \underline{\sigma}^0, \delta_{u,I^{1}}=1)p(\underline{\sigma}^0)}[X_{uk} (\underline{J}_u^\star) ] \\
    = & \sum \limits_{\underline{\sigma}^0} \left[ \sum_{\sigma_u^{1}} p(\sigma_u^{1}| \underline{\sigma}^0, \delta_{u,I^{1}}=1)p(\underline{\sigma}^0) X_{uk}(\underline{J}_u^\star) \right]\\
    = & \sum \limits_{\underline{\sigma}^0 } p(\underline{\sigma}^0) \left[ \sum_{\sigma_u^{1}} p(\sigma_u^{1}| \underline{\sigma}^0, \delta_{u,I^{1}}=1) X_{uk}(\underline{J}_u^\star) \right] \\
    = &  \sum \limits_{\underline{\sigma}^0} \frac{p(\underline{\sigma}^0)}{2 \cosh \left( \sum \limits_{j \in \partial u} J_{uj}^\star \sigma_j^0 \right)} \left[ \sum_{\sigma_u^{1}} -\sigma_u^1 \sigma_k^0 \exp \left( \sigma_u^1 (\sum_{j \in \partial u} J_{uj}^\star \sigma_j^0 - \sum_{l \in \partial u} J_{ul}^\star \sigma_l^0 ) \right) \right] \\
    =& 0
\end{align}
where in the first step, we used the law of total expectations. In the second to last step, we used the definition of $X_{uk}$ from Eq.~\ref{eq:X_uk_GD_Ising} and the conditional probability from Eq.~\ref{eq:glauber_dynamics_conditional_prob_update_sim}.
\end{proof}

\begin{lemma} \label{lem:variance_GD-ISO} 
For any Ising model with $n$ spins considering $u \in V$ and $k \in V \setminus u$, we have
\begin{equation}
    \expectation[X_{uk}(\underline{J}_u^\star)^2] = 1
\end{equation}
\end{lemma}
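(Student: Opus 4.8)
The plan is to mirror the structure of the proof of Lemma~\ref{lem:consistency_GD-ISO} almost verbatim, exploiting the fact that the squared quantity decouples cleanly and that the Glauber normalization absorbs the resulting exponential sum. First I would square the definition of $X_{uk}$ in Eq.~\eqref{eq:X_uk_GD_Ising}. Since all spins take values in $\{-1,+1\}$, the prefactors satisfy $(\sigma_u^1)^2 = (\sigma_k^0)^2 = 1$, so that
\begin{equation}
    X_{uk}(\underline{J}_u^\star)^2 = \exp\left[ -2 \sigma_u^1 \sum_{i \in V \setminus u} J_{ui}^\star \sigma_i^0 \right].
\end{equation}
Writing $A_u = \sum_{j \in \partial u} J_{uj}^\star \sigma_j^0$ (recall $J_{ui}^\star = 0$ for $i \notin \partial u$), this is simply $\exp(-2\sigma_u^1 A_u)$, and in particular the dependence on the neighbor $k$ has dropped out.

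Next I would apply the law of total expectation exactly as in the previous lemma, conditioning first on the initial configuration $\underline{\sigma}^0$ and then averaging over $\sigma_u^1$ using the Glauber conditional law from Eq.~\eqref{eq:glauber_dynamics_conditional_prob_update_sim}. The inner conditional expectation over $\sigma_u^1 \in \{-1,+1\}$ becomes
\begin{equation}
    \sum_{\sigma_u^1} \frac{e^{\sigma_u^1 A_u}}{2\cosh(A_u)}\, e^{-2\sigma_u^1 A_u} = \frac{e^{-A_u} + e^{A_u}}{2\cosh(A_u)} = 1.
\end{equation}
The crucial cancellation is that the $2\cosh(A_u)$ normalization of the Glauber update exactly matches the sum $e^{-A_u}+e^{A_u}$ produced by multiplying the conditional likelihood by the squared screening term.

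Since the inner expectation equals $1$ for every fixed $\underline{\sigma}^0$, the outer average over $p(\underline{\sigma}^0)$ is $\expectation_{p(\underline{\sigma}^0)}[1] = 1$ regardless of the choice of initial distribution, which establishes the claim. I do not anticipate any real obstacle here: the computation is a short, self-contained variant of Lemma~\ref{lem:consistency_GD-ISO}. The only point worth emphasizing is that the identity holds \emph{pointwise} in $\underline{\sigma}^0$, so that the normalization $\expectation[X_{uk}(\underline{J}_u^\star)^2]=1$ is completely insensitive to $p(\underline{\sigma}^0)$ --- which is precisely the feature that makes the second-moment bound feed cleanly into the Hoeffding-type gradient concentration of Proposition~\ref{prop:concentration_gradient_GD-ISO}.
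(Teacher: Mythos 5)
Your proof is correct and takes essentially the same route as the paper's: both square $X_{uk}$, note that the spin prefactors $(\sigma_u^1)^2=(\sigma_k^0)^2=1$ drop out, and observe that the Glauber normalization $2\cosh(A_u)$ exactly cancels the sum $e^{A_u}+e^{-A_u}$ pointwise in $\underline{\sigma}^0$, so the result holds for any initial distribution. One small correction to your closing remark: the paper feeds this second moment into \emph{Bernstein's} inequality (not Hoeffding's) in the proof of Proposition~\ref{prop:concentration_gradient_GD-ISO} --- indeed, boundedness alone would suffice for Hoeffding, and it is precisely Bernstein's inequality that makes this variance computation necessary.
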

\begin{proof}[Proof of Lemma~\ref{lem:variance_GD-ISO}] From direct computation, we have
\begin{align}
    & \expectation_{p(\sigma_u^{1}, \underline{\sigma}^0| \delta_{u,I^{1}} = 1)}[X_{uk}(\underline{J}_u^\star)^2 ] \\ 
    =& \sum \limits_{\underline{\sigma}^0 } p(\underline{\sigma}^0) \left[ \sum_{\sigma_u^{1}} p(\sigma_u^{1}| \underline{\sigma}^0, \delta_{u,I^{1}}=1) X_{uk}(\underline{J}_u^\star)^2 \right] \\
    =&  \sum \limits_{\underline{\sigma}^0} \frac{p(\underline{\sigma}^0)}{2 \cosh \left( \sum \limits_{j \in \partial u} J_{uj}^\star \sigma_j^0 \right)} \left[ \sum_{\sigma_u^{1}} \exp \left( - \sigma_u^1 \sum_{j \in \partial u} J_{uj}^\star \sigma_j^0 \right) \right] \\
    =& \sum \limits_{\underline{\sigma}^0} p(\underline{\sigma}^0) \\
    =& 1
\end{align}
where in the second step we noted that $(\sigma_i^1)^2=(\sigma_i^0)^2=1$ when substituting for $X_{uk}$ from Eq.~\ref{eq:X_uk_GD_Ising}.
\end{proof}

\begin{lemma} \label{lem:bound_Xuk_GD_Ising} 
For any Ising model with $n$ spins with maximum degree $d$ and maximum interaction strength $\beta$, we have for $k \neq u \in V $, we have
\begin{equation}
    |X_{uk}(\underline{J}_u^\star)| \leq \exp(\beta d)
\end{equation}
\end{lemma}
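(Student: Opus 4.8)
The plan is to bound the random variable pointwise, directly from its definition in Eq.~\eqref{eq:X_uk_GD_Ising}, using only that the spins take values in $\{-1,+1\}$ together with the bounded-degree and bounded-coupling assumptions; no probabilistic machinery is needed. First I would observe that the prefactor $-\sigma_u^1 \sigma_k^0$ has modulus exactly $1$, since each spin lives in $\{-1,+1\}$. This immediately reduces the claim to controlling the exponential factor, so that $|X_{uk}(\underline{J}_u^\star)| = \exp\left[-\sigma_u^1 \sum_{i \in V \setminus u} J_{ui}^\star \sigma_i^0\right]$.

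Next I would exploit the sparsity of the true parameter vector: by definition $J_{ui}^\star = 0$ whenever $i \notin \partial u$, so the sum in the exponent effectively ranges over the neighborhood $\partial u$, which contains at most $d$ nodes. Bounding the exponent by its absolute value and applying the triangle inequality gives $\left|\sigma_u^1 \sum_{i \in \partial u} J_{ui}^\star \sigma_i^0\right| \leq \sum_{i \in \partial u} |J_{ui}^\star|\, |\sigma_i^0| \leq \beta d$, where the last step uses $|\sigma_i^0| = 1$, $|J_{ui}^\star| \leq \beta$, and $|\partial u| \leq d$. Since the exponential is monotone increasing, the extreme case occurs when the exponent attains $+\beta d$, yielding $|X_{uk}(\underline{J}_u^\star)| \leq \exp(\beta d)$.

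Because this argument is entirely deterministic and valid for every realization of $(\sigma_u^1, \underline{\sigma}^0)$, I do not anticipate any genuine obstacle. The only points requiring minor care are to track that the summation index correctly restricts to $\partial u$ (so that the count of nonzero terms is $d$ rather than $n-1$) and that the $\pm 1$ structure makes the prefactor exactly unit-modulus rather than merely bounded. The resulting uniform bound $\exp(\beta d)$ is precisely the range control that the subsequent concentration estimates for the gradient, such as Proposition~\ref{prop:concentration_gradient_GD-ISO}, will invoke when treating the $X_{uk}$ as bounded random variables.
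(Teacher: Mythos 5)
Your proposal is correct and follows essentially the same argument as the paper: both reduce the claim to the exponential factor via $|\sigma_u^1\sigma_k^0|=1$, restrict the sum to the at most $d$ nonzero couplings in $\partial u$, and bound the exponent by $\beta d$ using $|J_{ui}^\star|\leq\beta$ and $|\sigma_u^1\sigma_i^0|=1$. No differences worth noting.
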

\begin{proof}[Proof of Lemma~\ref{lem:bound_Xuk_GD_Ising}]
\begin{align}
    |X_{uk}(\underline{J}_u^\star)| &= \left|-{\sigma_{u}^1} {\sigma_k^0} \exp \left( - \sum \limits_{i \in \partial u} J_{ui}^\star {\sigma_{u}^1} {\sigma_i^0} \right) \right| \\
    &= \exp \left( - \sum \limits_{i \in \partial u} J_{ui}^\star {\sigma_{u}^1} {\sigma_i^0} \right) \\
    &\leq \exp(\beta d)
\end{align}
where we firstly noted that $\delta_{u,I^1} \in \{0,1\}$. In the second step, we noted that the components of $\underline{J}^\star_u$ have a maximum value of $\beta$ and at most $d$ of them are non-zero. Further $|\sigma_u^1 \sigma_i^0|=1$ as spins take values in $\{-1, +1\}$.
\end{proof}

In the M-regime, the different tuples of realizations of $(\underline{\sigma}^1, \underline{\sigma}^0, I^1)$ are independent of each other. This allows us to use the lemmas obtained above and obtain a concentration inequality in the following proof.

\begin{proof}[Proof of Proposition~\ref{prop:concentration_gradient_GD-ISO}] Utilizing Lemmas~\ref{lem:consistency_GD-ISO},~\ref{lem:variance_GD-ISO} and \ref{lem:bound_Xuk_GD_Ising} combined with Bernstein's inequality, we have
\begin{align}
    p \left[\left| \frac{\partial}{\partial J_{uk}} S_m(\underline{J}_u^\star) \right| > a \right] \leq 2 \exp \left(-\frac{\frac{1}{2}a^2 m_u}{1 + \frac{1}{3}\exp(\beta d) a} \right)
\end{align}
Setting 
\begin{equation}
    s=\frac{\frac{1}{2}a^2 m_u}{1 + \frac{1}{3}\exp(\beta d) a},
\end{equation}
and considering $z = \frac{s}{m_u} \exp(\beta d)$, we can write
\begin{equation}
    p \left[\left| \frac{\partial}{\partial J_{uk}} S_m(\underline{J}_u^\star) \right| > \frac{1}{3} \left( z + \sqrt{ \frac{18z}{\exp(\beta d)} + z^2} \right) \right] \leq 2 \exp(-s)
    \label{eq:interim_prop_concentration_grad_Ising}
\end{equation}
For $m_u \geq s \exp(2 \beta d)$, we have $z^2 = \frac{s^2}{m_u^2}\exp(2 \beta d) \leq \frac{s}{m_u}$ and that
\begin{align}
    \frac{1}{3} \left( z + \sqrt{ \frac{18z}{ \exp(\beta d)} + z^2} \right) &\leq \frac{1}{3} \left(\sqrt{\frac{s}{m_u}} + \sqrt{\frac{18s}{m_u} + \frac{s}{m_u}} \right) \\
    &\leq \frac{\sqrt{19} + 1}{3} \sqrt{\frac{s}{m_u}} \\
    &\leq 2 \sqrt{\frac{s}{m_u}}
\end{align}
which allows us to simplify Eq.~\ref{eq:interim_prop_concentration_grad_Ising}:
\begin{equation}
    p \left[\left| \frac{\partial}{\partial J_{uk}} S_m(\underline{J}_u^\star) \right| > 2 \sqrt{\frac{s}{m_u}} \right] \leq 2 \exp(-s)
    \label{eq:interim2_prop_concentration_grad_Ising}
\end{equation}
Setting $s = \ln \frac{2n}{\delta_1}$ and taking the union bound over every component of the gradient gives us the desired result.
\end{proof}

\subsubsection{D-RPLE Estimator}
Gradient of D-PL (Eq.~\ref{eq:gd_pl_sim}) is given by:
\begin{equation}
    \frac{\partial}{\partial J_{uk}} \mathcal{L}_m(\underline{J}_u) = \frac{1}{m_u} \sum_{t=1}^{m} {\sigma_k^0}^{(t)}\left[\tanh\left(\sum_{i \neq u} J_{ui} {\sigma_i^0}^{(t)} \right) - {\sigma_u^1}^{(t)} \right] \delta_{u,{{I^1}^{(t)}}}.  \label{eq:gradient_DPL}
\end{equation}

Let us denote the term in the above summation as the following random variable
\begin{equation}
    Z_{uk}(\underline{J}_u) = {\sigma_k^0}\left[\tanh\left(\sum_{i \neq u} J_{ui} {\sigma_i^0} \right) - {\sigma_u^1} \right] \, \forall k \in \partial u
    \label{eq:Z_uk_DPL_Ising}
\end{equation}
The lemma below indicates that the D-RPLE estimator is consistent and unbiased regardless of the choice of $p(\underline{\sigma}^0)$. This will also be useful for the concentration inequality to come after.
\begin{lemma} \label{lem:consistency_D-PL} 
For any $u \in V$ and $k \in V \setminus u$, we have
\begin{equation}
    \expectation[Z_{uk}(\underline{J}_u^\star)] = 0
\end{equation}
\end{lemma}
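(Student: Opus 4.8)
The plan is to mirror the proof of Lemma~\ref{lem:consistency_GD-ISO} for D-RISE. First I would write $\expectation[Z_{uk}(\underline{J}_u^\star)]$ as an expectation over the joint law $p(\sigma_u^1,\underline{\sigma}^0\mid\delta_{u,I^1}=1)$ and apply the law of total expectation to condition on the initial configuration $\underline{\sigma}^0$, so that the computation reduces to evaluating, for each fixed $\underline{\sigma}^0$, the conditional expectation of the bracket in $Z_{uk}$ over $\sigma_u^1$. Since $\sigma_k^0$ and the $\tanh$ term depend only on $\underline{\sigma}^0$ and not on $\sigma_u^1$, the inner sum splits into $\sigma_k^0\tanh(\cdot)$ (using $\sum_{\sigma_u^1}p=1$) minus $\sigma_k^0\,\expectation[\sigma_u^1\mid\underline{\sigma}^0,\delta_{u,I^1}=1]$.

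The key step is to compute that conditional mean directly from the Glauber update rule \eqref{eq:glauber_dynamics_conditional_prob_update_sim}. A two-term evaluation yields
\begin{equation}
\expectation[\sigma_u^1\mid\underline{\sigma}^0,\delta_{u,I^1}=1] = \tanh\!\left(\sum_{j\in\partial u} J_{uj}^\star\sigma_j^0\right),
\end{equation}
which is exactly the $\tanh$ term subtracted inside $Z_{uk}$ (recall $J_{ui}^\star=0$ for $i\notin\partial u$, so $\sum_{i\neq u}J_{ui}^\star\sigma_i^0=\sum_{j\in\partial u}J_{uj}^\star\sigma_j^0$). Hence the bracket has zero conditional mean for every $\underline{\sigma}^0$, each summand vanishes, and $\expectation[Z_{uk}(\underline{J}_u^\star)]=0$ follows after summing against $p(\underline{\sigma}^0)$.

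The argument is routine and, as with Lemma~\ref{lem:consistency_GD-ISO}, it holds for \emph{any} initial distribution $p(\underline{\sigma}^0)$ because the cancellation happens pointwise in $\underline{\sigma}^0$ before the outer sum is taken. The only conceptual point -- the analogue of an ``obstacle'' here -- is recognizing that the D-PL loss is engineered precisely so that the subtracted $\tanh$ coincides with the Glauber conditional mean of the updated spin; once this identity is in place, unbiasedness and independence from the choice of $p(\underline{\sigma}^0)$ are immediate.
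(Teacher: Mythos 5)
Your proposal is correct and follows essentially the same route as the paper's proof: condition on $\underline{\sigma}^0$ via the law of total expectation, observe that the Glauber conditional mean $\expectation[\sigma_u^1\mid\underline{\sigma}^0,\delta_{u,I^1}=1]$ equals $\tanh\bigl(\sum_{j\in\partial u}J^\star_{uj}\sigma_j^0\bigr)$, and conclude by pointwise cancellation inside the bracket, which also yields the paper's remark that the result holds for any $p(\underline{\sigma}^0)$. Your explicit note that $J^\star_{ui}=0$ for $i\notin\partial u$ (so the two $\tanh$ arguments coincide) is a small clarification the paper leaves implicit, but it does not change the argument.
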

\begin{proof}[Proof of Lemma~\ref{lem:consistency_D-PL}]
Let us note the probability distribution with respect to which we take the expectation.
\begin{align}
    & \expectation_{p(\sigma_u^{1}, \underline{\sigma}^0| \delta_{u,I^{1}} = 1)}[Z_{uk} (\underline{J}_u^\star) ] \\
    = & \sum \limits_{\underline{\sigma}^0} \left[ \sum_{\sigma_u^{1}} p(\sigma_u^{1}| \underline{\sigma}^0, \delta_{u,I^{1}}=1)p(\underline{\sigma}^0) Z_{uk}(\underline{J}_u^\star) \right]\\
    = & \sum \limits_{\underline{\sigma}^0 } p(\underline{\sigma}^0) \left[ \sum_{\sigma_u^{1}} p(\sigma_u^{1}| \underline{\sigma}^0, \delta_{u,I^{1}}=1) Z_{uk}(\underline{J}_u^\star) \right] \\
    = &  \sum \limits_{\underline{\sigma}^0} p(\underline{\sigma}^0)\left[ \sum_{\sigma_u^{1}} \frac{\exp \left[\sigma_i^1 (\sum_{j \in \partial u} J^\star_{uj} \sigma_j^0 ) \right]}{2 \cosh \left[ \sum_{j \in \partial u} J^\star_{uj} \sigma_j^0 \right]} {\sigma_k^0} \left(\tanh\left(\sum_{i \neq u} J_{ui} {\sigma_i^0} \right) - {\sigma_u^1} \right) \right] \\
    =& \sum \limits_{\underline{\sigma}^0} p(\underline{\sigma}^0) {\sigma_k^0} \left[ \tanh\left(\sum_{i \neq u} J_{ui} {\sigma_i^0}\right) - \sum_{\sigma_u^{1}} \frac{\exp \left[\sigma_i^1 (\sum_{j \in \partial u} J^\star_{uj} \sigma_j^0 ) \right]}{2 \cosh \left[ \sum_{j \in \partial u} J^\star_{uj} \sigma_j^0 \right]} {\sigma_u^1}  \right] \\
    =& \sum \limits_{\underline{\sigma}^0} p(\underline{\sigma}^0) {\sigma_k^0} \left[ \tanh\left(\sum_{i \neq u} J_{ui} {\sigma_i^0} \right) - \tanh\left(\sum_{i \neq u} J_{ui} {\sigma_i^0} \right)  \right] \\
    =& 0
\end{align}
where in the first step, we used the law of total expectations. In the third to last step, we used the definition of $Z_{uk}$ from Eq.~\ref{eq:Z_uk_DPL_Ising} and the conditional probability from Eq.~\ref{eq:glauber_dynamics_conditional_prob_update_sim}.
\end{proof}

\begin{lemma} \label{lem:bound_Zuk_D-PL_Ising} 
For any Ising model with $n$ spins with maximum degree $d$ and maximum interaction strength $\beta$, we have for $k \neq u \in V $, we have
\begin{equation}
    |Z_{uk}(\underline{J}_u^\star)| \leq 2
\end{equation}
\end{lemma}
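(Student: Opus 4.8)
The plan is to bound $Z_{uk}(\underline{J}_u^\star)$ by first stripping off the spin prefactor and then controlling the bracketed difference with the triangle inequality. First I would observe that, since every spin takes values in $\{-1,+1\}$, we have $|\sigma_k^0| = 1$, so that $|Z_{uk}(\underline{J}_u^\star)|$ is exactly equal to the absolute value of the bracketed term $\tanh\!\left(\sum_{i \neq u} J_{ui}^\star \sigma_i^0\right) - \sigma_u^1$. This reduces the problem to bounding a single scalar difference.

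Next I would apply the triangle inequality to split this difference, obtaining $|Z_{uk}(\underline{J}_u^\star)| \leq \left|\tanh\!\left(\sum_{i \neq u} J_{ui}^\star \sigma_i^0\right)\right| + |\sigma_u^1|$. The first term is at most $1$ because the hyperbolic tangent is bounded in magnitude by one for any real argument, regardless of the coupling values $J_{ui}^\star$ or the degree $d$. The second term equals exactly $1$ since $\sigma_u^1 \in \{-1,+1\}$. Summing the two contributions yields the claimed bound of $2$.

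This lemma is essentially immediate, and I do not anticipate any genuine obstacle: the only subtlety worth stating explicitly is that the $\tanh$ saturation is what makes the bound uniform. Unlike the analogous Lemma~\ref{lem:bound_Xuk_GD_Ising} for the D-RISE summand, whose bound $\exp(\beta d)$ grows with the model parameters, the pseudo-likelihood summand $Z_{uk}$ enjoys a constant bound independent of $\beta$ and $d$. This crude value of $2$ is exactly what is needed as the almost-sure bound in the Bernstein-type concentration step of Proposition~\ref{prop:concentration_gradient_D-PL}, so there is no incentive to sharpen the constant.
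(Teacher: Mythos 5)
Your proof is correct and takes exactly the same route as the paper's: factor out $|\sigma_k^0|=1$, apply the triangle inequality, and use $|\tanh(\cdot)|\leq 1$ together with $|\sigma_u^1|=1$. One peripheral correction to your closing remark: the paper's concentration step for D-RPLE (Proposition~\ref{prop:concentration_gradient_D-PL}) uses Hoeffding's inequality rather than Bernstein's, with the constant $2$ serving as the almost-sure range bound; this does not affect the validity of your proof of the lemma itself.
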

\begin{proof}[Proof of Lemma~\ref{lem:bound_Zuk_D-PL_Ising}]
\begin{align}
    |Z_{uk}(\underline{J}_u^\star)| &= \left|{\sigma_k^0} \left[\tanh\left(\sum_{i \neq u} J_{ui} {\sigma_i^0} \right) - {\sigma_u^1} \right]\right| \\
    &\leq \left|\tanh\left(\sum_{i \neq u} J_{ui} {\sigma_i^0} \right)\right| + \left| {\sigma_u^1} \right| \\
    &\leq 2
\end{align}
where in the last step, we noted that $|\tanh(\cdot)| \leq 1$ and $|\sigma_u^1|=1$ as spins take values in $\{-1, +1\}$.
\end{proof}

As noted before, the different tuples of realizations of $(\underline{\sigma}^1, \underline{\sigma}^0, I^1)$ are independent of each other in the M-regime. We use the lemmas obtained above and obtain a concentration inequality in the following proof.

\begin{proof}[Proof of Proposition~\ref{prop:concentration_gradient_D-PL}] Utilizing Lemmas~\ref{lem:consistency_D-PL} and \ref{lem:bound_Zuk_D-PL_Ising} combined with Hoeffding's inequality, we have
\begin{align}
    p \left[\left| \frac{\partial}{\partial J_{uk}} \mathcal{L}_m(\underline{J}_u^\star) \right| > a \right] \leq 2 \exp \left(-\frac{m_u a^2}{8} \right)
\end{align}
Let $a = 2\sqrt{2} \sqrt{\frac{s}{m_u}}$. We then have
\begin{equation}
    p \left[\left| \frac{\partial}{\partial J_{uk}} \mathcal{L}_m(\underline{J}_u^\star) \right| > 2\sqrt{2} \sqrt{\frac{s}{m_u}} \right] \leq 2 \exp \left(-s\right)
\end{equation}
Setting $s = \ln \frac{2n}{\delta_1}$ and taking the union bound over every component of the gradient gives us the desired result.
\end{proof}

\subsection{Restricted Strong Convexity}
The following deterministic functional inequality derived in \cite{vuffray2016interaction} will be useful for latter results.
\begin{lemma} \label{lem:functional_ineq}
The following inequality holds for all $z \in \mathbb{R}$.
\begin{align}
    e^{-z} - 1 + z \geq \frac{z^2}{2 + |z|}.
\end{align}
\end{lemma}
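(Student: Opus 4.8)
The plan is to reduce the claim to an elementary calculus exercise by clearing the (always positive) denominator $2+|z|$ and treating the sign of $z$ separately, since the absolute value is the only obstacle to a single clean formula.

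First I would dispose of the case $z\ge 0$. Here $|z|=z$ and, because $2+z>0$, the inequality is equivalent to $\phi(z):=(2+z)(e^{-z}-1+z)-z^2\ge 0$. A short expansion collapses this to $\phi(z)=(2+z)e^{-z}+z-2$. I would then run a derivative cascade: one checks $\phi(0)=0$ and $\phi'(0)=0$, and computes $\phi''(z)=z\,e^{-z}$, which is nonnegative for $z\ge 0$. Hence $\phi'$ is nondecreasing on $[0,\infty)$ and stays at or above its value $\phi'(0)=0$, so $\phi$ is nondecreasing and stays at or above $\phi(0)=0$. This gives $\phi\ge 0$, i.e. the desired bound for nonnegative $z$.

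For $z<0$ I would substitute $w=-z>0$, so that $2+|z|=2+w$ and the claim becomes $e^{w}-1-w\ge \tfrac{w^2}{2+w}$, equivalently $\tilde\phi(w):=(2+w)(e^{w}-1-w)-w^2\ge 0$, which expands to $\tilde\phi(w)=(2+w)e^{w}-2-3w-2w^2$. The same mechanism applies, but now three derivatives are needed: $\tilde\phi(0)=\tilde\phi'(0)=\tilde\phi''(0)=0$ while $\tilde\phi'''(w)=(5+w)e^{w}>0$ for $w>0$. Propagating nonnegativity up the chain of derivatives yields $\tilde\phi\ge 0$, which is the bound for negative $z$; combining the two cases proves the lemma.

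The calculation is entirely routine, so there is no genuine ``hard part''; the only points demanding care are (i) verifying that multiplying through by $2+|z|$ preserves the inequality, which is automatic since $2+|z|>0$, and (ii) correctly determining how many derivatives must vanish at the origin before the top derivative acquires a definite sign --- two in the region $z\ge 0$ and three in the region $z<0$. An alternative I considered is to write $e^{-z}-1+z=\int_0^z (z-t)e^{-t}\,dt$ and bound the integral directly, but the derivative-cascade argument is shorter and fully self-contained.
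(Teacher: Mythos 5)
Your proof is correct. Both cases check out: for $z\ge 0$ the reduction to $\phi(z)=(2+z)e^{-z}+z-2$ with $\phi(0)=\phi'(0)=0$ and $\phi''(z)=z e^{-z}\ge 0$ is right, and for $z<0$ the substitution $w=-z$ gives $\tilde\phi(w)=(2+w)e^{w}-2-3w-2w^{2}$ with $\tilde\phi(0)=\tilde\phi'(0)=\tilde\phi''(0)=0$ and $\tilde\phi'''(w)=(5+w)e^{w}>0$, so the nonnegativity cascade closes both halves. One point of comparison, though: the paper does not prove this lemma at all --- it imports it verbatim from the earlier interaction-screening work of Vuffray et al.\ (cited as the source of this ``deterministic functional inequality''), so there is no in-paper argument to match yours against. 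Your derivative-cascade argument therefore supplies a self-contained elementary proof where the paper relies on an external reference, which is a legitimate and arguably preferable route for a reader who wants the appendix to stand on its own; the only care points you flag (positivity of $2+|z|$ when clearing the denominator, and counting how many derivatives vanish at the origin --- two for $z\ge 0$, three for $z<0$) are exactly the right ones, and the asymmetry between the two cases is genuine, since the negative-$z$ side is where the exponential grows and the inequality is least obvious.
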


We will also find the following deterministic functional inequality useful.
\begin{lemma} \label{lem:functional_ineq_log_tanh}
The following inequality holds for all $z \in \mathbb{R}$ and for any $x \in \mathbb{R}$.
\begin{align}
    -\ln \left(1 + \tanh(x+z) \right) + \ln \left(1 + x \right) + z\left( 1 -\tanh(x) \right) \geq \frac{z^2}{2(1 + |z|)} \sech^2(x).
\end{align}
\end{lemma}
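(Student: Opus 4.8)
The plan is to recognize the left-hand side as the first-order Taylor remainder of the strictly convex map $w \mapsto -\ln(1+\tanh w)$ expanded at $w=x$; this identifies the middle term as $\ln(1+\tanh x)$. Using the identity $1+\tanh w = e^{w}/\cosh w$, I would rewrite the left-hand side as $h(z) := \ln\cosh(x+z) - \ln\cosh x - z\tanh x$, i.e. the Bregman divergence of the single-spin log-partition function $\ln\cosh$. A direct computation then gives $h(0)=0$, $h'(z)=\tanh(x+z)-\tanh x$ (so $h'(0)=0$), and $h''(z)=\sech^2(x+z)$, which exposes the correct quadratic behaviour near $z=0$ matching the target right-hand side.

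First I would reduce to the case $z\ge 0$. Writing $t := \tanh x \in (-1,1)$ and using $\cosh(x+z)/\cosh x = \cosh z + t\sinh z$, one has $h(z) = \ln(\cosh z + t\sinh z) - tz$, and both this expression and the target bound $\tfrac{z^2}{2(1+|z|)}\sech^2 x = \tfrac{(1-t^2)z^2}{2(1+|z|)}$ are invariant under $(t,z)\mapsto(-t,-z)$; since $t$ ranges over all of $(-1,1)$, it suffices to treat $z\ge 0$. Next, Taylor's theorem with integral remainder gives $h(z) = \int_0^z (z-s)\,\sech^2(x+s)\,ds$.

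The key estimate is a clean pointwise lower bound on the integrand. For $s\ge 0$, $\cosh(x+s) = \cosh x\cosh s + \sinh x\sinh s \le \cosh x\,(\cosh s + \sinh s) = \cosh x\,e^{s}$, using $|\sinh x|\le\cosh x$ and $\sinh s\ge 0$; squaring yields $\sech^2(x+s)\ge \sech^2 x\,e^{-2s}$. Substituting this and evaluating the elementary integral gives $h(z)\ge \sech^2 x\int_0^z (z-s)e^{-2s}\,ds = \sech^2 x\,\bigl(\tfrac{z}{2}-\tfrac{1-e^{-2z}}{4}\bigr)$. It then remains to verify the scalar inequality $\tfrac{z}{2}-\tfrac{1-e^{-2z}}{4}\ge \tfrac{z^2}{2(1+z)}$; rearranging (and using $\tfrac{z}{2}-\tfrac{z^2}{2(1+z)}=\tfrac{z}{2(1+z)}$) reduces it to $e^{-2z}\ge \tfrac{1-z}{1+z}$, which is immediate for $z\ge 1$ and for $0\le z<1$ follows from $\ln\tfrac{1+z}{1-z}=2\operatorname{artanh} z\ge 2z$.

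I expect the main obstacle to be producing a pointwise lower bound on $h''(s)=\sech^2(x+s)$ that is simultaneously clean and tight enough to yield the exact $(1+|z|)$ denominator: too crude a bound (e.g. replacing $\sech^2(x+s)$ by its minimum on $[0,z]$) loses the right constant. The exponential bound $\sech^2(x+s)\ge \sech^2 x\,e^{-2s}$ is the crucial device, and it is precisely the integration of $e^{-2s}$ against the Taylor weight $(z-s)$ that generates the $1/(1+z)$ factor; the closing scalar step is then elementary through $\operatorname{artanh}$.
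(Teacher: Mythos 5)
Your proof is correct, and it takes a genuinely different route from the paper's. The paper works directly with the auxiliary function $g(x,z) = 2(1+|z|)f(x,z) - \sech^2(x)\,z^2$ (where $f(x,z)$ denotes the left-hand side) and shows $g\ge 0$ by a case analysis on the sign of $z$: one checks $g(x,0)=0$, that the one-sided first derivatives $\partial_z g$ vanish as $z\to 0^{\pm}$, and that $\partial_z^2 g\ge 0$ on each half-line, so that $z=0$ is a global minimum. You instead recognize the left-hand side as the Bregman divergence $h(z)=\ln\cosh(x+z)-\ln\cosh x - z\tanh x$ of the single-spin log-partition function, eliminate the sign case analysis once and for all via the $(t,z)\mapsto(-t,-z)$ symmetry with $t=\tanh x$, and then combine Taylor's integral remainder with the pointwise bound $\sech^2(x+s)\ge \sech^2(x)\,e^{-2s}$ for $s\ge 0$, reducing the lemma to the scalar inequality $e^{-2z}\ge (1-z)/(1+z)$, i.e.\ $\operatorname{artanh} z\ge z$; all of these steps check out. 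Each approach buys something: the paper's argument uses nothing beyond differentiation and mirrors the style of the companion inequality (Lemma~\ref{lem:functional_ineq}) imported from prior work, whereas yours explains structurally where the $(1+|z|)$ denominator comes from (integrating $e^{-2s}$ against the Taylor weight $(z-s)$) and in fact establishes the slightly stronger intermediate bound $h(z)\ge \sech^2(x)\bigl(\tfrac{z}{2}-\tfrac{1-e^{-2z}}{4}\bigr)$. Note also that you correctly read the misprinted middle term $\ln(1+x)$ as $\ln(1+\tanh x)$; this is the interpretation the paper's own proof requires (its claim $g(x,0)=0$ is false otherwise) and the one used when the lemma is applied in Lemma~\ref{lem:residual_D-PL}.
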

\begin{proof}[Proof of Lemma~\ref{lem:functional_ineq_log_tanh}]
Let us denote the function
\begin{equation}
    f(x,z) := -\ln \left(1 + \tanh(x+z) \right) + \ln \left(1 + x \right) + z\left( 1 -\tanh(x) \right)
\end{equation}
and the auxillary function
\begin{equation}
    g(x,z) := 2(1+|z|)f(x,z) - \sech^2(x) z^2.
\end{equation}
We show that for fixed $x \in \mathbb{R}$ and $z=0$, $g(x,z)$ achieves its minimum at $g(x,0)=0$. Observe the first partial derivative of $g(x,z)$ wrt $z$ is given by
\begin{equation}
    \frac{\partial}{\partial z}g(x,z) = 
    \begin{cases}
        2f(x,z) + 2(1+z) \left(\tanh(x+z) - \tanh(x)\right) -2z \sech^2(x), \, \quad z > 0 \\
        -2f(x,z) + 2(1-z) \left(\tanh(x+z) - \tanh(x)\right) -2z \sech^2(x), \, \quad z < 0
    \end{cases}
\end{equation} 
We note that the partial derivatives vanishes at zero from both the negative and positive directions of $z$:
\begin{equation}
    \lim_{z \rightarrow 0_{+}} \frac{\partial}{\partial z}g(x,z) = \lim_{z \rightarrow 0_{-}} \frac{\partial}{\partial z}g(x,z) = 0
    \label{eq:g_grad_zero}
\end{equation}
The second partial derivatives of $g(x,z)$ are given by
\begin{equation}
    \frac{\partial^2}{\partial z^2}g(x,z) = 
    \begin{cases}
        4 \left(\tanh(x+z) - \tanh(x)\right) + 2 (1 + z) \sech^2(x+z) -2 \sech^2(x), \, \quad z > 0 \\
        4 \left(\tanh(x) - \tanh(x-z)\right) + 2 (1 - z) \sech^2(x+z) -2 \sech^2(x), \, \quad z < 0
    \end{cases}
\end{equation} 
We note that for $z > 0$, $\frac{\partial^2}{\partial z^2}g(x,z)$ is non-negative
\begin{align}
    \frac{\partial^2}{\partial z^2}g(x,z) &= 4 \left(\tanh(x+z) - \tanh(x)\right) + 2 (1 + z) \sech^2(x+z) -2 \sech^2(x) \\
    & > 4 \left(\tanh(x+z) - \tanh(x)\right) + 2 \sech^2(x+z) -2 \sech^2(x) \\
    & = 4 \left(\tanh(x+z) - \tanh(x)\right) + 2\left(\tanh^2(x) -2 \tanh^2(x+z)\right) \\
    & = 2 \left(\tanh(x+z) - \tanh(x)\right)\left(2 - \tanh(x) - \tanh(x+z)\right) \\
    & > 0
\end{align}
where in the last step, we noted that $\tanh(x)$ is a monotonically increasing function and that $\tanh(x) < 1$. A similar result can be shown for $z<0$. Combining this with Eq.~\ref{eq:g_grad_zero}, we prove that for all $z$ and any $x$, $g(x,z) \geq g(x,0) = 0$ which gives us our desired result.
\end{proof}

Let $H_{ij}$ denote the correlation matrix elements
\begin{equation}
    H_{ij} = \expectation_{p(\sigma_u^1 , \underline{\sigma}^0 |\delta_{u,I^{1}}=1)}[\sigma_i^{0} \sigma_j^{0}]
    \label{eq:correlation_GD_Ising}
\end{equation}
and we denote the corresponding matrix as $H = [H_{ij}] \in \mathbb{R}^{|\partial u| \times |\partial u|}$. Let the empirical estimate of the correlation matrix be denoted by $\hat{H}$ and the matrix elements be given by
\begin{equation}
    \hat{H}_{ij} = \frac{1}{m_u} \sum \limits_{t=1}^{m} {\sigma_i^0}^{(t)} {\sigma_j^0}^{(t)} \delta_{u,{I^{1}}^{(t)}}
    \label{eq:empirical_correlation_GD_Ising}
\end{equation}

\begin{lemma}\label{lem:ineq_hessian_GD-ISO}
Consider some node $u \in V$. With probability at least $1 - n^2 \exp \left(- \frac{m_u \epsilon_2^2}{2}\right)$, we have
\begin{equation}
    | \hat{H}_{ij} -  H_{ij} | \leq \epsilon_2
\end{equation}
for all $i,j \in \partial u$ and $\epsilon_2 > 0$.
\end{lemma}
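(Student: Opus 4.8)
The plan is to establish this as a direct Hoeffding concentration bound for each entry of the correlation matrix, followed by a union bound over the entries. First I would fix a pair $i,j \in \partial u$ and isolate the contribution of the samples on which node $u$ is updated. For each such sample $t$, the summand in $\hat{H}_{ij}$ is $\sigma_i^{0(t)} \sigma_j^{0(t)}$, a product of two $\pm 1$ spins and hence a random variable taking values in $[-1,1]$. Since the excerpt notes (just before Proposition~\ref{prop:concentration_gradient_GD-ISO}) that in the M-regime the tuples $(\underline{\sigma}^1, \underline{\sigma}^0, I^1)$ are independent across samples, and since the initial configuration $\underline{\sigma}^0$ is drawn from the uniform distribution independently of the choice of updated node, these summands are i.i.d. with common mean $H_{ij} = \expectation[\sigma_i^0 \sigma_j^0 \mid \delta_{u,I^1}=1]$.

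The second step handles the fact that $m_u$ is itself random. I would condition on the realized update pattern, i.e., on the set of sample indices $t$ for which $\delta_{u,I^{1(t)}}=1$ (which fixes $m_u$). Given this conditioning, $\hat{H}_{ij}$ is an average of exactly $m_u$ i.i.d. random variables bounded in $[-1,1]$ with mean $H_{ij}$, so Hoeffding's inequality yields
\begin{equation}
p\left[ |\hat{H}_{ij} - H_{ij}| > \epsilon_2 \,\middle|\, \text{updates} \right] \leq 2\exp\left(-\frac{2 m_u \epsilon_2^2}{(1-(-1))^2}\right) = 2\exp\left(-\frac{m_u \epsilon_2^2}{2}\right).
\end{equation}
Because this conditional bound holds uniformly over every update pattern with the stated value of $m_u$, it transfers directly to the corresponding unconditional statement.

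Finally, I would take a union bound over the entries of $H$. The diagonal entries are trivial, since $(\sigma_i^0)^2 = 1$ forces $\hat{H}_{ii} = H_{ii} = 1$ and the desired inequality reads $0 \leq \epsilon_2$; hence only the off-diagonal entries require the concentration bound. By symmetry of $H$ there are at most $\binom{|\partial u|}{2} \leq \binom{n}{2} \leq n^2/2$ distinct such entries, and multiplying the per-entry failure probability $2\exp(-m_u\epsilon_2^2/2)$ by this count absorbs the factor of two and gives total failure probability at most $n^2 \exp(-m_u\epsilon_2^2/2)$. Thus the bound $|\hat{H}_{ij} - H_{ij}| \leq \epsilon_2$ holds simultaneously for all $i,j \in \partial u$ with probability at least $1 - n^2 \exp(-m_u\epsilon_2^2/2)$, as claimed.

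I expect the only genuine subtlety to be the bookkeeping around the random number of updates $m_u$ together with the conditioning on $\delta_{u,I^1}=1$: one must verify that conditioning on $u$ being the updated node does not distort the distribution of $\underline{\sigma}^0$ (it does not, since the node selection is independent of the initial configuration), and that the conditional Hoeffding bound lifts to an unconditional one despite $m_u$ being random. The concentration estimate and the union-bound counting are otherwise entirely routine.
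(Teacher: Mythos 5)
Your proposal is correct and follows essentially the same route as the paper's proof: a Hoeffding bound of $2\exp(-m_u\epsilon_2^2/2)$ for each fixed pair $i,j$ using $|\sigma_i^0\sigma_j^0\delta_{u,I^1}|\leq 1$, followed by a union bound over the $\binom{n}{2}$ pairs justified by the symmetry of $H$. The only difference is that you spell out the conditioning on the realized update pattern (handling the randomness of $m_u$ and the fact that conditioning on $\delta_{u,I^1}=1$ leaves $\underline{\sigma}^0$ uniform) and dispose of the diagonal entries explicitly, details the paper's shorter proof leaves implicit.
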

\begin{proof}[Proof of Lemma~\ref{lem:ineq_hessian_GD-ISO}]
Fix $i,j \in \partial u$. Noting that $|\sigma_i^0 \sigma_j^0 \delta_{u, I^1}| \leq 1$ and combining with Hoeffding's inequality:
\begin{equation}
    p(| \hat{H}_{ij} -  H_{ij} | > \epsilon_2) \leq 2 \exp \left( - \frac{m_u \epsilon_2^2}{2} \right)
\end{equation}
The proof follows by noting that the matrix $H$ is symmetric allowing us to take the union bound over all $i < j \in V \setminus u$. 
\end{proof}

\begin{lemma} \label{lem:hessian_GD-ISO} 
Consider an Ising model with $n$ spins and some node $u \in V$, then the following holds for the Hessian
\begin{align}
    \Delta_u^T H \Delta_u = \norm{\Delta_u}_2^2
\end{align}
\end{lemma}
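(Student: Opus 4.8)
The plan is to show that the correlation matrix $H$ defined in Eq.~\eqref{eq:correlation_GD_Ising} is simply the identity matrix, from which the claimed identity $\Delta_u^T H \Delta_u = \norm{\Delta_u}_2^2$ follows immediately. The crucial observation is that the integrand $\sigma_i^0 \sigma_j^0$ depends only on the initial configuration $\underline{\sigma}^0$ and not on the updated spin $\sigma_u^1$, so the expectation collapses onto the marginal law of $\underline{\sigma}^0$.

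First I would argue that conditioning on the update event $\delta_{u,I^1}=1$ leaves the distribution of $\underline{\sigma}^0$ unchanged. Since the updated node index $I^1$ is drawn uniformly at random and independently of the initial configuration, the marginal of $\underline{\sigma}^0$ under $p(\sigma_u^1,\underline{\sigma}^0 \mid \delta_{u,I^1}=1)$ coincides with the prior $p(\underline{\sigma}^0)$ fixed in Eq.~\eqref{eq:initial_pdf_GD_Ising}. Integrating out $\sigma_u^1$ then yields $H_{ij} = \expectation_{p(\underline{\sigma}^0)}[\sigma_i^0 \sigma_j^0]$ for all $i,j \in V \setminus u$.

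Next I would evaluate this expectation under the uniform initial distribution. Because $p(\underline{\sigma}^0)$ assigns equal mass $2^{-n}$ to every configuration, the spins $\{\sigma_i^0\}$ are independent and symmetric, so $\expectation[\sigma_i^0] = 0$. For $i = j$ we use $(\sigma_i^0)^2 = 1$ to get $H_{ii} = 1$, while for $i \neq j$ independence gives $H_{ij} = \expectation[\sigma_i^0]\expectation[\sigma_j^0] = 0$. Hence $H_{ij} = \delta_{ij}$, i.e.\ $H$ is the identity matrix, and therefore $\Delta_u^T H \Delta_u = \Delta_u^T \Delta_u = \norm{\Delta_u}_2^2$.

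I do not expect any genuine obstacle here; the computation is elementary once the reduction is in place. The only point requiring care is the first step, namely justifying that conditioning on the update event does not bias the law of $\underline{\sigma}^0$. After that independence is noted, the result rests entirely on the uniformity of the initial distribution assumed in the \mreg{}, which decorrelates the initial spins and forces the correlation matrix to be the identity.
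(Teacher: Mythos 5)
Your proposal is correct and follows essentially the same route as the paper: marginalize out $\sigma_u^1$ (the integrand depends only on $\underline{\sigma}^0$), then use the uniform initial distribution of Eq.~\eqref{eq:initial_pdf_GD_Ising} to conclude $H_{ij}=\delta_{ij}$, i.e.\ $H=I$. Your explicit justification that conditioning on the update event $\delta_{u,I^1}=1$ does not bias the law of $\underline{\sigma}^0$ (since $I^1$ is chosen independently of the initial configuration) is a point the paper's proof uses implicitly in its factorization, so you have if anything made the argument slightly more careful.
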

\begin{proof}[Proof of Lemma~\ref{lem:hessian_GD-ISO}] From direct evaluation of $H$, we have
\begin{align}
    H_{ij} &= \expectation[\sigma_i^0 \sigma_j^0] \\ 
    &= \sum \limits_{\underline{\sigma}^0} p(\underline{\sigma}^0) \sum \limits_{\sigma_u^1} p(\sigma_u^1 | \underline{\sigma}^0, \delta_{u, I^1} = 1) \sigma_i^0 \sigma_j^0 \\
    &= \sum \limits_{\underline{\sigma}^0} p(\underline{\sigma}^0) \sigma_i^0 \sigma_j^0 \\
    &= \delta_{ij}
\end{align}
where $\delta_{ij}=1$ iff $i=j$. Thus the correlation matrix $H=I$ where $I$ is an identity matrix of size $\mathbb{R}^{|\partial u| \times |\partial u|}$. We assumed that the initial distribution of $p(\underline{\sigma}^0)$ is given by Eq.~\ref{eq:initial_pdf_GD_Ising}. The proof follows from computation of $\Delta_u^T H \Delta_u$.
\end{proof}

\subsubsection{D-RISE Estimator}
\begin{lemma} \label{lem:residual_GD-ISO} 
The residual of the first order Taylor expansion of the \textsc{D-ISO} satisfies
\begin{align}
    \delta S_m (\Delta_u, \underline{J}_u^\star) \geq \exp(-\beta d)\frac{\Delta_u^T \hat{H} \Delta_u}{2 + \norm{\Delta_u}_1}.
\end{align}
\end{lemma}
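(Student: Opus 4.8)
The plan is to expand the Taylor residual in Eq.~\eqref{eq:first_order_taylor_expansion_GD-RISE} term-by-term over the samples, reduce each summand to the scalar convexity gap of the exponential, and then invoke the deterministic inequality of Lemma~\ref{lem:functional_ineq}. First I would introduce, for each sample $t$ with $\delta_{u,{I^1}^{(t)}}=1$, the positive prefactor $E_t = \exp[-{\sigma_u^1}^{(t)}\sum_{i\neq u}J^\star_{ui}{\sigma_i^0}^{(t)}]$ (the summand of $S_m$ evaluated at $\underline{J}_u^\star$) and the scalar $z_t = -{\sigma_u^1}^{(t)}\sum_{i\neq u}\Delta_i{\sigma_i^0}^{(t)}$. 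Then the summand of $S_m(\underline{J}_u^\star+\Delta)$ is $E_t e^{z_t}$, while reading the gradient off Eq.~\eqref{eq:gradient_GD_Ising} shows that $\langle\nabla S_m(\underline{J}_u^\star),\Delta\rangle$ contributes the summand $z_t E_t$. Collecting the three pieces yields $\delta S_m(\Delta_u,\underline{J}_u^\star) = \frac{1}{m_u}\sum_t E_t\,(e^{z_t}-1-z_t)\,\delta_{u,{I^1}^{(t)}}$, where the linear term cancels exactly against the gradient inner product.

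Next I would apply Lemma~\ref{lem:functional_ineq} with the substitution $z\mapsto -z_t$; since $|-z_t|=|z_t|$ this gives $e^{z_t}-1-z_t \ge z_t^2/(2+|z_t|)$ for every $t$. To pull out a uniform denominator I would bound $|z_t| = |\sum_{i\neq u}\Delta_i{\sigma_i^0}^{(t)}| \le \norm{\Delta_u}_1$, using that every spin has unit magnitude, so that $2+|z_t|\le 2+\norm{\Delta_u}_1$. I would likewise bound the prefactor $E_t\ge\exp(-\beta d)$, since its exponent $-{\sigma_u^1}^{(t)}\sum_{i\neq u}J^\star_{ui}{\sigma_i^0}^{(t)}$ has absolute value at most $\beta d$ (at most $d$ nonzero couplings, each of magnitude at most $\beta$, with unit spins). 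Combining these bounds produces $\delta S_m \ge \frac{\exp(-\beta d)}{2+\norm{\Delta_u}_1}\cdot\frac{1}{m_u}\sum_t z_t^2\,\delta_{u,{I^1}^{(t)}}$.

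Finally I would identify the remaining empirical quadratic form with $\Delta_u^T\hat{H}\Delta_u$. Expanding $z_t^2 = \sum_{i,j\neq u}\Delta_i\Delta_j\,{\sigma_i^0}^{(t)}{\sigma_j^0}^{(t)}$, where the factor ${\sigma_u^1}^{(t)}$ squares to one, and then averaging over the updates at $u$ and comparing with the definition of $\hat{H}$ in Eq.~\eqref{eq:empirical_correlation_GD_Ising} gives $\frac{1}{m_u}\sum_t z_t^2\,\delta_{u,{I^1}^{(t)}} = \Delta_u^T\hat{H}\Delta_u$, which is exactly the claimed bound. I do not anticipate a genuine analytic obstacle: the convexity content is carried entirely by Lemma~\ref{lem:functional_ineq}, so the only points requiring care are the bookkeeping of the residual (correctly matching the gradient summand $z_tE_t$ so that the linear term cancels) and the replacement of the sample-dependent $|z_t|$ by the uniform bound $\norm{\Delta_u}_1$ in the denominator, which is precisely what converts the per-sample factors $1/(2+|z_t|)$ into the single clean deterministic factor $1/(2+\norm{\Delta_u}_1)$.
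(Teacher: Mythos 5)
Your proposal is correct and follows essentially the same route as the paper's own proof: expand the residual so the linear term cancels, apply the deterministic inequality of Lemma~\ref{lem:functional_ineq} sample-by-sample, bound the prefactor below by $\exp(-\beta d)$ and the scalar $|z_t|$ above by $\norm{\Delta_u}_1$, and identify the remaining empirical quadratic form with $\Delta_u^T \hat{H} \Delta_u$. The only difference is your sign convention for $z_t$, which you handle correctly via the substitution $z \mapsto -z_t$ in the lemma.
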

\begin{proof}[Proof of Lemma~\ref{lem:residual_GD-ISO}] Noting the expression of the residual from Eq.~\ref{eq:first_order_taylor_expansion_GD-RISE} and that $\langle \nabla S_m(\underline{J}_u^\star), \Delta_u \rangle = \sum \limits_{k \in \partial u} \left[ \frac{\partial}{\partial J_{uk}} S_m ( \underline{J}_u^\star) \right] \Delta_{uk}$, we have that
\begin{align}
    \nonumber
    \delta S_m (\Delta_u, \underline{J}_u^\star) &= \frac{1}{m_u}\sum_{t=1}^m \exp \left(-\sum_{k \in \partial u} J_{uk}^\star {\sigma_u^1}^{(t)} {\sigma_k^0}^{(t)}\right) \delta_{u, {I^1}^{(t)}} \times \\ 
    & \qquad
    \left[ \exp\left(-\sum_{k \in \partial u}\Delta_{uk} {\sigma_u^1}^{(t)} {\sigma_k^0}^{(t)} \right) - 1 + \sum_{k \in \partial u} \Delta_{uk} {\sigma_u^1}^{(t)} {\sigma_k^0}^{(t)} \right] \\
    & \geq \exp(-\beta d) \frac{1}{m_u} \sum_{t=1}^m \frac{ \left(\sum \limits_{k \in \partial u}\Delta_{uk} {\sigma_u^1}^{(t)} {\sigma_k^0}^{(t)}\right)^2}{2 + |\sum \limits_{k \in \partial u}\Delta_{uk} {\sigma_u^1}^{(t)} {\sigma_k^0}^{(t)}|} \delta_{u,{I^{1}}^{(t)}} \\
    & \geq \exp(-\beta d) \frac{\Delta^T \hat{H} \Delta}{2 + \|\Delta_u\|_1}.
    \label{eq:residual_inequality_GD-ISO}
\end{align}
where in the second step we used Lemma~\ref{lem:functional_ineq} considering $z=\sum \limits_{k \in \partial u} \Delta_{uk} {\sigma_u^1}^{(t)} {\sigma_k^0}^{(t)}$ and noted $|\sum \limits_{i \in \partial u} J_{ui} {\sigma_i^0}^{(t)} {\sigma_u^1}^{(t)}| \leq \beta d$. The proof follows from using the definition of $\hat{H}$ and observing that $|\sum \limits_{k \in \partial u} \Delta_{uk} {\sigma_u^1}^{(t)} {\sigma_k^0}^{(t)}| \leq \|\Delta_u\|_1$.
\end{proof}

\begin{proof}[Proof of Proposition~\ref{prop:restricted_strong_convexity_GD-ISO}] Using Lemma~\ref{lem:residual_GD-ISO} and for $m_u~\geq~\frac{2}{\epsilon_2^2}~\ln\frac{n^2}{\delta_2}$, the residual satisfies the following with probability at least $1-\delta_2$:
\begin{align}
    \delta S_m (\Delta_u, \underline{J}_u^\star) &\geq \exp(-\beta d)\frac{\Delta_u^T \hat{H} \Delta_u}{2 + \norm{\Delta_u}_1} \\
    &= \exp(-\beta d)\frac{\Delta_u^T H \Delta_u + \Delta_u^T (H - \hat{H}) \Delta_u}{2 + \norm{\Delta_u}_1} \\
    &\overset{(a)}{\geq} \exp(-\beta d) \frac{ \Delta_u^T H \Delta_u - \epsilon_2 \norm{\Delta_u}_1^2}{2 + \norm{\Delta_u}_1} \\
    &\overset{(b)}{\geq} \exp(-\beta d) \frac{ \norm{\Delta_u}_2^2 - \epsilon_2 \norm{\Delta_u}_1^2}{2 + \norm{\Delta_u}_1}
\end{align}
where we used Lemma~\ref{lem:ineq_hessian_GD-ISO} in $(a)$ and Lemma~\ref{lem:hessian_GD-ISO} in $(b)$. Setting $\epsilon_2 = \frac{1}{32d}$, we note that
\begin{align}
    -\epsilon_2 \norm{\Delta_u}_1^2 \geq - \frac{1}{2} \norm{\Delta_u}_2^2
\end{align}
where we have used Condition~\ref{eq:condition2} that $\norm{\Delta_u}_1 \leq 4 \sqrt{d} \norm{\Delta_u}_2$. Combining this with $\norm{\Delta_u}_2 \leq R$, we obtain the desired result.
\end{proof}

\subsubsection{D-RPLE Estimator}
Consider the first order Taylor expansion:
\begin{equation}
    \delta \mathcal{L}_m (\Delta, \underline{J}_u^\star) = \mathcal{L}_m(\underline{J}_u^\star + \Delta) - \mathcal{L}_m(\underline{J}_u^\star) - \langle \nabla \mathcal{L}_m(\underline{J}_u^\star), \Delta \rangle.
    \label{eq:first_order_taylor_expansion_D-RPLE}
\end{equation}

\begin{lemma} \label{lem:residual_D-PL} 
The residual of the first order Taylor expansion of the \textsc{D-PL} satisfies
\begin{align}
    \delta \mathcal{L}_m (\Delta_u, \underline{J}_u^\star) \geq \exp(- 2 \beta d)\frac{\Delta_u^T \hat{H} \Delta_u}{2(1 + \norm{\Delta_u}_1)}.
\end{align}
\end{lemma}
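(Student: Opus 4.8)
The plan is to mirror the proof of Lemma~\ref{lem:residual_GD-ISO}, replacing the exponential functional inequality (Lemma~\ref{lem:functional_ineq}) with its logarithmic-tanh counterpart (Lemma~\ref{lem:functional_ineq_log_tanh}). First I would work at the level of a single observation $t$ with $\delta_{u,{I^1}^{(t)}}=1$, writing $x_t = \sum_{k\in\partial u}J^\star_{uk}{\sigma_k^0}^{(t)}$, $z_t = \sum_{k\in\partial u}\Delta_{uk}{\sigma_k^0}^{(t)}$, and $s_t = {\sigma_u^1}^{(t)}\in\{-1,+1\}$. The key simplification is the identity $\frac{d}{dy}\ln(1+s\tanh y)=s-\tanh y$, valid for $s\in\{-1,+1\}$ because $\sech^2 y=(1-\tanh y)(1+\tanh y)$; this reproduces the gradient in Eq.~\eqref{eq:gradient_DPL} and, starting from the definition of the residual in Eq.~\eqref{eq:first_order_taylor_expansion_D-RPLE}, lets me write the per-sample Taylor residual cleanly as
$$\delta\ell_t = -\ln\!\big(1+s_t\tanh(x_t+z_t)\big)+\ln\!\big(1+s_t\tanh x_t\big)-(\tanh x_t-s_t)\,z_t .$$

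Next I would apply Lemma~\ref{lem:functional_ineq_log_tanh}. For $s_t=+1$ the expression above is exactly the left-hand side of that lemma with $x=x_t$, $z=z_t$, yielding $\delta\ell_t \geq \frac{z_t^2}{2(1+|z_t|)}\sech^2 x_t$. For $s_t=-1$ I would invoke the same lemma after the substitution $x\mapsto -x_t$, $z\mapsto -z_t$, using that $\tanh$ is odd and $\sech^2$ is even; this converts $1+\tanh x$ into $1-\tanh x$ and leaves the right-hand side unchanged, so the same lower bound holds. Thus in both sign cases $\delta\ell_t \geq \frac{z_t^2}{2(1+|z_t|)}\sech^2 x_t$.

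It then remains to control the two non-quadratic factors uniformly. Since at most $d$ couplings around $u$ are nonzero and each is bounded by $\beta$, we have $|x_t|\leq\beta d$, and because $\cosh y\leq e^{|y|}$ this gives $\sech^2 x_t=\cosh^{-2}x_t \geq \exp(-2\beta d)$; the squaring of $\cosh$ here is precisely what produces the $\exp(-2\beta d)$ prefactor, as opposed to the $\exp(-\beta d)$ appearing in the D-RISE residual. Likewise $|z_t|\leq\sum_{k}|\Delta_{uk}|=\norm{\Delta_u}_1$, so $2(1+|z_t|)\leq 2(1+\norm{\Delta_u}_1)$, and combining these estimates yields $\delta\ell_t \geq \frac{\exp(-2\beta d)}{2(1+\norm{\Delta_u}_1)}\,z_t^2$.

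Finally I would sum over the $m_u$ updating samples and divide by $m_u$; recognizing $\frac{1}{m_u}\sum_t z_t^2\,\delta_{u,{I^1}^{(t)}} = \Delta_u^T\hat H\Delta_u$ from the definition of $\hat H$ in Eq.~\eqref{eq:empirical_correlation_GD_Ising} delivers the claimed bound $\delta\mathcal{L}_m(\Delta_u,\underline{J}_u^\star)\geq \exp(-2\beta d)\frac{\Delta_u^T\hat H\Delta_u}{2(1+\norm{\Delta_u}_1)}$. The only genuine obstacle is the $s_t=-1$ branch, since Lemma~\ref{lem:functional_ineq_log_tanh} is stated only for the positive sign; the odd/even symmetry argument resolves it, and the rest is the same bookkeeping as in the proof of Lemma~\ref{lem:residual_GD-ISO}.
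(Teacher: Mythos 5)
Your proof is correct and follows essentially the same route as the paper: per-sample application of Lemma~\ref{lem:functional_ineq_log_tanh}, then $\sech^2(x)\geq e^{-2|x|}$ with $|x|\leq\beta d$, $|z_t|\leq\norm{\Delta_u}_1$, and the definition of $\hat{H}$. Your explicit two-case sign analysis (substituting $x\mapsto -x_t$, $z\mapsto -z_t$ when $s_t=-1$) is just an unpacked version of the paper's one-line use of the identity $\sigma\tanh(x)=\tanh(\sigma x)$ for $\sigma\in\{-1,+1\}$, so the arguments coincide.
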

\begin{proof}[Proof of Lemma~\ref{lem:residual_D-PL}] Noting the expression of the residual from Eq.~\ref{eq:first_order_taylor_expansion_D-RPLE} and that $\langle \nabla \mathcal{L}_m(\underline{J}_u^\star), \Delta_u \rangle = \sum \limits_{k \in \partial u} \left[ \frac{\partial}{\partial J_{uk}} \mathcal{L}_m ( \underline{J}_u^\star) \right] \Delta_{uk}$, we have that
\begin{align}
    \nonumber
    \delta \mathcal{L}_m (\Delta_u, \underline{J}_u^\star) &= \frac{1}{m_u} \sum_{t=1}^m \left[ - \ln \left[ 1 + {\sigma_u^1}^{(t)} \tanh\left(\sum_{i \neq u} (J_{ui}+\Delta_{ui}) {\sigma_i^0}^{(t)} \right) \right] + \right. \\ & \left. \ln \left[1 + {\sigma_u^1}^{(t)} \tanh\left(\sum_{i \neq u} J_{ui} {\sigma_i^0}^{(t)} \right) \right] + \sum_{k \in \partial u} \Delta_{uk} {\sigma_k}^{(0)}{\sigma_u^1}^{(t)} \left[1 - {\sigma_u^1}^{(t)}\tanh\left(\sum_{i \neq u} J_{ui} {\sigma_i^0}^{(t)} \right) \right] \right] \delta_{u, {I^1}^{(t)}} \\
    & \geq \frac{1}{m_u} \sum_{t=1}^m \frac{ \left(\sum \limits_{k \in \partial u}\Delta_{uk} {\sigma_u^1}^{(t)} {\sigma_k^0}^{(t)}\right)^2}{2 \left(1 + |\sum \limits_{k \in \partial u}\Delta_{uk} {\sigma_u^1}^{(t)} {\sigma_k^0}^{(t)}| \right)} \sech^2 \left( \sum_{i \neq u} J_{ui} {\sigma_i^0}^{(t)} {\sigma_u^1}^{(t)} \right) \delta_{u, {I^1}^{(t)}} \\
    & \geq \exp(-2 \beta d) \frac{1}{m_u} \sum_{t=1}^m \frac{ \left(\sum \limits_{k \in \partial u}\Delta_{uk} {\sigma_u^1}^{(t)} {\sigma_k^0}^{(t)}\right)^2}{2 \left(1 + |\sum \limits_{k \in \partial u}\Delta_{uk} {\sigma_u^1}^{(t)} {\sigma_k^0}^{(t)}| \right)} \delta_{u, {I^1}^{(t)}} \\
    & \geq \exp(- 2 \beta d)\frac{\Delta_u^T \hat{H} \Delta_u}{2(1 + \norm{\Delta_u}_1)}
    \label{eq:residual_inequality_D-PL}
\end{align}
where in the second step, we used Lemma~\ref{lem:functional_ineq_log_tanh} combined with the fact that $\sigma_i \tanh(x) = \tanh(\sigma_i x)$ as $\sigma_i \in \{-1,+1\}$. In the third step, we noted that $\sech^2(x) = 1 - \tanh^2(x) \geq \exp(-2|x|)$ and $|\sum_{i \in \partial u} J_{ui} {\sigma_i^0}^{(t)} {\sigma_u^1}^{(t)}| \leq \beta d$. In the final step, we used the definition of $\hat{H}$ and that $|\sum_{k \in \partial u} \Delta_{uk} {\sigma_u^1}^{(t)} {\sigma_k^0}^{(t)}| \leq \|\Delta_u\|_1$ to complete the proof.
\end{proof}

\begin{proof}[Proof of Proposition~\ref{prop:restricted_strong_convexity_D-PL}] Using Lemma~\ref{lem:residual_D-PL} and for $m_u~\geq~\frac{2}{\epsilon_2^2}~\ln\frac{n^2}{\delta_2}$, the residual satisfies the following with probability at least $1-\delta_2$:
%N
\begin{align}
    \delta \mathcal{L}_m (\Delta_u, \underline{J}_u^\star) &\geq \exp(-2 \beta d) \frac{\Delta_u^T \hat{H} \Delta_u}{2(1 + \norm{\Delta_u}_1)} \\
    &= \exp(-2 \beta d) k_1 \frac{\Delta_u^T H \Delta_u + \Delta_u^T (H - \hat{H}) \Delta_u}{2(1 + \norm{\Delta_u}_1)} \\
    &\overset{(a)}{\geq} \exp(-2 \beta d) \frac{ \Delta_u^T H \Delta_u - \epsilon_2 \norm{\Delta_1}_u^2}{2(1 + \norm{\Delta_u}_1)} \\
    &\overset{(b)}{\geq} \exp(-2 \beta d) \frac{ \norm{\Delta_u}_2^2 - \epsilon_2 \norm{\Delta_u}_1^2}{2(1 + \norm{\Delta_u}_1)}
\end{align}
where we used Lemma~\ref{lem:ineq_hessian_GD-ISO} in $(a)$ and Lemma~\ref{lem:hessian_GD-ISO} in $(b)$. Setting $\epsilon_2 = \frac{1}{32d}$, we note that
\begin{align}
    -\epsilon_2 \norm{\Delta_u}_1^2 \geq - \frac{1}{2} \norm{\Delta_u}_2^2
\end{align}
where we have used Condition~\ref{eq:condition2} that $\norm{\Delta_u}_1 \leq 4 \sqrt{d} \norm{\Delta_u}_2$. Combining this with $\norm{\Delta_u}_2 \leq R$, we obtain the desired result.
\end{proof}

\section{Setup for numerical experiments}
\label{sec:opt_gd_estimators}
In this section, we describe the optimization techniques that we used in the implementation of D-RISE/D-RPLE estimators and the computer infrastructure that was used for running our numerical experiments.

\subsection{Optimization tools}
The D-RISE and D-RPLE estimators involve optimization of a convex objective function. There are a variety of optimization techniques that can be used for this purpose including gradient descent type methods and interior point methods. For our numerical experiments, we used the interior point Ipopt \cite{biegler2009large} optimization package within the JuMP modeling framework for mathematical optimization in Julia. 

% General convex solvers might use matrix inversion in which case the complexity for the reconstruction of a node $i$ might grow as $O(n^{3} m_u)$ (resulting in an algorithmic complexity for the whole problem $O(n^{4}m_u)$). This makes first-order composite gradient descent methods \cite{nesterov2013gradient, agarwal2012fast} preferable which can achieve a complexity of $O(n m_u)$ for single node reconstruction and $O(n^2 m_u)$ for the entire structure learning problem.

% Coordinate gradient descent method in theory also gives a complexity of $O(n m_u)$ for reconstruction of one node.
As an alternative to the Ipopt software for optimization, we also implemented the coordinate descent (CD) method for D-RISE and D-RPLE as a part of our package. The idea is to perform gradient descent only along one coordinate at a time, and then cycle through coordinates until convergence. For example, the optimization problem for each node $u$ in D-RISE is:
\begin{equation}
    (\widehat{\underline{J}}_u,\widehat{H}_u) = \argmin_{(\underline{J}_u,H_i)} \left[\frac{1}{m_u} \sum_{t=1}^{m} \exp \left(-\sigma_u^{t+1} \left(\sum_{i \neq u} J_{ui} \sigma_i^t + H_u\right) \right)\delta_{u,{I^{t+1}}}  + \lambda \Vert \underline{J}_u \Vert_{1} \right],
\end{equation}
In coordinate descent, we optimize over one variable only at a time, e.g. over $J_{uk}$ for some $k$. Once the minimum is found, we cycle through other components, and repeat until we reach the global minimum of the entire convex function. Interestingly, each iteration step is a solution of a one-dimensional optimization problem:
\begin{equation}
    \hat{x} = \argmin_{x} \cosh x - \kappa \sinh x + \mu \vert x \vert, 
\end{equation}
where the constant $\kappa$ and the regularization parameter $\mu$ is given by
\begin{align}
    \kappa = \frac{b}{a}, \quad \mu = 
    \begin{cases}
        0, & x = H_u \\
        \lambda/a, & \text{otherwise}.
    \end{cases}
\end{align}
with
\begin{align}
    a &= \begin{cases}
        \frac{1}{m_u} \sum \limits_{t=1}^{m} \exp \left(-\sigma_u^{t+1} \sum \limits_{i \neq u} J_{ui} \sigma_i^t \right) \delta_{u,{I^{t+1}}}, & x=H_u \\
        \frac{1}{m_u} \sum \limits_{t=1}^{m} \exp \left(-\sigma_u^{t+1} \left(\sum \limits_{i \neq u,k} J_{ui} \sigma_i^t + H_u\right) \right) \delta_{u,{I^{t+1}}}, & x=J_{uk} \forall k
    \end{cases}\\
    b &= \begin{cases}
        \frac{1}{m_u} \sum \limits_{t=1}^{m} \sigma_u^{t+1} \exp \left(-\sigma_u^{t+1} \sum \limits_{i \neq u} J_{ui} \sigma_i^t \right) \delta_{u,{I^{t+1}}}, & x=H_u \\
        \frac{1}{m_u} \sum \limits_{t=1}^{m} \sigma_u^{t+1} \sigma_k^{t} \exp \left(-\sigma_u^{t+1} \left(\sum \limits_{i \neq u,k} J_{ui} \sigma_i^t + H_u\right) \right) \delta_{u,{I^{t+1}}}, & x=J_{uk} \forall k
    \end{cases}
\end{align}
% there is no actual ``descent'' because one can write the expression for the minimum explicitly and directly ``jump'' into it:
and the minimum can be found explicitly (after soft-thresholding):
\begin{equation}
    \hat{x} = 
    \begin{cases}
        \log \left( \frac{\sqrt{1-\kappa^2 + \mu^2} - \mu \text{ sign}(\kappa)}{1 - \kappa}  \right), & \mu < |\kappa| \\
        0, & \text{otherwise.}
    \end{cases}
\end{equation}
Note that there is a slight difference in the optimization problem and solution depending on if the coordinate randomly chosen is a coupling parameter $J_{ui}$ or magnetic field $H_u$. As a result of having access to the above analytical solution, coordinate descent for D-RISE does not require the choice of the step in the gradient descent.
% , because the one-dimensional optimization can be solved explicitly.
This simplification however does not occur for D-RPLE.
% There is a general consensus in the literature that randomly choosing the coordinates at each step is beneficial for the convergence of the algorithm.
At each step of the descent, the updated coordinate is chosen at random.
% The computational complexity of this randomized coordinate descent method for the reconstruction of one node is $O(n m_u)$ \cite{shalev2011stochastic,nesterov2012efficiency,wright2015coordinate}. It achieves a computational complexity of $O(n^2 m_u)$ for the entire structure learning problem.

% Another optimization method that may preferred in terms of computational complexity is stochastic gradient descent (SGD) method.
% In theory, D-RISE/D-RPLE would have a complexity scaling of $O(n \ln n)$ (independent of $m_u$ for a given accuracy) \cite{shalev2011stochastic} for one node.
Another popular optimization method that may be applied to D-RISE and D-RPLE is stochastic gradient descent (SGD) method, although SGD requires tuning hyper-parameters such as learning rate and batch size of samples.

% The lower computational complexity of CD and SGD methods can make them preferable to convex solvers
% % that may rely on matrix inversion
% which may have a complexity of up to $O(n^3 m_u)$ for a single node reconstruction.
We note that other possible choices of optimization techniques include the entropic gradient descent method \cite{beck2003mirror} (see \cite{vuffray2019efficient} for a description of this method used in estimator RISE for the case of i.i.d. samples) and mirror gradient descent method of \cite{ben2001ordered}.

\subsection{Computational resources}
The numerical experiments were run on a cluster. Each node of the cluster has a Intel Xeon Gold 6248 processor with $2 \times 20$ cores with $32$GB RAM. We were able to take advantage of multiple cores on each node since learning local neighborhoods of each node in our algorithm is done independently and hence can be carried out in parallel. Jobs for different graphical model instances were distributed on the cluster. Parallelization is implemented in our package and example scripts for distributing jobs are included for completeness.

\section{Empirical selection of the regularization parameter $c_\lambda$} \label{sec:emp_selection_lambda}
In this section, we describe the procedure that we used for selecting the values of the coefficient of the regularization parameter $c_\lambda$ for the estimators of D-RPLE and D-RISE that we used in  different regimes of Glauber dynamics and on different Ising model topologies.

The regularization parameter $\lambda$ has the following functional form 
\begin{equation}
    \lambda = c_\lambda \sqrt{\frac{\log(n^2/\delta')}{m_u}}
\end{equation}
where $1-\delta'$ is the probability with which we wish to successfully reconstruct the local neighborhood of $u$. Note that $\delta'$ should be related to $\delta$ used in the quantifying the success of the whole graph recovery as $\delta'=\delta/n$. We follow an approach similar to that followed in Supplementary material of \cite{lokhov2018optimal} to determine the optimal values of $c_\lambda$. Our experimental protocol for selecting an optimal value of $c_\lambda$ on a given Ising model topology is as follows. For a fixed typical values of $\alpha$ and $\beta$, we determine $m^\star$ for different values of $c_\lambda$. The optimal value of $c_\lambda$ is then defined as the one for which the lowest $m^\star$ was obtained. To determine consensus values across topologies, this procedure is repeated on different types of lattices and random regular graphs. Further, as we expect different optimal values of $c_\lambda$, the studies are repeated for both the \treg{} and \mreg{}.

The results for the \treg{} and \mreg{} are shown in Figure~\ref{fig:emp_selection_clambda_DRISE_DRPLE_Tregime} and Figure~\ref{fig:emp_selection_clambda_DRISE_DRPLE_Mregime}. We observe that a consensus optimal value of $c_\lambda$ can be selected across lattices or random regular graphs but not across both topologies. In the \treg{}, optimal choice of $c_\lambda$ for D-RPLE is $\approx 0.05$ on lattices and $\approx 0.1$ on random regular graphs. In the \treg{}, optimal choice of $c_\lambda$ for D-RISE is $\approx 0.1$ on lattices and $\approx 0.45$ on random regular graphs. In the \mreg{}, optimal choice of $c_\lambda$ for D-RPLE is $\approx 0.05$ on lattices and $\approx 0.3$ on random regular graphs. In the \mreg{}, optimal choice of $c_\lambda$ for D-RISE is $\approx 0.1$ on lattices and $\approx 0.7$ on random regular graphs. We use these values for producing the scaling results in the Main Text.

\begin{figure*}[!h]
\centering
\subfloat[D-RISE: Value of $\alpha=0.4$ for all the graphs. Value of $\beta=0.8$ on Lattice-F and $\beta=1.5$ on Lattice-SG. Value of $\beta=1.2$ on RR-F and $\beta=1.8$ on RR-SG.]{
\includegraphics[scale=0.3]{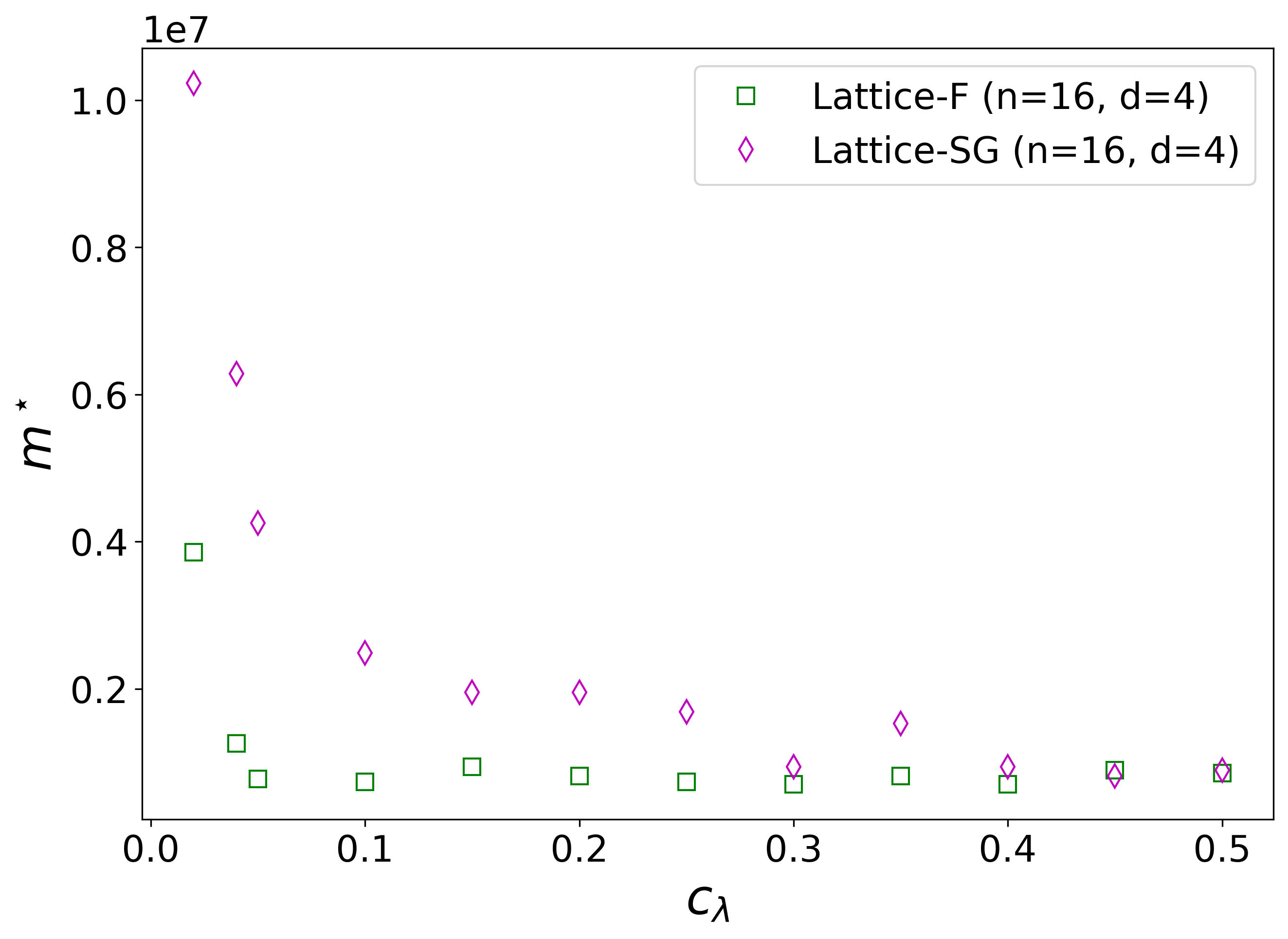} \qquad
\includegraphics[scale=0.3]{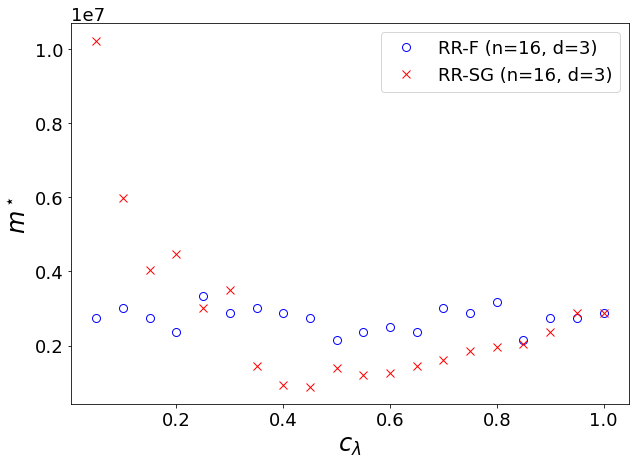}
\label{fig:emp_selection_clambda_DRISE_Tregime}\vspace{-0.03in}
}
\\
\subfloat[D-RPLE: Value of $\alpha=0.4$ for all the graphs. Value of $\beta=0.8$ on Lattice-F and $\beta=1.5$ on Lattice-SG. Value of $\beta=1.2$ on RR-F and $\beta=1.8$ on RR-SG.]{\includegraphics[scale=0.3]{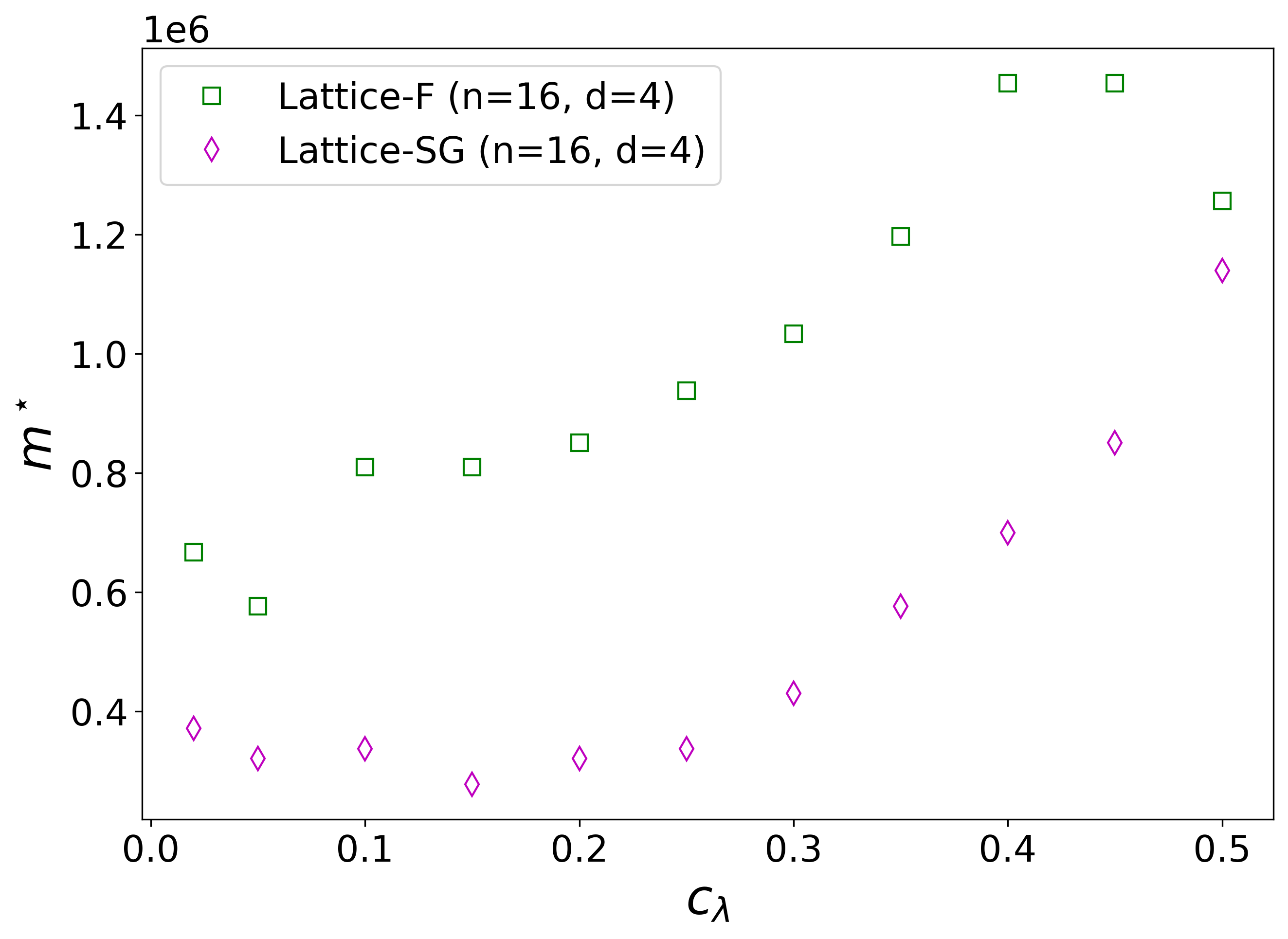} \qquad
\includegraphics[scale=0.3]{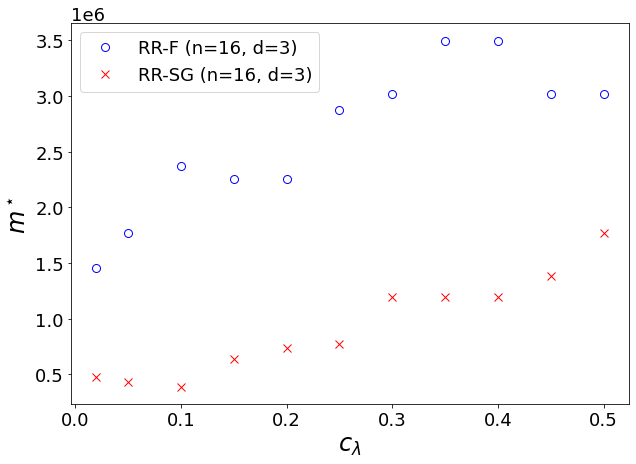}
\label{fig:emp_selection_clambda_DRPLE_Tregime}\vspace{-0.03in}
}
\caption{\textbf{Empirical selection of $c_\lambda$ in \treg{}} We assess the dependence of the number of samples $m^\star$ on the regularization coefficient $c_\lambda$ for the estimators of D-RISE and D-RPLE for successful structure reconstruction of Ising models of size $n=16$. The different Ising model topologies considered are: (Lattice-F) ferromagnetic model on a periodic lattice as in Figure~\ref{fig:sample_complexity_beta_scaling_T_regime}A, (Lattice-SG) spin glass model on a periodic lattice as in  as in Figure~\ref{fig:sample_complexity_beta_scaling_T_regime}C, (RR-F) ferromagnetic model on a random regular graph as in Figure~\ref{fig:sample_complexity_beta_scaling_T_regime}, and (RR-SG) spin glass model on a random regular graph as in Figure~\ref{fig:sample_complexity_beta_scaling_T_regime}D.}
\label{fig:emp_selection_clambda_DRISE_DRPLE_Tregime}
\end{figure*}

\begin{figure*}[!h]
\centering
\subfloat[D-RISE: Value of $\alpha=0.4$ for all the graphs. Value of $\beta=1.5$ on lattices and random regular graphs.]{
\includegraphics[scale=0.3]{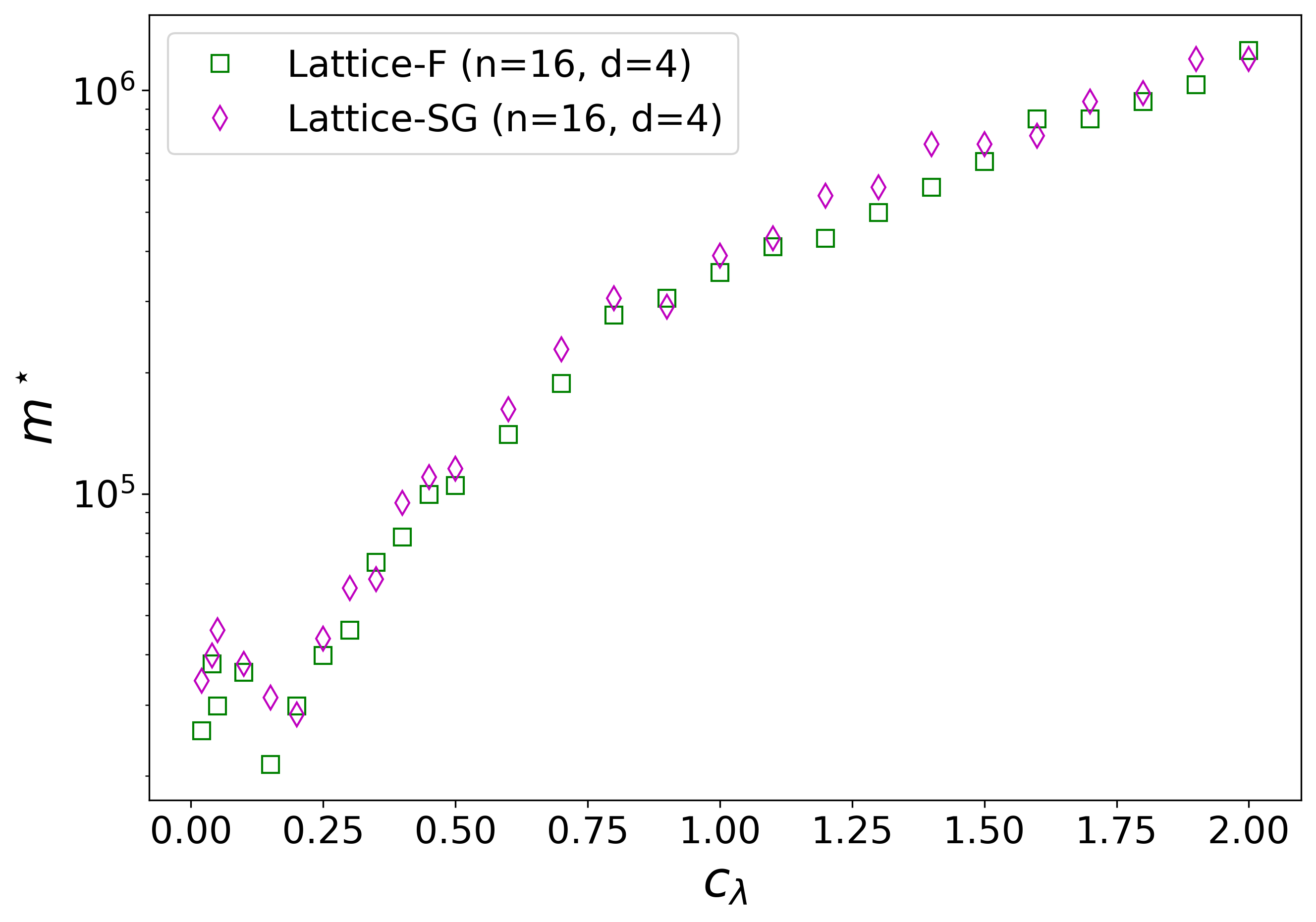} \qquad
\includegraphics[scale=0.3]{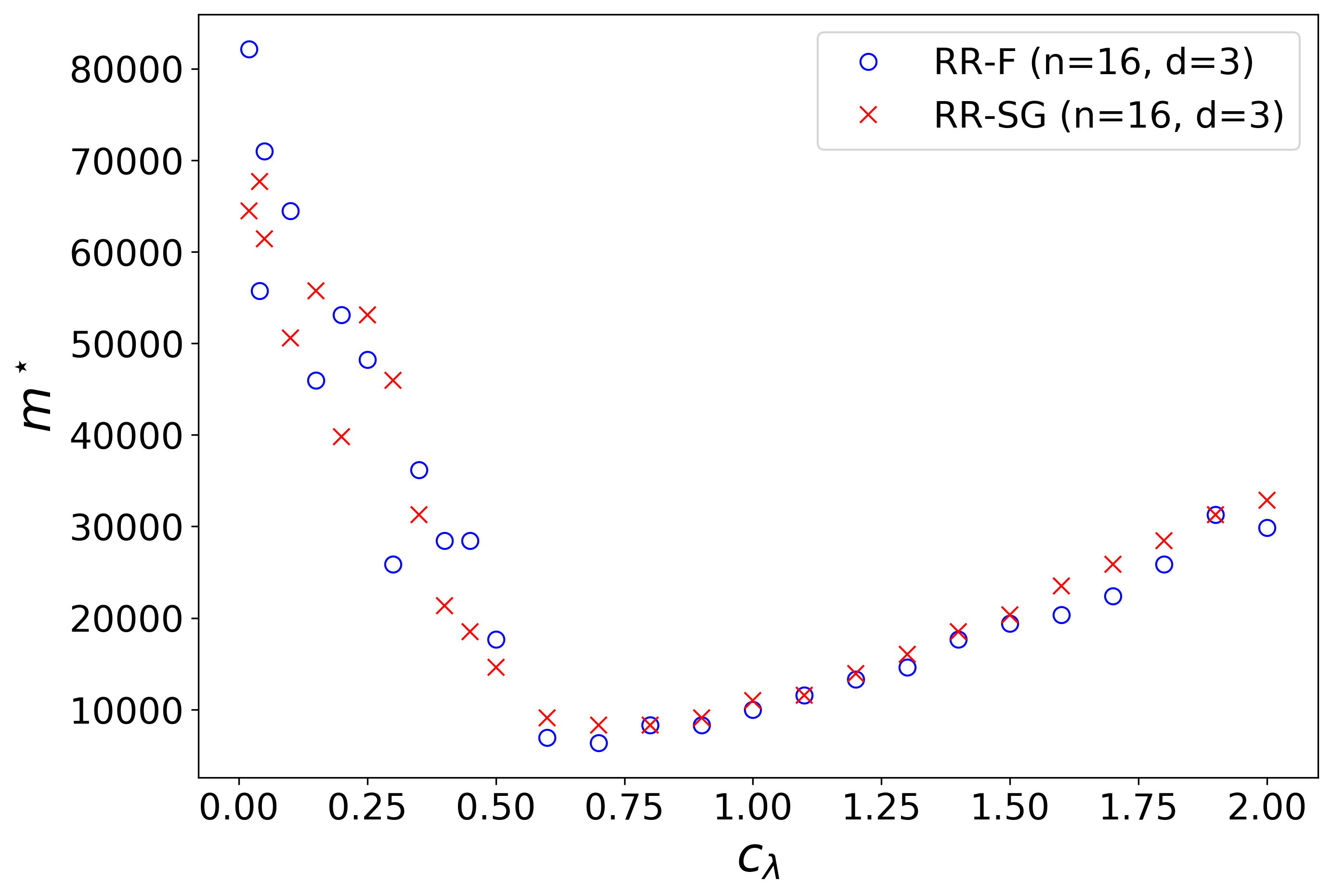}
\label{fig:emp_selection_clambda_DRISE_Mregime}\vspace{-0.03in}
}
\\
\subfloat[D-RPLE: Value of $\alpha=0.4$ for all the graphs. Value of $\beta=1.5$ on lattices and $\beta=2.6$ on random regular graphs.]{\includegraphics[scale=0.3]{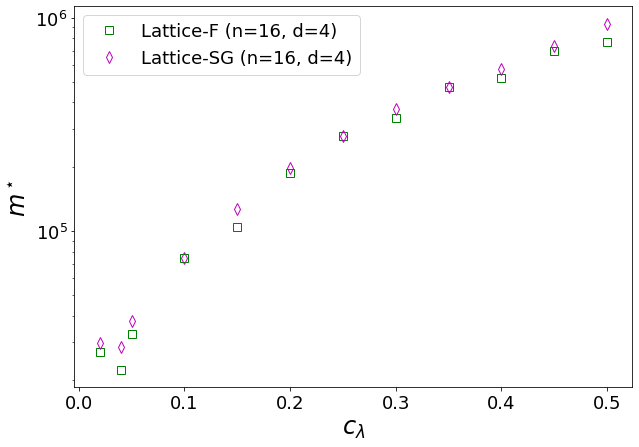} \qquad
\includegraphics[scale=0.3]{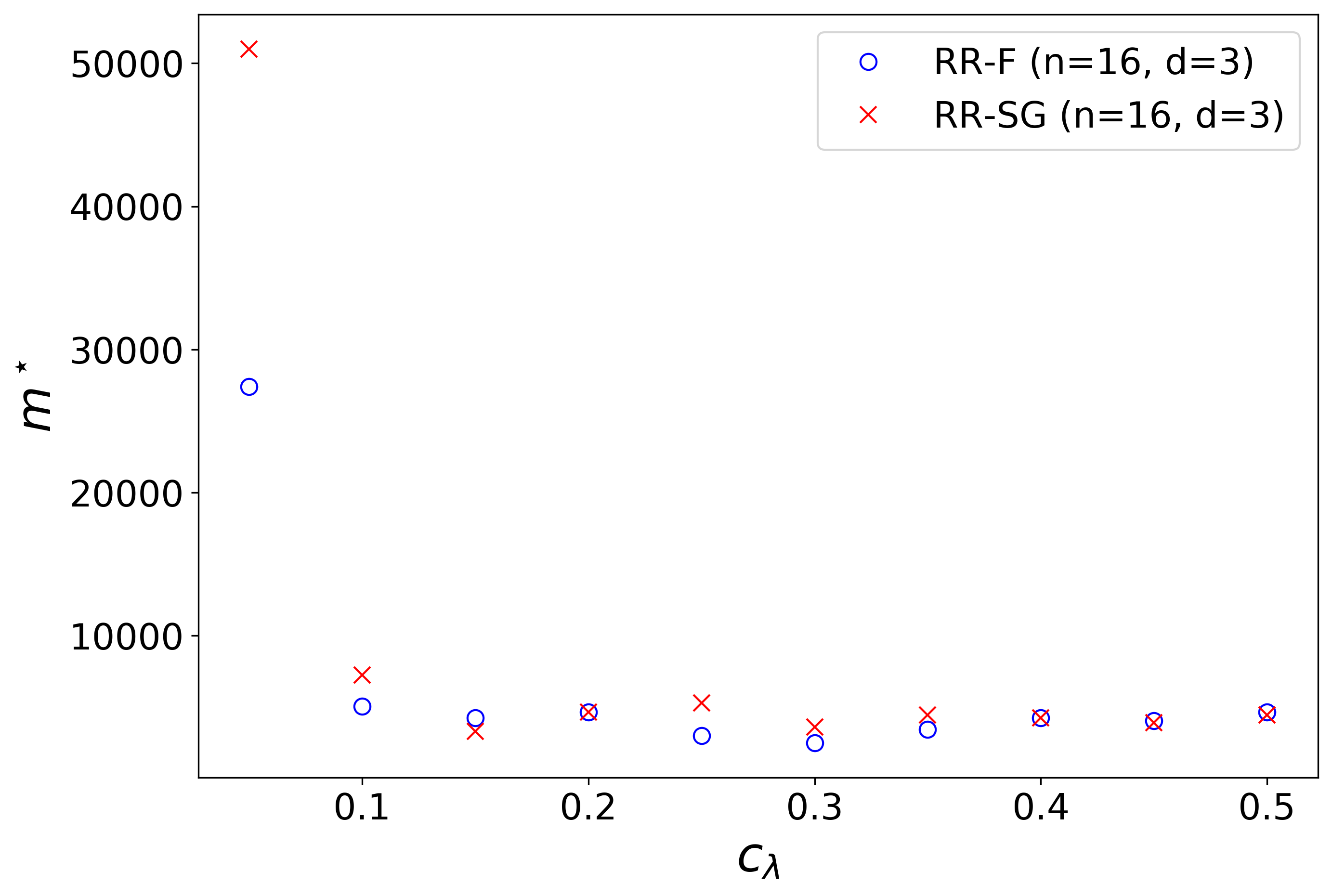}
\label{fig:emp_selection_clambda_DRPLE_Mregime}\vspace{-0.03in}
}
\caption{\textbf{Empirical selection of $c_\lambda$ in \mreg{}} We assess the dependence of the number of samples $m^\star$ on the regularization coefficient $c_\lambda$ for the estimators of D-RISE and D-RPLE for successful structure reconstruction of Ising models of size $n=16$. The different Ising model topologies considered are: (Lattice-F) ferromagnetic model on a periodic lattice as in Figure~\ref{fig:sample_complexity_beta_scaling_M_regime}A, (Lattice-SG) spin glass model on a periodic lattice as in  as in Figure~\ref{fig:sample_complexity_beta_scaling_M_regime}C, (RR-F) ferromagnetic model on a random regular graph as in Figure~\ref{fig:sample_complexity_beta_scaling_M_regime}, and (RR-SG) spin glass model on a random regular graph as in Figure~\ref{fig:sample_complexity_beta_scaling_M_regime}D.}
\label{fig:emp_selection_clambda_DRISE_DRPLE_Mregime}
\end{figure*}

\section{Learning Random Regular Graphs in the \mreg{}} \label{sec:mreg_learning_rr_graphs}
Here we discuss how structure learning in the \mreg{} can result in a sample complexity independent of $\beta = \max_{(i,j) \in E} |J_{ij}|$. The central object of our study is the conditional probability distribution in Eq.~\eqref{eq:glauber_dynamics_conditional_prob_update}. For simplicity we consider the situation where the magnetic field is zero and we can rewrite this conditional distribution as
\begin{equation}
    p(\sigma_i^{t+1}|\underline{\sigma}^t) = \frac{1 + \sigma_i^{t+1} \tanh \beta \left(\sum_{j \neq i} x_{j} \sigma_j^t \right)}{2},
    \label{eq:gd_conditional_prob}
\end{equation}
where $x_{j} = J_{ij} / \beta$ if $j \in \partial i$ and $x_{j}=0$ otherwise. In order to analyse the learning problem when $\beta$ is large, we look at the distribution in Eq.~\eqref{eq:gd_conditional_prob} in the limit where $\beta$ goes to infinity and we obtain the following expression,
\begin{equation}
    \lim_{\beta \rightarrow \infty} p(\sigma_i^{t+1}|\underline{\sigma}^t) = \frac{1 + \sigma_i^{t+1} \sign \left(\sum_{j \neq i} x_{j} \sigma_j^t\right)}{2}.
    \label{eq:gd_conditional_prob_limit}
\end{equation}
The form of the conditional distribution in Eq.~\eqref{eq:gd_conditional_prob_limit} implies that the update of $\sigma_i^{t+1}$ is with probability one equal to $\sign \left(\sum_{j \neq i} x_{j} \sigma_j^t\right)$ whenever $\sum_{j \neq i} x_{j} \sigma_j^t \neq 0$, otherwise $\sigma_i^{t+1}$ is updated to $-1$ or $1$ with equal probabilities. We see in the limit of large $\beta$ that the structure learning problem transforms essentially into the so-called noiseless one-bit compressive sensing problem \cite{Boufounos2008onebit}. In noiseless one-bit compressive sensing, we receive $t\in [1,m]$ observations of signs $y\in \mathbb{R}^m$ of the components of an unknown $d$-sparse vector $\underline{x} \in \mathbb{R}^{n}$ transformed by a known sensing matrix $A \in \mathbb{R}^{m\times n}$ i.e. $\underline{y}= \sign(A \underline{x})$. The objective in one-bit compressive sensing is to recover the support of $\underline{x}$ just like in our structure learning problem. In order to recover the support of $\underline{x}$, the number of observations (and the rank of $A$) has to be at least $m=\Omega(d^2\ln n / \ln d)$, see \cite{Acharya2017improvedbounds}. 
The difference between compressive sensing and our structure learning problem lies in the choice of the sensing matrix. While in compressive sensing, the design of the sensing matrix is left to operators, in our case it is imposed by the distribution $p(\underline{\sigma}^t)$ as the rows of our sensing matrix correspond to i.i.d. samples of spin configurations $A_{tj} = \sigma_j^t$.

In the \treg{} we expect the Glauber dynamics to mix rapidly and generate samples from the equilibrium distribution of the graphical model. In the limit $\beta \rightarrow \infty$, the equilibrium distribution is supported only by spin configurations whose energies are minimal. The number of such configurations is typically constant with respect to the system size. For example, ferromagnetic models have no more than two states of minimal energy regardless of the number of spins. Therefore, our sensing matrix only contains a fixed number of independent rows and the one-bit compressive sensing problem cannot be solved perfectly as $\text{rank}(A)=O(1)$. It implies that for large $\beta$, structure learning with a fixed number of i.i.d. samples from the equilibrium distributions cannot be accomplished.

In the \mreg{}, however, we carry out one step of Glauber dynamics and the distribution $p(\underline{\sigma}^0)$ is uniform. Our sensing matrix turns out to be generated from a Bernoulli ensemble with entries equal to $-1$ and $+1$ at random. Such matrices are known to have a rank $m$ with high probability \cite{kahn1995probability}. This renders possible the inversion of the one-bit compressed sensing problem and consequently the possibility to solve the structure learning problem with a number of observations independent of $\beta$ for $\beta$ large. However, matrices with random signs do not necessarily lead to an invertible one-bit compressive sensing problem and the invertibility of the problem depends on the hidden vector $\underline{x}$. For instance, consider the simple four dimensional vectors with different support $u =(1,1,1,\epsilon)$ where $|\epsilon|\in (0,1)$ and $v =(1,1,1,0)$. It is easy to see that for any configurations of $\sigma_i \in \{-1,1\}$ we have that $\sign\left(\sum_i \sigma_i u_i\right) = \sign\left(\sum_i \sigma_i v_i\right)$ for the quantity $\sigma_4 u_4=\epsilon \sigma_4$ has always a smaller magnitude than $|\sigma_1 + \sigma_2 + \sigma_3|\geq 1$. Therefore, structure learning with a fixed number of samples for large $\beta$ cannot be done for neighborhood of size four and couplings equal to $(\beta, \beta, \beta, \alpha)$, which explains the exponential scaling seen for the lattice instance in Fig.\ref{fig:sample_complexity_beta_scaling_M_regime}.
We have a different story when we consider the three dimensional vector $w= (1,1,\epsilon)$ with $|\epsilon|\in (0,1)$. If we take a sensing matrix equal to  $A=\begin{pmatrix}
1 & -1 & 1\\
-1 & 1 & 1\\
1 & 1 & -1
\end{pmatrix}$ when $\epsilon>0$ or $A=\begin{pmatrix}
1 & -1 & -1\\
-1 & 1 & -1\\
1 & 1 & 1
\end{pmatrix}$ when $\epsilon<0$, we see that the only three dimensional vectors $x$ that satisfy the equation $\sign(A x) = \sign(A w)$ are those for which $x_1>0$, $x_2>0$ and $x_3 \cdot \sign(\epsilon)>0$. This means that the (signed) support of $w$ is recoverable with a sensing matrix from the Bernoulli ensemble. It implies that structure learning with a fixed number of samples for large and even infinite $\beta$ is possible for neighborhood of size three and couplings equal to $(\beta, \beta, \alpha)$ which explains the flat curves seen for the three-regular graphs in Fig.\ref{fig:sample_complexity_beta_scaling_M_regime}.
Based on these considerations, we can extrapolate this behavior in a straightforward manner to graphs with odd and even degree $d$ having $d-1$ identical couplings.

\section{Learning dynamics from neural spike trains} \label{sec:neural_dataset_prep}

In this section, we describe how the relevant dataset from \cite{prentice2016error} is processed to be used for learning Ising dynamics with D-RISE/D-RPLE in this study. Then, we discuss the statistics of learned model parameters, and compare results obtained with D-RISE and D-RPLE. Finally, we provide details on computation of data moments (such as correlations) from the learned Ising model to assess performance against the dataset.

\subsection{Preparation of neural dataset}
The dataset contains spike trains from $152$ salamander retinal ganglion cells in response to a non-repeated natural movie stimulus, of which we select spike trains for $n=42$ neurons over $24$s for our application. To obtain a time series of spin configurations over the neurons, we bin the spike trains into 20 ms time bins. The spin $\sigma_i^{(t)}$ of a neuron $i$ in time bin $t$ is set to $1$ if it fires at least once in this time bin and $-1$ otherwise. We thus produce a sequence of $1.2 \times 10^5$ spin configurations (also called binary spike words). A segment of the sequence is shown as a spike raster in Figure~\ref{fig:spike_train_neural_data}. However, this can't be used directly for learning an Ising model using D-RISE or D-RPLE.

% How did we process this dataset given the assumptions for our learning algorithm?
Our learning algorithms require information about the identity of nodes being updated which isn't directly available from the data recorded. The identity of the node being updated at time $t$ is however known when there is some $l \in [n]$ for which the spin of node/neuron $l$ flips in sign i.e., $\sigma_l^{(t+1)} = - \sigma_l^{(t)}$. There maybe more than one such node at time $t$. However, in Glauber dynamics, only one node is selected for update at time $t$. We thus only consider samples of spin configurations $\{(\underline{\sigma}^{t},\underline{\sigma}^{t+1},I^{t+1})\}$ for time bins $t$ where $I^{(t+1)}$ can be directly inferred by searching for nodes which flipped its spin and there is only one such node. The resulting set of samples are time ordered but samples from consecutive time bins may not be chosen. Thus, it is convenient to represent the samples as $\{{(\underline{\sigma}^{0}}^{(k)},{\underline{\sigma}^{1}}^{(k)},{I^{1}}^{(k)})\}_{k \in [m]}$ where $k$ is now used to index the samples and $t_k$ corresponds to the time $t$ of the $k$-th time bin chosen. After this processing, we end up with a set of $3.2 \times 10^4$ samples corresponding to the \mreg{} with an unknown distribution over the initial spin configurations.

Let us briefly comment on some challenges associated with the dataset preparation procedure that we used, and point out to directions for overcoming these limitations. As we require the identity of the updated nodes for D-RISE/D-RPLE, this requires us to only select samples where a node is observed to be flipped. Otherwise, we wouldn't know a node has been updated or not. The model fit would improve if the node identities weren't required and thus samples were no flips are observed could also be used for estimation. This however requires deriving estimators beyond D-RISE/D-RPLE that is outside the scope of this work, but would be an interesting direction for future studies. The bin size was chosen to be $20$ms as this is the expected time scale of persistence of modes in the neural spike trains \cite{prentice2016error} and was reused. Reducing the bin size decreases the probability of observing a spike or a node activation in that time bin leading to more samples with no updates. Increasing the bin size increases the probability of observing a spike or node activation in the time bin but there may also be more than one spike in the time interval for a given node which is counted only once. Choosing the bin size appropriately ensures that there is enough samples and that the Poisson rate of observation matches closely with the Poisson rate of node updates. We would expect the model fit to overfit if the bin size is too high and be more computationally expensive in the case where bin size is small. 

Finally, it would be interesting to go beyond the assumption of Glauber dynamics for constructing an effective model of the data, making use of all available samples that can be costly to get. Possible extensions of our framework include: (i) considering samples with only spin history and thus not requiring identities of updated nodes, (ii) accounting for multiple nodes being updated at the same time e.g., akin to block Gibbs sampling, and (iii) considering more general Markov chain dynamics. We leave these extensions to future work.

\subsection{Statistics of learned model parameters}

Ising model parameters learned using D-RISE on the set of samples prepared as discussed in the previous section are shown in Figure~\ref{fig:learned_model_neural_data}. If the task is to learn an effective Ising model for explaining the dynamics, these parameter estimates can be used directly. However, if the task is that of a model selection, i.e. learning the network structure of the model, then the following procedure can be used. In the histogram over $\hat{J}_{ij}$, we observe gaps separating a group of estimated couplings in the vicinity of zero from those with higher intensities in absolute value. We choose the threshold $\delta$ to correspond to the first symmetrical gap around zero ($\delta_{-}=-\delta$ and $\delta_{+}=\delta$) that separates these groups. All the coupling parameters $|\hat{J}_{ij}| < \delta$ obtained after structure learning and shown in red in the histogram of Figure~\ref{fig:learned_model_neural_data} would be set to zero. Observing these gaps and ability to choose a clear threshold indicates that the number of samples is sufficient for structure learning.

The resulting $\hat{J}_{ij}$ estimates indicate a low value of $\beta$ and thus a high effective temperature of the model. Most of the couplings are weak with few strong couplings. This is in agreement with other such studies. The effective Ising model that we learn here from Glauber dynamics can be used for predicting higher order moments of the data and understanding the behavior of this population of neurons. The difference in correlation matrices computed from data assuming the samples are i.i.d. and that respecting time (presented in the Main Text) explain the difference in parameter estimates $\hat{J}_{ij}$ obtained through RISE and D-RISE as visualized in Figure~\ref{fig:neural_comparison_drise_rise_estimate}. This once again highlights the importance of respecting dynamics and hence time correlations in the data when learning an effective Ising model.

\begin{figure}[h!]
    \centering
    \includegraphics[scale=0.45]{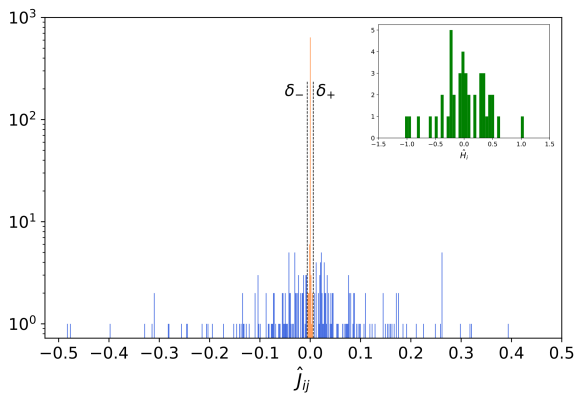}
	\caption{Ising model parameters learned from spike trains over $42$ neurons using $3.2 \times 10^4$ samples. Significant couplings are in blue and thresholded couplings are in red. Reconstructed fields are in green in a separated histogram.}
	\label{fig:learned_model_neural_data}
	\vspace{-0.1in}
\end{figure}
\begin{figure}[h!]
    \centering
    \includegraphics[scale=0.35]{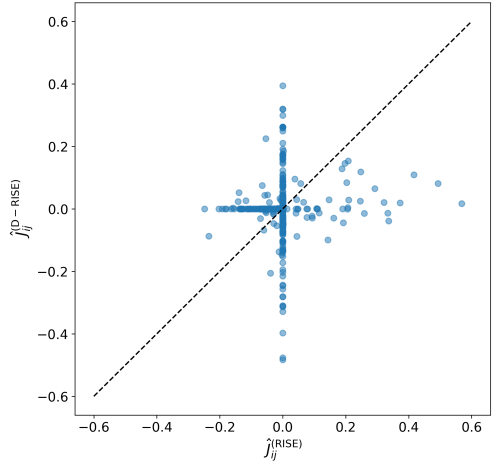}
	\caption{Comparison of Ising model parameter estimates $\hat{J}_{ij}$ obtained through RISE and D-RISE}
	\label{fig:neural_comparison_drise_rise_estimate}
	\vspace{-0.2in}
\end{figure}

\subsection{Computation of correlations}
To assess the performance of the learned Ising model, it is common to compare predicted correlations from the model against the data. Here, we obtain predicted correlations by computing correlations on a dataset simulated using the learned Ising model under the M-regime and running Glauber dynamics. We now explain how this simulated dataset is constructed.

We note that the covariance of interest (including the corresponding probabilities) is given by
\begin{equation}
    \mathrm{Cov}(\sigma_i^0, \sigma_j^1) = \expectation_{p(\sigma_j^1|\sigma_i^0,I^1)p(\sigma_i^0)p(I^1)}[\sigma_i^0 \sigma_j^1] - \expectation_{p(\sigma_i^0)}[\sigma_i^0] \expectation_{p(\sigma_j^1|\sigma_i^0,I^1)}[\sigma_j^1]
\end{equation}
We note from the above expression that the resulting covariance not only depends on the Ising model dynamics which explains $p(\sigma_j^1|\sigma_i^0,I^1)$ but also the initial distribution over spin configurations $p(\sigma_i^0)$ and the probability of a node being chosen for update $p(I^1)$. In order to ensure we can compare the predicted covariance from the Ising model against that from data, we use the same $p(\sigma_i^0)$ and $p(I^1)$ as in the experimental dataset. Additionally, we note that the spin configurations in the experimental dataset only contains flipped spin configurations which also needs to be respsected.

Thus, we construct the simulated dataset by running Glauber dynamics on the M-regime using the same $p(\sigma^0)$ and $p(I^1)$ as from the experimental dataset, and only including those $\sigma^1$ where there is a flip in the sign of node given by $I^1$. Moments are then computed on this simulated dataset using the usual estimators for population means, covariances, correlations, etc., from the samples.

\end{document}